\definecolor{darkgreen}{RGB}{0,128,0}
\newcommand{\etal}{\textit{et al.\ }} 
\newcommand{\ie} {\textit{i.e.}\xspace } 
\newcommand{\gc}[1] {{\sc #1}} 
\newcommand{\opSty}[1] {\texttt{#1}}
\renewcommand{\ArgSty}[1]{\textrm{#1}}
\newcommand{\best}[1]{$^\dagger${#1}}
\newcommand{\procCall}[1]{\textit{\texttt{#1}} }
\newcommand{\cf}{W}
\newcommand{\smallcf}{\omega}
\newcommand{\CiteInLine}[1]{\citeauthor{#1} \citeyear{#1}}
\newcommand{\X}{\mathcal{X}}
\newcommand{\C}{\mathcal{W}}
\newcommand{\F}{\mathcal{F}}
\renewcommand{\L}{D}
\newcommand{\Q}{Q}
\newcommand{\ignore}[1]{}
\newtheorem{theorem}{Theorem}
\newtheorem{corollary}{Corollary}
\newtheorem{definition}{Definition}
\newtheorem{lemma}{Lemma}
\newtheorem{example}{Example}
\newtheorem{proposition}{Proposition}
\DeclareTextFontCommand{\textbfit}{%
  \fontseries\bfdefault 
  \itshape
}
\newcommand\myatop[2]{\genfrac{}{}{0pt}{}{#1}{#2}}
\journal{Artificial Intelligence}
\begin{document}

\begin{frontmatter}

\title{Tractability-preserving Transformations of Global Cost Functions (extended version) \tnoteref{thanks1}}
\tnotetext[thanks1]{This paper is an extended version of~\cite{Dec16}, DOI 	10.1016/j.artint.2016.06.005.}
 
\author[INRA]{David Allouche}
\author[LIRMM]{Christian Bessiere}
\author[GREYC]{Patrice Boizumault}
\author[INRA]{Simon de Givry}
\author[PG]{Patricia Gutierrez}
\author[CUHK]{Jimmy H.M.\ Lee\corref{cor1}}
\author[CUHK]{Ka Lun Leung}
\author[GREYC]{Samir Loudni}
\author[GREYC]{Jean-Philippe M\'etivier}
\author[INRA]{Thomas Schiex\corref{cor1}}
\author[CUHK]{Yi Wu}

\address[INRA]{MIAT, UR-875, INRA, F-31320 Castanet Tolosan, France}
\address[LIRMM]{CNRS, University of Montpellier, France}
\address[PG]{IIIA-CSIC, Universitat Autonoma de Barcelona, 08193 Bellaterra, Spain}
\address[CUHK]{Department of Computer Science and Engineering, The Chinese University of Hong Kong, Shatin, N.T., Hong Kong}
\address[GREYC]{GREYC, Universit\'e de Caen Basse-Normandie,6 Boulevard du Mar\'echal Juin, 14032 Caen cedex 5, France}
\cortext[cor1]{Corresponding author}


\begin{abstract}
Graphical model processing is a central problem in artificial
intelligence. The optimization of the combined cost of a network of local cost functions federates a variety of famous problems including CSP, SAT and Max-SAT but also optimization in stochastic variants such as Markov Random Fields and Bayesian networks. Exact solving methods for these problems typically include branch and bound and local inference-based bounds.
In this paper we are interested in understanding when and how dynamic programming based optimization can be used to efficiently enforce soft local consistencies on Global Cost Functions, defined as parameterized families of cost functions of unbounded arity. 
Enforcing local consistencies in cost function networks is performed by applying so-called Equivalence Preserving Transformations (EPTs) to the cost functions. These EPTs may transform global cost functions and make them intractable to optimize.
We identify as \emph{tractable projection-safe} those global cost functions whose optimization is and remains tractable after applying {the  EPTs used for enforcing arc consistency}. We also provide new classes of cost functions that are tractable projection-safe thanks to dynamic programming.
We show that dynamic programming can either be directly used inside filtering algorithms, defining polynomially {DAG-filterable} cost functions, or emulated by {arc consistency filtering on a Berge-acyclic network of bounded-arity cost functions}, defining Berge-acyclic network-decomposable cost functions. We give  examples of such cost functions and we provide a systematic way to define decompositions from existing decomposable global constraints.
%
These two approaches to {enforcing consistency in global cost functions} are then embedded in  a  solver for extensive experiments that confirm the feasibility and efficiency of our proposal.
\end{abstract}
\end{frontmatter}


\section{Introduction}

Cost Function Networks (CFNs) offer a simple and
general framework for modeling and solving over-constrained and
optimization problems.  They capture a variety of problems that range from CSP, SAT and Max-SAT to maximization of likelihood in stochastic variants such as Markov Random Fields or Bayesian networks. They have been applied to a variety of real problems, in resource allocation, bioinformatics or machine learning among others~\cite{CELAR99,MST2008,charpillet,ermon,schiex15,CPD-AIJ,cpaior16}.

Besides being equipped with an efficient branch and bound procedure augmented with powerful local consistency techniques, a practical CFN solver should have a good library of global cost functions to model the often complex scenarios in real-life applications.

Enforcing  local consistencies requires to
apply Equivalence Preserving Transformations (EPTs) such as cost
projection and extension~\cite{MT2004}. Most local consistencies require
to compute minima of the cost function to determine the amount of cost to project/extend. 
By applying these operations, local consistencies may reduce domains and, more importantly, tighten
a global lower bound on the criteria to optimize. This is crucial for branch and bound efficiency.
Global cost functions have unbounded arity, but may have a specific semantics that makes available
dedicated polynomial-time algorithms for minimization. However, when
local consistencies apply EPTs, they modify the cost function and may
break the properties that makes it polynomial-time minimizable. 
We say that a cost function is {\em tractable\/} if it can be minimized
  in polynomial time. 
The notion of {\em tractable projection-safety\/} captures precisely those
functions that remain tractable even after EPTs.

In this paper, we prove that any tractable global cost function remains
tractable after EPTs to/from the zero-arity cost function
($\cf_\varnothing$), and cannot remain tractable if arbitrary EPTs to/from
$r$-ary cost functions for $r \geq 2$ are allowed.  When $r = 1$, we show 
that the answer is indefinite. We describe a simple tractable global cost
function and show how it becomes intractable after projections/extensions 
to/from unary cost functions. We also show that flow-based 
projection-safe cost functions~\cite{LL2009} are positive
examples of tractable projection-safe cost functions. 

For $r=1$, we  introduce {{\em polynomially DAG-filterable}} global 
cost functions, which can be transformed into a filtering Directed Acyclic Graph with a 
polynomial number of simpler cost functions for (minimum) cost calculation.
Computing minima of such cost functions, using a polynomial time 
dynamic programming algorithm, is tractable and
remains tractable after projections/extensions. Thus, polynomially
{DAG-filterable} cost functions are tractable projection-safe.  Adding to
the existing repertoire of global cost functions,  cost function variants of existing global constraints such as
\gc{Among}, \gc{Regular}, \gc{Grammar}, and \gc{Max}/\gc{Min}, are
proved to be polynomially {DAG-filterable}.

To avoid the need to implement dedicated dynamic programming
algorithms, we also consider the possibility of directly using
decompositions of global cost functions into polynomial size networks of cost functions with bounded arities, usually ternary cost functions. 
We show how such {\em network-decompositions} 
can be derived from known global constraint decompositions and how
Berge-acyclicity allows soft local consistencies to emulate dynamic
programming in this case. We prove that Berge-acyclic network-decompositions
can also be used to directly build polynomial filtering DAGs.

To demonstrate the feasibility of these approaches, we implement and
embed various global cost functions using filtering DAG and network-decompositions 
in~\texttt{\small toulbar2}, an open source cost function networks solver.
We conduct experiments using different  benchmarks to
evaluate and to compare the performance of the DAG-based and network-based decomposition approaches. 

{The rest of the paper is organized as follows. 
Section \ref{background} contains the necessary background 
to understand our contributions. 
Section \ref{tractable-PS} analyses the tractability of enforcing 
local consistencies on global cost functions and characterizes 
the conditions for preserving tractability after applying EPTs. 
In Section \ref{sec:proj-constr} we define DAG-filtering and in Section \ref{example-DAG} we 
give an example of a polynomial {DAG-filterable} global cost function. 
Sections \ref{s:dec} and \ref{s:loc} present network-decomposability and the conditions 
for preserving the level of local consistency. 
Section \ref{comparison} shows the relation between network-decompositions and DAG-filtering. 
Section \ref{sec-expe} provides an experimental analysis of the two approaches 
on several classes of problems. 
Section \ref{conclusion} concludes the paper. 
}


\section{Background}
\label{background}

We give preliminaries on cost function networks 
and global cost functions.

\subsection{Cost Function Networks}
\label{bg:cfn}

A cost function network (CFN)
is {a special case of the
valued constraint satisfaction problem}~\cite{THG1995} with a specific cost
structure $([0,\ldots,\top], \oplus, \leq)$. We give the formal
definitions of the cost structure and CFN as follows.

\begin{definition}[{Cost Structure \cite{THG1995}}] The \emph{cost structure}
  $([0,\ldots,\top], \oplus, \leq)$ is a tuple defined as:
  \begin{itemize}
  \item{} $[0,\ldots,\top]$ is the interval of integers from $0$ to
    $\top$ ordered by the standard ordering $\leq$, where $\top$ is either a positive integer or $+\infty$.
  \item{} $\oplus$ is the addition operation defined as $a \oplus b =
    min(\top,a+b)$. We also define the subtraction $\ominus$ operator for any
    $a$ and $b$, where $a \geq b$, as:
    \[
    a \ominus b = \left\{ \begin{array}{ll}
        a-b, & \mbox{ if $a \neq \top$;} \\
        \top, & \mbox{ otherwise}
      \end{array} \right. 
    \]
  \end{itemize}
\end{definition}

Note that more general additive cost structures have also been used. 
Specifically, VAC and OSAC~\cite{Cooper2010} local
consistencies are defined using a structure using non-negative
rational instead of non-negative integer numbers. For ease of understanding, 
our discussion assumes integral costs. However, it can easily be generalized to rational costs.

\begin{definition}[{Cost Function Network \cite{Schiex00b}}] 
A \emph{Cost Function Network} (CFN) is a tuple $(\X,\C,\top)$,
  where: \vspace{-0.1cm}
  \begin{itemize}
  \item{} $\X$ is an ordered set of discrete domain variables $\{x_{1}, x_{2}, \ldots, x_{n}\}$. The domain of  $x_{i} \in \X$ being denoted as $D(x_i)$;
  \item{} $\C$ is a set of cost functions $\cf_S$ each with a scope
    $S = \{x_{s_1}, \ldots, x_{s_r}\} \subseteq \X$ that maps tuples $\ell \in \L^S$, where $\L^S = D(x_{s_1}) \times \cdots \times   D(x_{s_r})$, to $[0,\ldots,\top]$. 
\end{itemize}
\end{definition}

When the context is clear, we abuse notation by denoting an assignment 
of a set of variables $S \subseteq \X$ as a tuple $\ell =
(v_{s_1}, \ldots, v_{s_r}) \in \L^S$.  
The notation $\ell[x_{s_i}]$ denotes the value $v_{s_i}$ assigned 
to $x_{s_i}$ in $\ell$, and $\ell[S']$ denotes the tuple formed by 
projecting $\ell$ onto $S'\subseteq S$. 
%
%
Without loss of generality, we assume $\C = \{\cf_\varnothing\} \cup
\{\cf_i \mid x_i \in \X\} \cup \C^+$.  $\cf_\varnothing$ is a constant zero-arity cost function.  $\cf_i$ is a unary cost function associated with each $x_i \in \X$.  $\C^+$ is a set of cost functions $\cf_S$ with scope $S$ and $|S|\geq 2$.  If $\cf_\varnothing$ and $\{\cf_i\}$ are not defined, we assume $\cf_i(v) = 0$ for all $v
\in D(x_i)$ and $\cf_\varnothing = 0$. To simplify notation, we also
denote by $\cf_{s_1,s_2,\ldots,s_r}$ the cost function on variables $\{x_{s_1}, x_{s_2},\ldots, x_{s_r}\}$ when the context is clear.

\begin{definition}
  Given a CFN $(\X, \C, \top)$, the {\em cost\/} of a tuple $\ell
  \in \L^\X$ is defined as $cost(\ell) = \bigoplus_{\cf_S \in \C}
  \cf_S(\ell[S])$.  A tuple $\ell \in \L^\X$ is {\em feasible\/} if  $cost(\ell) < \top$, and it is an  {\em optimal solution\/} of the CFN if  $cost(\ell)$ is minimum among all tuples in $\L^\X$. 
\end{definition}

We  observe that a classical \emph{Constraint Network} is merely a CFN where all cost functions $\cf_S\in\C$ are such that $\forall \ell\in \L^S, \cf_S(\ell)\in\{0,\top\}$. 
The problem of the existence of a solution in a constraint network, called \emph{Constraint Satisfaction Problem (CSP)}, is NP-complete. 
Finding an  optimal solution to a CFN is thus above NP. 
Restrictions to Boolean variables and binary constraints are known to be APX-hard~\cite{papadimitriou1991}. In the terminology of stochastic graphical models, this problem is also equivalent to the Maximum A Posteriori (MAP/MRF) problem or the Maximum Probability Explanation (MPE) in Bayesian networks~\cite{cpaior16}. CFNs can be solved exactly with depth-first branch-and-bound search using $\cf_{\varnothing}$ as a lower bound. Search efficiency is enhanced by maintaining local consistencies that increase the lower
bound by redistributing costs among $\cf_S$, pushing costs into $\cf_{\varnothing}$ and $\cf_i$, and pruning values while preserving the equivalence of the problem (\ie, the cost of each tuple $\ell \in \L^\X$ is unchanged).

\subsection{Soft local consistencies and EPTs}

Different consistency notions have been defined. Examples include NC*~\cite{JT2004_AC}, (G)AC*~\cite{Schiex00b,JT2004_AC,MT2004,LL2009,LL2012asa},  FD(G)AC*~\cite{JT2004_AC,Larrosa2003,LL2009,LL2012asa}, (weak)  ED(G)AC*~\cite{GHZL2005,LL2010,LL2012asa}, VAC and OSAC~\cite{Cooper2010}. Enforcing such local consistencies requires  applying equivalence preserving transformations (EPTs) that shift costs between different scopes. 
The main EPT is defined below and described as Algorithm~\ref{l-ept}. This is a compact version of the projection 
and extension defined in \cite{MC2005}.

\begin{definition}[{EPTs \cite{MC2005}}]\label{r-ept}
  Given two cost functions $\cf_{S_1}$ and $\cf_{S_2}$, $S_2\subset
  S_1$, the EPT \Proj$(S_1,S_2,\ell,\alpha)$ shifts an amount of
  cost $\alpha$ between a tuple $\ell \in \L^{S_2}$ of $\cf_{S_2}$ and the cost function $\cf_{S_1}$. The direction of the shift is given by the
  sign of $\alpha$. The precondition guarantees that costs remain non  negative after the EPT has been applied.

 Denoting by $r=|S_2|$, the EPT is called {an $r$-EPT. It is }an $r$-projection when  $\alpha\geq 0$ and an $r$-extension when $\alpha<0$.
\end{definition}

\begin{algorithm}\label{l-ept}
  Precondition: $-\cf_{S_2}(\ell)\leq\alpha\leq \min_{\ell'\in \L^{S_1}, \ell'[S_2] = \ell}W_{S_1}(\ell')$\;
  \Procedure{\Proj{$S_1,S_2,\ell,\alpha$}}
  {
    $\cf_{S_2}(\ell) \gets \cf_{S_2}(\ell) \oplus \alpha$\;
    \ForEach{($\ell'\in \L^{S_1}$ such that $\ell'[S_2] = \ell$)}
    {$\cf_{S_1}(\ell') \gets \cf_{S_1}(\ell') \ominus \alpha$\;}
  }
  \caption{\small A cost shifting EPT used to enforce soft local
    consistencies. The $\oplus, \ominus$ operations are extended here
    to handle possibly negative costs as follows: for non-negative
    costs $\alpha, \beta$, we have $\alpha \ominus (-\beta) = \alpha
    \oplus \beta$ and for $\beta\leq \alpha$, $\alpha\oplus (-\beta) =
    \alpha \ominus\beta$.}
\end{algorithm}

It is now possible to introduce local consistency enforcing algorithms.

\begin{definition}[{Node Consistency \cite{JT2004_AC}}] 
  A variable $x_i$ is \emph{star node consistent} (NC*) if each value $v
  \in D(x_i)$ satisfies $\cf_i(v) \oplus \cf_\varnothing < \top$ and
  there exists a value $v' \in D(x_i)$ such that $\cf_i(v') = 0$. A  CFN is NC* iff all variables are NC*.
\end{definition}

Procedure \opSty{enforceNC*}() in Algorithm \ref{enforceNC} enforces
NC*, where \opSty{unaryProject}() applies EPTs that move unary costs
towards $\cf_\varnothing$ while keeping the solution unchanged, and
\opSty{pruneVar}($x_i$) removes infeasible values.

\begin{algorithm}
  \caption{Enforce NC* \label{enforceNC}}
  \SetKw{Func}{Function} \SetKw{Proc}{Procedure} \SetKw{False}{false}
  \SetKw{True}{true} \SetKwBlock{Start}{}{}
  
  \Proc{} \opSty{enforceNC*}() 
  \Start{
    \lnl{ncL1}
    \lForEach{$x_i \in \X$}{\opSty{unaryProject}($x_i$)\;}
    \lnl{ncL2}
    \lForEach{$x_i \in \X$}{\opSty{pruneVar}($x_i$)\;}
  }
  \Proc{} \opSty{unaryProject}($x_i$)
  \Start{
    \lnl{ncL3}
    $\alpha := \min\{\cf_i(v) \mid v \in D(x_i)\}$\;
    \lnl{ncL6}
    \Proj{$\{x_i\},\varnothing,(),\alpha$}\;
  }
	\Proc{} \opSty{pruneVar}($x_i$)
  \Start{	  
    \lnl{ncL10}
    \ForEach{$v \in D(x_i)$ s.t. $\cf_i(v) \oplus \cf_\varnothing = \top$} {
      \lnl{ncL12}
      $D(x_i) := D(x_i) \setminus \{v\}$\;
    }
  }
\end{algorithm}
  
\begin{definition}[{(Generalized) Arc Consistency \cite{MT2004,LL2009,LL2012asa}}] Given a CFN
 $P=(\X, \C, \top)$, a cost function $\cf_S \in \C^+$ and a  variable $x_i \in S$.
  \begin{itemize}
  \item{} A tuple $\ell \in \L^S$ is a \emph{simple support} for $v
    \in D(x_i)$ with respect to $\cf_S$ with $x_i \in S$ iff
    $\ell[x_i] = v$ and $\cf_S(\ell) = 0$.
  \item{} A variable $x_i \in S$ is \emph{star generalized arc  consistent (GAC*)} with respect to $\cf_S$ iff \vspace{-5pt}
    \begin{itemize}
    \item{} $x_i$ is NC*;
    \item{} each value $v_i \in D(x_i)$ has a simple support $\ell$
      with respect to $\cf_S$.
    \end{itemize}
  \item{} A CFN is {\em GAC*} iff all variables are GAC* with respect
    to all related non-unary cost functions.
  \end{itemize}
\end{definition}

To avoid exponential space complexity, the GAC* definition and the algorithm is slightly different from the one given by 
Cooper and Schiex~\cite{MT2004}, which also requires for every
tuple $\ell \in \L^S$, $\cf_S(\ell) = \top$ if $\cf_\varnothing
\oplus \bigoplus_{x_i \in S}\cf_i(\ell[x_i]) \oplus \cf_S(\ell) =
\top$.

The procedure \opSty{enforceGAC*}() in Algorithm~\ref{enforceGAC},
enforces GAC* on a single variable $x_i \in \X$ with respect to a cost function $\cf_S \in \C^+$,
where $x_i \in S$ in a CFN $(\X, \C, \top)$. The procedure first computes the minimum when $x_i = v$ 
for each $v \in D(x_i)$ at line \ref{gacL14}, then performs a $1$-projection from $\cf_S$ to $\cf_i$ at 
line \ref{gacL16}. Lines \ref{gacL19} and \ref{gacL20} enforce NC* on $x_i$.

\begin{algorithm}[htb]
  \caption{Enforcing GAC* for $x_i$ with respect to $\cf_S$ \label{enforceGAC} }
  \SetKw{Proc}{Procedure}     
  \SetKwBlock{Start}{}{}    
  \Proc{} \opSty{enforceGAC*}($\cf_S$, $x_i$)
  \Start{
    \lnl{gacL13}
    \ForEach{$v \in D(x_i)$} {
      \lnl{gacL14}
      $\alpha := \min\{\cf_S(\ell) \mid \ell \in \L^S \wedge \ell[x_i] = v\}$\;	
      \lnl{gacL16}
      \Proj{$S,\{x_i\},(v),\alpha$}
    }
    \lnl{gacL19}
    \opSty{unaryProject}($x_i$)\;
		\lnl{gacL20}
		\opSty{pruneVar}($x_i$)\;
  }
\end{algorithm}

\ignore{
The procedure \opSty{enforceGAC*}() in Algorithm~\ref{enforceGAC},
is a simplified version that enforces GAC* for a CFN
$(\X, \C, \top)$. The propagation queue $\Q$ stores a set of
variables $x_j$. If $x_j \in \Q$, all variables involved in the same
cost functions as $x_j$ are potentially not GAC*. Initially, all
variables are in $\Q$. A variable $x_j$ is pushed into $\Q$ only after
values are removed from $D(x_j)$. At each iteration, an arbitrary
variable $x_j$ is removed from the queue by the function \opSty{pop}()
at line \ref{gacL3}. For all cost functions involving it, the function \opSty{findSupport}() at line \ref{gacL6} enforces GAC* with respect 
to $\cf_S$ on all other variables by finding the simple supports. 
Infeasible values due to increased unary or zero-ary costs are removed 
by the function \opSty{pruneVar}(). If a value is removed
from $D(x_i)$, the simple supports of other related variables may be
destroyed. Thus, $x_i$ is pushed onto $\Q$. If
\opSty{enforceGAC*}() terminates, all values in each variable domain
must have a simple support. The CFN is now GAC*.

\begin{algorithm}[htb]
  \caption{Enforcing GAC* for a CFN \label{enforceGAC} }
  \SetKw{Proc}{Procedure}
  \SetKw{Func}{Function}
  \SetKw{False}{false}
  \SetKw{True}{true}
  \SetKwFunction{unaryProject}{pruneVar}
  \SetKwBlock{Start}{}{}
  \Proc{} \opSty{enforceGAC*}()
  \Start{	
    \lnl{gacL1}
    $\Q := \X$\; 
    \lnl{gacL2}
    \While{$\Q \neq \varnothing$}{ 
      \lnl{gacL3}
      $x_j$ $:=$ \procCall{pop}($\Q$)\; 
      \lnl{gacL5}
      \ForEach{$\cf_S \in \C^+$ s.t. $x_j \in S$}{ 
        \lnl{gacL6}
        \lForEach{$x_i \in S \setminus \{x_j\}$}{
        	\procCall{findSupport}($\cf_S$, $x_i$)\;
      	}
      }
      	\lForEach{$x_i\in\X$  s.t. \procCall{pruneVar}$(x_i)$}{
        $\Q :=  \Q \cup \{x_i\}$\;
        } 
    }
  }
  \Proc{} \opSty{findSupport}($\cf_S$, $x_i$)
  \Start{
    \lnl{gacL13}
    \ForEach{$v \in D(x_i)$} {
      \lnl{gacL14}
      $\alpha := \min\{\cf_S(\ell) \mid \ell \in \L^S \wedge \ell[x_i] = v\}$\;	
      \lnl{gacL16}
      \Proj{$S,\{x_i\},(v),\alpha$}
    }
    \lnl{gacL19}
    \procCall{unaryProject}($x_i$)\;
  }
\end{algorithm}
}

Local consistency enforcement involves two types of operations: (1) finding the minimum cost returned by the cost functions among all (or part of the) tuples; (2) applying EPTs that shift costs to and from smaller-arity cost functions.

Minimum cost computation corresponds to line~\ref{ncL3} in Algorithm~\ref{enforceNC}, and line~\ref{gacL14} in Algorithm~\ref{enforceGAC}. For simplicity, we write $min\{\cf_S(\ell) \mid \ell \in \L^S\}$ as $min\{\cf_S\}$.

In practice, projections and extensions can be performed in constant time using the $\Delta$ data-structure introduced in Cooper and Schiex~\cite{MT2004}. For example, when we perform $1$-projections or $1$-extensions, instead of modifying the costs of all tuples, we store the projected and extended costs in $\Delta^{-}_{x_i,v}$ and $\Delta^{+}_{x_i,v}$ respectively. Whenever we compute the value of the cost function $\cf_S$ for a tuple $\ell$ with $\ell[x_i] = v$, we return $\cf_S(\ell) \ominus \Delta^{-}_{x_i,v} \oplus
\Delta^{+}_{x_i,v}$. The time complexity of enforcing one of the previous consistencies is thus entirely defined by the time complexity of computing the minimum of a cost function during the enforcing.  

\begin{proposition}\label{lem:supporttime}
   The procedure \opSty{enforceGAC*}$()$ in Algorithm \ref{enforceGAC}  
   requires $O(d \cdot f_{min})$ time, where $d$ is the maximum domain
   size and $f_{min}$ is the time complexity of minimizing $\cf_S$.
\end{proposition}
\begin{proof}
  Line \ref{gacL14} requires  $O(f_{min})$ time. We can replace the domain of $x_i$ by $\{v\}$, and run  the minimum computation to get the minimum cost. Projection at line  \ref{gacL16} can be performed in constant time. Thus, each iteration  requires $O(f_{min})$. Since the procedure iterates $d$ times, and  the procedures \opSty{unaryProject} and \opSty{pruneVar} requires $O(d)$, the overall  complexity is $O(d \cdot f_{min} + d) = O(d \cdot f_{min})$.
\end{proof}

In the general case, $f_{min}$ is in $O(d^r)$ where $r$ is the size of the scope and $d$ the maximum domain size. However, a \emph{global cost function} may have specialized algorithms which make the operation of finding minimum, and thus consistency enforcement, tractable.

\subsection{Global Constraints, Soft Global Constraints and Global Cost Functions}

\begin{definition}[Global Constraint~\cite{BCR2005,FPT2006}]
  A \emph{global constraint}, denoted by  \gc{GC}$(S, A_1, \ldots, A_t)$, is a family of hard constraints parameterized by a scope $S$, and possibly extra parameters $A_1,\ldots,A_t$.
\end{definition}

Examples of global constraints are \gc{AllDifferent}~\cite{JLL1978}, \gc{GCC} \cite{JCR1996}, \gc{Same} \cite{NIS2004}, \gc{Among}~\cite{BC1994}, \gc{Regular} \cite{GP2004}, \gc{Grammar} \cite{KS2010}, and \gc{Maximum}/\gc{Minimum} constraints \cite{NB2001}. Because of their unbounded scope, global constraints cannot be efficiently propagated by generic local consistency algorithms, which are exponential in the arity of the constraint.  Specific propagation algorithms are designed to achieve polynomial time complexity in the size of the input, \ie the scope, the domains and extra parameters.

To capture the idea of costs assigned to constraint violations, the notion of \emph{soft global constraint} has been introduced. 
This is a traditional global constraint with one extra variable representing the cost 
of the assignment w.r.t. to an existing global constraint. The  cost is given by a violation measure  function.

\begin{definition}[Soft Global Constraint \cite{TJC2001}]
  A \emph{soft global constraint}, denoted by  \gc{Soft\_GC}$^{\mu}(S \cup \{z\}, A_1, \ldots,A_t)$, is a family of hard constraints parameterized by 
  a violation measure $\mu$, 
  a scope $S$, 
  a cost variable $z$, and possibly extra parameters $A_1,\ldots,A_t$. The constraint is satisfied if and only if 
  $z=\mu(S, A_1, \ldots,A_t)$.
\end{definition}

Soft global constraints are used to introduce costs in the CSP framework, and therefore inside constraint programming solvers~\cite{PetitRB01}.
 It requires the introduction of extra cost variables and does not exploit the stronger propagation offered by some of the soft local consistencies. A possible alternative, when a sum of costs needs to be optimized, lies in the use of \emph{global cost functions}.

\begin{definition}[{Global Cost Function~\cite{MCT2009,LL2012asa}}]
\begin{sloppypar}
  A \emph{global cost function}, denoted  as \gc{W\_GCF}$(S, A_1, \ldots, A_t)$, is a family of cost  functions parameterized by a scope $S$ and possibly extra parameters $A_1, \ldots, A_t$.
	\end{sloppypar}
\end{definition}

For example, if $S$ is a set of variables with non-negative integer domains, it is easy to define the Global Cost Function \gc{W\_Sum}$(S) \equiv \bigoplus_{x_i\in S} \min(\top,x_i)$.

It is possible to derive a global cost function from an existing soft global constraint \gc{Soft\_GC}$^{\mu}(S \cup \{z\}, A_1, \ldots,A_t)$. In this case, we denote the corresponding global cost function as \gc{W\_GCF}$^\mu$. Its value for a tuple $\ell\in\L^S$ is equal to $\min(\top,\mu(\ell))$.  

For example, global cost functions \gc{W\_AllDifferent}$^{var}$/\gc{W\_AllDifferent}$^{dec}$~\cite{LL2009,LL2012asa} can be derived
from two different violation measures of \gc{AllDifferent}, namely variable-based and decomposition-based~\cite{TJC2001,WGL2006}, respectively.  Other examples include \gc{W\_GCC}$^{var}$ and \gc{W\_GCC}$^{val}$~\cite{LL2009,LL2012asa},
\gc{W\_Same}$^{var}$~\cite{LL2009,LL2012asa}, 
\gc{W\_SlidingSum}$^{var}$~\cite{LS2011}, \gc{W\_Regular}$^{var}$ and \gc{W\_Regular}$^{edit}$~\cite{BES12,LL2009,LL2012asa}, \gc{W\_EGCC}$^{var}$~\cite{LS2011},  \gc{W\_Disjunctive}$^{val}$ and \gc{W\_Cumulative}$^{val}$~\cite{LS2011,lee2014}.


\section{Tractable Projection-Safety}\label{tractable-PS}

All soft local consistencies are based on the use of EPTs, 
shifting costs between two scopes. 
The size of the smallest scope used in a EPT is called 
the \emph{order} ($r$) of the EPT. Such a EPT is called an $r$-EPT. It is directly related to the level of local consistency enforced: 
node consistency uses EPTs onto the empty scope ($r=0$), arc consistencies 
use unary scopes ($r=1$) whereas higher-order consistencies use larger 
scopes ($r \geq 2$)~\cite{MC2005}. In this section, we show that the
order of the {EPTs} directly impacts the tractability of global cost function minimization.

To be able to analyze complexities in global cost functions, 
we first  define the  decision problem associated with the optimization problem 
$\min\{\gc{W\_GCF}(S, A_1, \ldots, A_t)\}$. 

\begin{itemize}
\item[]{}{\sc IsBetterThan}(\gc{W\_GCF}$(S, A_1, \ldots, A_t), m)$
\item[]{$\mathbf{Instance.}$} A global cost function \gc{W\_GCF}, a scope $S$ with domains for the variables in $S$, values for the parameters $A_1, \ldots, A_t$, and a fixed integer $m$.
\item[]{$\mathbf{Question.}$} Does there exist a tuple $\ell \in
  \L^S$ such that \gc{W\_GCF}$(S, A_1, \ldots, A_t)(\ell) {<} m$?
\end{itemize}
\noindent
{We can then define the tractability of a global cost function. }

\begin{definition}
  A global cost function \gc{W\_GCF}$(S, A_1, \ldots, A_t)$ is said to be \emph{tractable} iff 
  the problem {\sc IsBetterThan}(\gc{W\_GCF}$(S, A_1, \ldots, A_t),m$) is in $P$.
\end{definition}

\noindent 
For a tractable global cost function $\cf_S =$\gc{W\_GCF}$(S, A_1, \ldots, A_t)$, the time complexity of computing $\min\{\cf_S\}$ is bounded above by a polynomial function in the size of the
input, including the scope, the corresponding domains, the other parameters of the global cost function, and $\log(m)$.

We introduce \emph{tractable $r$-projection-safety} 
global cost functions, which remain \emph{tractable} after applying 
$r$-EPTs.

\begin{definition}
  We say that a global cost function W\_GCF$(S,A_1,\ldots,A_t)$ is \emph{tractable $r$-projection-safe} iff:
  \begin{itemize}
  \item{} it is tractable and;
  \item{} any global cost functions that can be derived from W\_GCF$(S,A_1,\ldots,A_t)$ by a series of $r$-EPTs is also tractable.
\end{itemize}
\end{definition}

The tractability after $r$-EPTs depends on $r$. We divide the discussion of tractable $r$-projection-safety into three cases: $r=0$, $r\geq 2$ and $r = 1$. 
{In the following, given a tractable global cost function $\cf_S$, 
we  denote by  $\nabla_r(\cf_S)$ the global cost function resulting from  the application of an arbitrary  finite sequence of $r$-EPTs on $\cf_S$.}

\subsection{Tractability and $0$-EPTs}

When $r=0$, EPTs are performed to/from $\cf_\varnothing$. This kind of EPTs is used when enforcing Node Consistency (NC*)~\cite{JT2004_AC} but also in $\varnothing$-inverse consistency~\cite{MCT2009}, and strong $\varnothing$-inverse consistency~\cite{LL2009,LL2012asa}.

We show that if a global cost function is tractable, it remains tractable after applying such EPTs.
\begin{theorem}
  Every tractable global cost function is tractable $0$-projection-safe.
\end{theorem}
\begin{proof}
  Consider a tractable global cost function $\cf_S =\gc{W\_GCF}(S, A_1, \ldots, A_t)$. 
  Clearly, $\cf_S$ and $\nabla_0(\cf_S)$  only differ by a constant, \ie there exists $\alpha^-$ and
  $\alpha^+$, where $\alpha^-,\alpha^+ \in \{0,\ldots,\top\}$, such  that: \[
    \nabla_0(\cf_S)(\ell) = \cf_S(\ell) \oplus \alpha^+ \ominus
    \alpha^-, \textrm{ for all } \ell \in \L^S
    \]
    If $\cf_S(\ell) = \min\{\cf_S\}$ for some $\ell \in \L^S$, then   $\nabla_0(\cf_S)(\ell) = \min\{\nabla_0(\cf_S)\}$.  If $\cf_S$ is tractable, so is $\nabla_0(\cf_S)$.
\end{proof}

\subsection{Tractability and  {EPTs of order greater than 2}}

When $r \geq 2$, EPTs are performed to/from $r$-arity cost functions. This is required for enforcing higher order consistencies and is used in practice in ternary cost functions processing~\cite{MST2008} and complete $k$-consistency~\cite{MC2005}.

If arbitrary sequences of $r$-EPTs are allowed, we show that tractable global cost functions always become intractable after some sequence of $r$-EPT applications, where $r \geq 2$.

\begin{theorem}
  \label{thm:r-proj}
  Any tractable global cost function \gc{W\_GCF}$(S, A_1, \ldots, A_t)$  returning finite costs is not tractable $r$-projection-safe for $r \geq 2$, {unless ${P} = {NP}$.}   
\end{theorem}

\begin{proof}
{
Let us first define the binary constraint satisfaction problem  {\sc ArityTwoCSP} as follows. }
  \begin{itemize}
  \item[]{}{\sc ArityTwoCSP}$(\X,\C^h)$
  \item[]{$\mathbf{Instance.}$} A CSP instance $(\X,\C^h)$, where  every constraint $C^h_S \in \C^h$ involves two variables, \ie  $|S| = 2$. 
  \item[]{$\mathbf{Question.}$} Is the CSP $(\X,\C^h)$ satisfiable?
  \end{itemize}
\noindent	
{ 
{\sc ArityTwoCSP} is NP-hard 
as graph coloring 
can be solved through a direct modeling into {\sc ArityTwoCSP}.
}
%
%
We reduce the problem {\sc ArityTwoCSP}$(\X,\C^h)$  to the problem {\sc IsBetterThan}$(\nabla_2(\cf_\X), \top)$, where $\cf_\X =
  \mbox{\gc{W\_GC}}(\X,A_1,\ldots,A_t)$ is an arbitrary 
  global cost function using only finite costs.  
We first construct a CFN $(\X,\C \cup  \{\cf_\X\},\top)$. 
The upper  bound $\top$ is a sufficiently large integer such that $\top >
\cf_\X(\ell)$ for every $\ell \in \L^S$,  
which is always possible given that $\cf_\X$ remains finite.  
This technical restriction is not significant: if a
global cost function $\cf_S$ maps some tuples to infinity, we can
transform it to another cost function $\cf_S^\prime$ such that the
infinity costs are replaced by a sufficiently large integer $p$ such
that
$p \gg \max\{\cf_S(\ell) \mid \ell \in \L^S \wedge \cf_S(\ell) \neq
+\infty\}$.

  The cost functions $\cf_S \in \C \setminus \{\cf_\X\}$ are defined as follows:  \[
  \cf_S(\ell) = \left\{ \begin{array}{ll}
      0, & \mbox{ if $\ell$ is accepted by $C^h_S\in \C^h$}; \\
      \top, & \mbox{ otherwise} \\
    \end{array} \right. 
  \]
From the CFN, $\nabla_2$ can be defined as follows: for each  forbidden tuple $\ell[S]$ in each $C^h_S \in \C^h$, we add an  extension of $\top$ from $\cf_S$ to $\cf_\X$ with respect to  $\ell[S]$ into $\nabla_2$.  Under this construction,  $\nabla_2(\cf_\X)(\ell)$ can be represented as:
  \[
  \nabla_2(\cf_\X)(\ell) = \cf_\X(\ell) \oplus \bigoplus_{\cf_S \in
    \C} \cf_S(\ell[S])
  \]
{For a tuple $\ell \in \L^\X$, $\nabla_2(\cf_\X)(\ell) = \top$ iff 
$\ell$ is forbidden by some $C^h_S$ in $\C^h$. As a result  
 {\sc IsBetterThan}$(\nabla_2(\cf_\X), \top)$ is satisfiable iff 
 {\sc  ArityTwoCSP}$(\X,\C^h)$ is satisfiable. 
 As {\sc ArityTwoCSP} is NP-hard, 
 {\sc IsBetterThan}$(\nabla_2(\gc{W\_GC}), \top)$ is not polynomial, unless $P=NP$. 
 Hence,  $\nabla_2(\gc{W\_GC})$ is not tractable, 
 and then, $\gc{W\_GC}$ is not 
 tractable $2$-projection-safe, unless $P = NP$.}
\end{proof}

\subsection{Tractability and $1$-EPTs}

When $r=1$, $1$-EPTs cover $1$-projections and $1$-extensions, which 
are the backbone of the consistency algorithms of 
(G)AC*~\cite{JT2004_AC,LL2009,LL2012asa}, FD(G)AC* 
\cite{Larrosa2003, LL2009,LL2012asa}, (weak) ED(G)AC*~\cite{GHZL2005,LL2010,LL2012asa}, 
VAC, and OSAC \cite{Cooper2010}. In these cases, 
tractable cost functions are tractable
$1$-projection-safe only under special conditions.  
For example, Lee and Leung
define \emph{flow-based projection-safety} based on a 
flow-based global cost function.

\begin{definition}[{Flow-based \cite{LL2009,LL2012asa}}] 
A global cost function $\gc{W\_GCF}(S, A_1, \ldots, A_t)$  is \emph{flow-based} iff it can be represented as a flow network 
$G$ such that the minimum cost among all maximum flows between a fixed source and a fixed destination is equal to $\min\{\gc{W\_GCF}(S, A_1, \ldots, A_t)\}$.
\end{definition}

\begin{sloppypar}
\begin{definition}[{Flow-based projection safe \cite{LL2009,LL2012asa}}] A global cost function $\gc{W\_GCF}(S, A_1, \ldots, A_t)$ is flow-based projection-safe iff it is is flow-based, and is still flow-based following any sequence of $1$-projections and $1$-extensions.
\end{definition}
\end{sloppypar}

Lee and Leung~\cite{LL2009,LL2012asa} further propose sufficient conditions for tractable cost functions to be flow-based projection-safe. Flow-based projection-safety implies tractable $1$-projection-safety.  We state the result in the following theorem.
\begin{theorem}
  Any flow-based projection-safe global cost function is tractable $1$-projection-safe.
\end{theorem}
\begin{proof}
    Follows directly from the tractability of the minimum cost flow algorithm. 
\end{proof}

However, tractable cost functions are not necessarily tractable $1$-projection-safe. One example is \gc{W\_2SAT}, which is a global cost function derived from an instance of the polynomial {\sc 2SAT} problem.

\begin{definition} 
  Given a set of Boolean variables $S$, a set of binary clauses $F$,  and a positive integer $c$, the global cost function
  \gc{W\_2SAT}$(S, F, c)$ is defined as:
\begin{equation*}   
  \gc{W\_2SAT}(S, F, c)(\ell) = \left\{ \begin{array}{ll}
      0, & \mbox{ if $\ell$ satisfies $F$} \\
      c, & \text{ otherwise}
    \end{array} \right.     
\end{equation*}
\end{definition}

\noindent 
\gc{W\_2SAT} is tractable, because the {\sc 2SAT} problem is tractable \cite{KM1967}. However, it is not tractable $1$-projection-safe.
\begin{theorem}    
    \gc{W\_2SAT} is not tractable  $1$-projection-safe, {unless ${P}={NP}$.} 
\end{theorem}
\begin{proof}
{
Let us first define the {\sc WSAT-2-CNF} problem. }

  \begin{itemize}
    \itemsep=-3pt
  \item[]{}{\sc WSAT-2-CNF}
  \item[]{$\mathbf{Instance.}$} A 2-CNF formula $F$ (a set of binary clauses) 
  and a fixed integer $k$.
  \item[]{$\mathbf{Question.}$} Is there an assignment that satisfies all clauses in $F$ with at most $k$ variables set to $true$ ?
	  \end{itemize}
{\sc WSAT-2-CNF} was shown NP-hard in  \cite[page 69]{FG2006}. 
We reduce it 
to the problem {\sc IsBetterThan}$(\nabla_1(\gc{W\_2SAT}), \top)$.

We construct a particular sequence of $1$-projections and/or $1$-extensions $\nabla_1$ such that the {\sc WSAT-2-CNF} instance  
can be solved using $\cf_\X =
          \gc{W\_2SAT}(\X,F,k+1)$ from the Boolean CFN $N=(\X, \C \cup\{\cf_{\X}\}, k+1)$.  $\C$ only contains
unary cost functions $\cf_i$, which are defined as follows:	  \[
          \cf_i(v) = \left\{ \begin{array}{ll}
              1, & \mbox{ if $v = true$}; \\
              0, & \mbox{ otherwise} \\
            \end{array} \right. 
          \]
Based on $N$, we construct $\nabla_1$ as follows: for each  variable $x_i \in \X$, we add an extension of $1$ from   $\cf_i$ to $\cf_\X$ with respect to the value $true$ into  $\nabla_1$. As a result, a tuple $\ell$ with    $\nabla_1(\cf_\X)(\ell) = k' \leq k$ contains exactly $k'$ variables set to $true$ (because every $x_i=true$  incurs a cost of $1$) and also satisfies $F$ (or it would have cost $k+1 = \top$). 
Thus, the {\sc WSAT-2-CNF} instance with threshold $k$ is 
satisfiable iff 
{\sc IsBetterThan}$(\nabla_1(\cf_\X),k+1)$ is satisfiable. 
As {\sc WSAT-2-CNF} is NP-hard, 
{\sc IsBetterThan}$(\nabla_1(\gc{W\_2SAT}),k+1)$ is not polynomial, unless $P=NP$. 
Hence,  $\nabla_1(\gc{W\_2SAT})$ is not tractable, 
and then, $\gc{W\_2SAT}$ is not 
 tractable $1$-projection-safe, unless $P = NP$.
\end{proof}

When the context is clear, we use tractable projection-safety, projection and extension to refer to tractable $1$-projection-safety,
$1$-projection and $1$-extension respectively hereafter.


\section{Polynomial DAG-Filtering}
\label{sec:proj-constr}

Beyond flow-based global cost functions~\cite{LL2009,LL2012asa}, we introduce now an additional class of tractable projection-safe cost functions based on dynamic programming algorithms. As mentioned by Dasgupta \etal\cite{DAG2007}, every dynamic programming algorithm has an underlying DAG structure.
\begin{definition}[DAG]
  A \emph{directed acylic graph (DAG)} $T = (V,E)$, where $V$ is a set  of vertices (or nodes) and $E \subseteq V \times V$ is a set of directed edges,
  is a directed graph with no directed cycles, and:  \begin{itemize}
  \item{} An edge $(u,v) \in E$ points from $u$ to $v$, where $u$ is the \emph{parent} of $v$, and $v$ is the \emph{child} of
    $u$;
  \item{} A \emph{root} of a DAG is a vertex with zero in-degree;
  \item{} A \emph{leaf} of a DAG is a vertex with zero out-degree;
  \item{} An \emph{internal vertex} of a DAG is any vertex which is not a leaf;
  \end{itemize}
\end{definition}

\noindent
We now introduce the \emph{DAG filterability} of a global cost function.

\begin{definition}[DAG-filter]
  \label{def:recurdecompose}
  A \emph{DAG-filter} for a cost function $\cf_S$ is a DAG $T = (V,E)$ such that:
  \begin{itemize}
  \item $T$ is connected;
  \item $V = \{\smallcf_{S_i}\}_i$ is a set of cost function vertices each with a scope $S_i$, among which vertex $\cf_S$ is the root of $T$;
  \item Each internal vertex $\smallcf_{S_i}$ in $V$ is associated with an aggregation function $f_i$ that maps a multiset of costs 
	$\{\alpha_j \mid \alpha_j \in [0\ldots\top]\}$ to $[0\ldots\top]$ and is based on an associative and commutative binary operator;
  \item For every internal $\smallcf_{S_i}\in V$,
    \begin{itemize}
    \item the scope of $\smallcf_{S_i}$ is composed from its children's scopes:   
      \[
      S_i = \bigcup_{(\smallcf_{S_i}, \smallcf_{S_j}) \in E} S_j
      \]
    \item $\smallcf_{S_i}$ is the aggregation of its children:  \[
      \smallcf_{S_i}(\ell) = f_i(\{\smallcf_{S_j}(\ell[S_j]) \mid
      (\smallcf_{S_i}, \smallcf_{S_j}) \in E\});
      \]
    \item{} $\min$ is distributive over $f_i$: \[
      \min\{\smallcf_{S_i}\} = f_i(\{\min\{\smallcf_{S_j}\} \mid
      (\smallcf_{S_i}, \smallcf_{S_j}) \in E\}).
      \]
    \end{itemize}  
  \end{itemize}
\end{definition}
When a cost function $\cf_S$ has a DAG-filter $T$,  we say that  $\cf_S$ is DAG-filterable by $T$. Note that any cost function $\cf_S$ has a trivial DAG filter which is composed of a single vertex that defines $\cf_S$ as a cost table  (with size exponential in the arity $|S|$).

In the general case, a DAG-filter (recursively) transforms a cost function into cost functions with smaller scopes until it reaches the ones at the leaves of a DAG, which may be trivial to solve. The (minimum) costs can then be aggregated using the $f_i$ functions at each internal vertex to get the resultant (minimum) cost, through dynamic programming. However, further properties on DAG-filters are required to allow for projections and extensions to operate on the DAG structure.

\begin{definition}[Safe DAG-filter]
  \label{def:sd}
A DAG-filter $T = (V,E)$ for a cost function $\cf_S$  is \emph{safe} iff:
  \begin{itemize}
  \item{} projection and extension are distributive over $f_i$, \ie   for a variable $x \in S$, a cost $\alpha$ and a tuple $\ell \in    \L^S$,
    \begin{itemize}
    \item{} $\smallcf_{S_i}(\ell[S_i]) \oplus \nu_{x,S_i}(\alpha) =
      f_i(\{\smallcf_{S_k}(\ell[S_k]) \oplus \nu_{x,S_k}(\alpha) \mid
      (\smallcf_{S_i}, \smallcf_{S_k}) \in E\})$, and;
    \item{} $\smallcf_{S_i}(\ell[S_i]) \ominus \nu_{x,S_i}(\alpha) =
      f_i(\{\smallcf_{S_k}(\ell[S_k]) \ominus \nu_{x,S_k}(\alpha) \mid
      (\smallcf_{S_i}, \smallcf_{S_k}) \in E\})$,
    \end{itemize}
    where the function $\nu$ is defined as:
   \[
    \nu_{x,S_j}(\alpha) = \left\{ \begin{array}{ll}
        \alpha, & \mbox{ if $x \in S_j$,} \\
        0, & \mbox{ otherwise.} \\
      \end{array} \right.
    \]
  \end{itemize}
\end{definition}

The requirement of a distributive $f_i$ with respect to projection and extension at each vertex in $T$ implies that the structure of the DAG is unchanged after projections and extensions.  Both operations can be distributed down to the leaves. We formally state this as the following theorem. Given a variable $x$, with a value $a\in D(x)$, and a  cost function $\cf_S$, we denote as $\cf'_S$
 the cost function obtained by the application of \Proj{$S,\{x\},(v),\alpha$} on $\cf_S$ if $x\in S$ or  $\cf_S$ otherwise.

 \begin{theorem}
   \label{thm:projOnSafeDecomp}
For a cost function $\cf_S$ with a safe DAG-filter $T=(V,E)$, $W'_S$ has a safe DAG-filter $T^{\prime}=(V^{\prime},E^{\prime})$,  where each $\smallcf_{S_i} \in V^{\prime}$ is defined as:
    \[
    \smallcf'_{S_i} = \left\{ \begin{array}{ll}
				\smallcf_{S_i} \ominus \nu_{x,S_k}(\alpha), & \mbox{  if $\smallcf_{S_i}$ is a leaf of $T$,} \\
				\smallcf_{S_i}, & \mbox{ otherwise.} \\        
      \end{array} \right.
    \] 
and $(\smallcf'_{S_i}, \smallcf'_{S_k}) \in E^{\prime}$ iff $(\smallcf_{S_i},\smallcf_{S_k}) \in E$, \ie $T'$ is isomorphic to $T$. 
Moreover, both $\smallcf'_{S_i} \in V'$ and $\smallcf_{S_i} \in V$ are associated with the same aggregation function $f_i$.  
\end{theorem}
\begin{proof}
  Follows directly from Definition~\ref{def:sd}.
\end{proof}

Two common choices for $f_i$ are $\oplus$ and $\min$, with which distributivity depends on how scopes intersect. In the following, we show that the global cost function is safely DAG-filterable if the internal vertices that are associated with $\oplus$ have children with non-overlapping scopes, and those associated with $\min$ have children with identical scopes.

\begin{proposition}
  \label{thm:oplussafe}
  Any DAG-filter  $T=(V,E)$ for a cost function $\cf_S$ such that
  \begin{itemize}
  \item{} each  $\smallcf_{S_i} \in V$ is associated with the aggregation function $f_i = \bigoplus$;
  
   \item{} for any distinct $\smallcf_{S_j},\smallcf_{S_k} \in V$, which are children of $\smallcf_{S_i}$,  $S_j \cap S_k = \varnothing$.
  \end{itemize}
is safe.
\end{proposition}
 
\begin{proof}	
  We need to show that $\min$, projection and extension are  distributive over $\oplus$.  Since the scopes of the cost  functions do not overlap, $\min$ is distributive over $\oplus$.  We  further show the distributivity with respect to projection  ($\ominus$), while extension ($\oplus$) is similar.  We consider an  internal vertex $\smallcf_{S_i} \in V$.  Given a variable $x \in S_i$,  a cost $\alpha$, and a tuple $\ell \in \L^S$, since the scopes of  the cost functions $\{\smallcf_{S_k} \mid (\smallcf_{S_i},
  \smallcf_{S_k}) \in E\}$ are disjoint, there must exist exactly
  one cost function $\smallcf_{S_j}$ such that $x \in S_j$, \ie: \begin{eqnarray*}
    \smallcf_{S_i}(\ell) \ominus \alpha 
    &=& (\smallcf_{S_j}(\ell[S_j]) \ominus \alpha) \oplus \bigoplus_{k \neq j \wedge (\smallcf_{S_i}, \smallcf_{S_k}) \in E} \smallcf_{S_k}(\ell[S_k]) \\
    &=& \displaystyle \bigoplus_{(\smallcf_{S_i}, \smallcf_{S_k}) \in E} (\smallcf_{S_k}(\ell[S_k]) \ominus \nu_{x,S_k}(\alpha))
  \end{eqnarray*}
  The result follows.
\end{proof}

\begin{proposition}
  \label{thm:minsafe}
  Any DAG-filter  $T=(V,E)$ for a cost function $\cf_S$ such that
  \begin{itemize}
  \item{} each  $\smallcf_{S_i} \in V$ is associated with the aggregation function $f_i = \min$; 
   \item{} $\forall\smallcf_{S_j} \in V$, which are children of $\smallcf_{S_i}$, $S_j = S_i$.
   \end{itemize}
is safe.
\end{proposition}
\begin{proof}
  Since the scopes are completely overlapping,  
  \begin{eqnarray*}
    \min\{\smallcf_{S_i}\} 
    &=& \min_{\ell \in \L^{S_i}}\{\min_{(\smallcf_{S_i}, \smallcf_{S_k}) \in E} \{\smallcf_{S_k}(\ell)\}\} \\
    &=& \min_{(\smallcf_{S_i}, \smallcf_{S_k}) \in E}\{\min_{\ell \in \L^{S_k}} \{\smallcf_{S_k}(\ell)\}\} \\
    &=& f_i(\{\min\{\smallcf_{S_k}\} \mid (\smallcf_{S_i}, \smallcf_{S_k}) \in E\})       
  \end{eqnarray*}
It is trivial to see that projection and extension are  distributive over $f_i$. The result follows.
\end{proof}


We are now ready to define polynomial DAG-filterability of global cost functions. As safe DAG-filters can be exponential in size, we need to  restrict to {safe DAG-filters} of polynomial size by restricting the size of the DAG  to be polynomial and by bounding the arity of the cost functions at the leaves of the DAG.

\begin{definition}[Polynomial DAG-filterability]
  \label{def:polydecom}
  A global cost function $\gc{W\_GCF}(S,A_1,\ldots,A_t)$ is \emph{polynomially DAG-filterable} iff
  \begin{enumerate}
  \item any instance $\cf_S$ of $\gc{W\_GCF}(S,A_1,\ldots,A_t)$ has a safe DAG-filter $T=(V,E)$
  \item{} where $|V|$ is polynomial in the size of the input parameters of $\gc{W\_GCF}(S,A_1,\ldots,A_t)$;
  \item each leaf in $V$ is a unary cost function, and
  \item each aggregation function $f_i$ associated with each internal vertex is polynomial-time computable.
\end{enumerate}
\end{definition}

Dynamic programming can compute the minimum of a polynomially DAG-filterable cost function in a tractable way. Projections and extensions to/from such cost functions can also be distributed to the leaves in $T$. Thus, polynomially DAG-filterable global cost functions are tractable and also tractable projection-safe, as stated below.

\begin{theorem}
    \label{lem:pd-poly}
    A polynomially DAG-filterable global cost function $\gc{W\_GCF}(S,A_1,\ldots,A_t)$ is tractable.
  \end{theorem}
\begin{proof}
Let $\cf_S$ be any instance of $\gc{W\_GCF}(S,A_1,\ldots,A_t)$, and $T = (V,E)$ be a safe DAG-filter for $\cf_S$. 
  Algorithm \ref{algo:min} can be applied to compute $\min\{\cf_S\}$.  The algorithm uses a bottom-up memoization approach.  Algorithm~\ref{algo:min} first sorts $V$ topologically at line \ref{minL1}.  After sorting, all the leaves will be grouped at the end of the sorted sequence, which is then processed in the reversed order at line~\ref{minL2}.  If the vertex is a leaf, the minimum is computed and stored in the table \opSty{Min} at line \ref{minL4}. Otherwise, its minimum is computed by aggregating $\{\opSty{Min}[\smallcf_{S_k}] \mid  (\smallcf_{S_i},\smallcf_{S_k}) \in E\}$, which have been already computed, by the function $f_i$ at line \ref{minL6}. Line \ref{minL7} returns the minimum of the root node.
	
The computation is tractable. Leaves being unary cost functions, line~\ref{minL4} is in $O(d)$, where $d$ is the  maximum domain size. For other vertices, line \ref{minL6}  calls  $f_i$, which is assumed to be polynomial time. The result follows.
\end{proof}

\begin{algorithm}[htb]
\caption{Computing $\min\{\cf_S\}$  \label{algo:min} }
\SetKw{Proc}{Procedure}
\SetKw{Func}{Function}
\SetKw{False}{false}
\SetKw{True}{true}
\SetKwBlock{Start}{}{}
{\small
  \Func{} \procCall{Minimum}($\cf_S$)
  \Start{
    \lnl{minL0}
    Form the corresponding filtering DAG $T = (V,E)$\;
    \lnl{minL1}
    Topologically sort $V$\;
    \lnl{minL2}
    \ForEach{$\smallcf_{S_i} \in V$ in reverse topological order}{
      \lnl{minL3}
      \eIf{$\smallcf_{S_i}$ is a leaf of $T$}{	     
        \lnl{minL4}
        \ArgSty{Min}$[\smallcf_{S_i}] := \procCall{min}\{\smallcf_{S_i}\}$ \;
      }
      {
        \lnl{minL6}
        \ArgSty{Min}$[\smallcf_{S_i}] := f_i(\{\ArgSty{Min}[\smallcf_{S_k}] \mid (\smallcf_{S_i},\smallcf_{S_k}) \in E\})$ \;
      }
    } 	 
    \lnl{minL7}
    \Return{\ArgSty{Min}$[\cf_S]$}\;
  }
}
\end{algorithm}

Note that Algorithm~\ref{algo:min} computes the minimum from scratch
each time it is called. In practice, querying the minimum of cost
function $\cf_S$ when $x_i$ is assigned to $v$ for different values $v$ can be done more efficiently with some pre-processing. 
We define $Min^+[\smallcf_{S_j},x_i,v]$ that stores $\min\{\smallcf_{S_j}(\ell) \mid x_i \in S_j \wedge \ell[x_i] = v\}$. $Min^+[\smallcf_{S_j},x_i,v]$ can be computed similarly to Algorithm~\ref{algo:min} by using the equation:
\[
Min^+[\smallcf_{S_j},x_i,v] = 
\left \{ 
\arraycolsep=0pt%
\begin{array}{ll}
\smallcf_{S_j}(v), & \mbox{ if $\smallcf_{S_j}$ is a leaf of $T$ and $S_j = \{x_i\}$} \\
\min\{\smallcf_{S_j}\}, & \mbox{ if $\smallcf_{S_j}$ is a leaf of $T$ and $S_j \neq \{x_i\}$} \\
f_j(\{Min^+[\smallcf_{S_k}, x_i, v] ) \mid~ & (\smallcf_{S_i},\smallcf_{S_k}) \in E\}), \mbox{ otherwise}
\end{array}\right .
\] 
\noindent
Whenever we have to compute the minimum for $x_i=v$, we simply return $Min^+[\cf_S,x_i,v]$. Computing $Min^+[\cf_S,x_i,v]$ is equivalent to running Algorithm~\ref{algo:min} $nd$ times, where $n$ is the number of variables and $d$ the maximum domain size. However, this can be reduced by incremental computations exploiting the global constraint semantics, as illustrated on the \gc{W\_Grammar}$^{var}$ global cost function in Section~\ref{example-DAG}.

We now show that a polynomially DAG-filterable cost function is tractable projection-safe. The following lemma will be useful. For a  variable $x \in S$ and a value $v \in D(x)$, we denote as $\gc{W'\_GCF}(S,A_1,\ldots,A_t)$ the cost function obtained by applying \Proj{$S,\{x\},(v),\alpha$} to a global cost function $\gc{W\_GCF}(S,A_1,\ldots,A_t)$.

\begin{lemma}
  \label{lemma:pd-safe}
  If a global cost function $\gc{W\_GCF}(S,A_1,\ldots,A_t)$ is
  polynomially DAG-filterable, $\gc{W'\_GCF}(S,A_1,\ldots,A_t)$ is
  polynomially DAG-filterable.
\end{lemma}

\begin{proof}
Suppose $\gc{W\_GCF}(S,A_1,\ldots,A_t)$ is polynomially DAG-filterable. Then any instance $\cf_S$ of it has a safe filtering DAG $T = (V,E)$. By Theorem~\ref{thm:projOnSafeDecomp}, we know that $\cf'_S$, the corresponding instance of $\gc{W'\_GCF}(S,A_1,\ldots,A_t)$, has a safe DAG filter $T^{\prime}$, which is isomorphic to $T$, has polynomial size, and polynomial-time computable $f_i$ associated with each internal vertex. The leaves of $T^{\prime}$ only differ from those of $T$ by a constant. The result follows.        
  \ignore{
    We only prove the part on projection, while the proof on extension   is similar.  Suppose $\gc{W\_GCF}(S,A_1,\ldots,A_t)$ is polynomially DAG-decomposable into the  corresponding DAG $T = (V,E)$. 
  We consider two cases when  performing projection on a cost function $\smallcf_{S_i} \in V$.
  \begin{description}
  \item[Case 1:]{} $\smallcf_{S_i}$ is a leaf of $T$. \\
    The cost function $\smallcf_{S_i}$ is a tractable unary cost function. By Theorem \ref{thm:projOnSafeDecomp}, the value of the resultant cost function is either unchanged (if $\ell[x]\neq v$) or otherwise  $\delta_{x,v,\alpha}(\smallcf_{S_i})(\ell) =
    \smallcf_{S_i}(\ell) \ominus \nu_{x,S_i}(\alpha)$, which remains a unary and therefore tractable cost function;
  \item[Case 2:]{} $\smallcf_{S_i}$ is not a leaf of $T$. \\
    The cost function $\smallcf_{S_i}$ is safely DAG-decomposed  into the sub-DAG rooted in   $\smallcf_{S_i}$ with aggregation function $f_i$.  By Theorem    \ref{thm:projOnSafeDecomp}, the resultant function   $\delta_{x_i,v,\alpha}(\smallcf_S)$ after projection can be safely DAG-decomposed into $T^{\prime}$. Since is isomorphic to $T$, it has polynomial size, it also uses tractable $f_i$ and has unary cost functions as leaves so $\delta_{x_i,v,\alpha}(\smallcf_S)$ can be polynomially decomposed into $T^{\prime}$.
  \end{description}
  The result follows.
  }
\end{proof}

\begin{theorem}
  \label{thm:pd-safe}
  A polynomially DAG-filterable global cost function $\gc{W\_GCF}(S,A_1,\ldots,A_t)$ is  tractable projection-safe.
\end{theorem}
\begin{proof}
  Follows directly from Theorem \ref{lem:pd-poly} and Lemma  \ref{lemma:pd-safe}.
\end{proof}

As shown by Theorem~\ref{thm:pd-safe}, a polynomially DAG-filterable cost function $\cf_S$ remains
polynomially DAG-filterable after projection or extension. Algorithm~\ref{pDAGProject} shows how the 
projection is performed from $\cf_S$ and $\cf_i$, where $x_i  \in S$. Lines \ref{dagprjln2} to \ref{dagprjln4}
modify the leaves of the filtering DAG, as suggested by Theorem~\ref{thm:projOnSafeDecomp}. 

Lines \ref{dagprjln4} to \ref{dagprjln8} in Algorithm \ref{pDAGProject} show how incrementality can be achieved.
If $\cf_i$ or $D(x_i)$,  $x_i \in S$, are changed we update the entry $Min[\smallcf_{S_i}]$ 
at line \ref{dagprjln4}, which corresponds to the leaf $\smallcf_{S_i}$, where $x_i \in S_i$. The change propagates upwards in 
lines \ref{dagprjln7} and \ref{dagprjln8}, updating all entries related to the leaf $\smallcf_{S_i}$. 
The table $FW$ can be updated similarly.

\begin{algorithm}[htb]
\caption{Projection from a polynomially DAG-filterable global cost function\label{pDAGProject}}
\SetKw{Proc}{Procedure}
\SetKw{Func}{Function}
\SetKw{False}{false}
\SetKw{True}{true}
\SetKwBlock{Start}{}{}
Precondition: $\cf_S$ is polynomially DAG-filterable with the filtering DAG $T=(V,E)$\;
  \Proc{} \procCall{Project}($S$, $\{x_i\}$, $(v)$, $\alpha$)
  \Start{  
		\lnl{dagprjln1}
			 $\cf_{i}(v)$ := $\cf_{i}(v) \oplus \alpha$ \;
	\lnl{dagprjln2}	
    \ForEach{$\smallcf_{S_j} \in V$ such that $S_j = \{x_i\}$ and $\smallcf_{S_j}$ is a leaf of $T$}{      
		\lnl{dagprjln3}
        $\smallcf_{S_j}(v)$ := $\smallcf_{S_j}(v) \ominus \alpha$ \;      
			\lnl{dagprjln4}			  
						\ArgSty{Min}$[\smallcf_{S_j}] := \procCall{min}\{\smallcf_{S_j}\}$ \;
		}	
		\lnl{dagprjln5}
    Topologically sort $V$\;    
		\lnl{dagprjln6}
    \ForEach{$\smallcf_{S_j} \in V$ in reverse topological order}{      
		\lnl{dagprjln7}
      \If{$\smallcf_{S_j}$ is not a leaf and $x_i \in S_j$}{	                          
			\lnl{dagprjln8}
        \ArgSty{Min}$[\smallcf_{S_j}] := f_i(\{\ArgSty{Min}[\smallcf_{S_k}] \mid (\smallcf_{S_j},\smallcf_{S_k}) \in E\})$ \;
      }
  }
	}
\end{algorithm}


The time complexity of enforcing GAC* on a polynomially DAG-filterable
global cost function heavily depends on preprocessing, as stated in
the following corollary.

\begin{corollary}  
\label{Thm:pDAGProject}
	If the time complexity for pre-computing the table $Min^+$ for 
	a polynomially DAG-filterable cost function $\cf_S$ 
	is $O(K(n,d))$, where $K$ is a function of $n = |S|$ and maximum domain size $d$, then enforcing GAC* on a variable $x_i \in S$ with respect to $\cf_S$ requires $O(K(n,d)+d)$ time.
\end{corollary}
\begin{proof}
  Computing the minimum of $\cf_S$ when $x_i = v$, where $x_i \in S$
  and $v \in D(x_i)$, requires only constant time by looking up from $Min^+$. 
	By Proposition \ref{lem:supporttime}, the time
  complexity is $O(K(n,d) + d)$ time.
\end{proof}

We have presented a new class of tractable projection-safe global cost
functions. Algorithm~\ref{algo:min} gives an efficient algorithm to
compute the minimum cost.  In the next Section, we give an example of
such a global cost function. More examples can be found in the
associated technical report~\cite{TR-Lee2014}.


\section{A Polynomially DAG-filterable Global Cost Function}
\label{example-DAG}

In the following, we show that \gc{W\_Grammar}$^{var}$, \gc{W\_Among}$^{var}$,\gc{W\_Regular}$^{var}$, 
\gc{W\_Max}, and \gc{W\_Min} are {polynomially 
DAG-filterable} using the results from the previous section. 

\subsection{The \gc{W\_Grammar}$^{var}$ Cost Function}

\gc{W\_Grammar}$^{var}$ is the cost function variant of the softened version of the hard global constraint \gc{Grammar}~\cite{KS2010} defined based on a context-free language.
\begin{definition}
  A context-free language $L(G)$ is represented by a
  \emph{context-free grammar} $G=(\Sigma, N, P, A_0)$, where:
  \begin{itemize}
  \item{} $\Sigma$ is a set of terminals;
  \item{} $N$ is a set of non-terminals;
  \item{} $P$ is a set of production rules from $N$ to $(\Sigma \cup
    N)^{*}$, where $*$ is the Kleene star, and;
  \item{} $A_0 \in N$ is a starting symbol. 
  \end{itemize}
  A string $\tau$ belongs to $L(G)$, written as $\tau \in L(G)$ iff $\tau$ can be derived from $G$.
\end{definition}

\noindent
Without loss of generality, we assume that (1) the context-free language $L(G)$ does not contain cycles, and (2) the strings are always of fixed length, representing values in tuples.

Assume $S = \{x_1, \ldots, x_n\}$. We define $\tau_{\ell}$ to be a string formed by a tuple $\ell \in
\L^S$, where the $i^{th}$ character of $\tau_{\ell}$ is $\ell[x_i]$.
The hard constraint \gc{grammar}$(S,G)$ authorizes a tuple $\ell \in
\L^S$ if $\tau_{\ell} \in L(G)$~\cite{KS2010}. Using the violation measure $\mathit{var}$ by Katsirelos \etal~\cite{KNW2011}, the \gc{W\_Grammar}$^{var}$ cost function is defined as follows.

\begin{sloppypar}
  \begin{definition}[\gc{W\_Grammar$^{var}$} \cite{KNW2011}] Given a context-free grammar   $G=(\Sigma, N, P, A_0)$. \emph{\gc{W\_Grammar}$^{var}(S,G)$}  returns $\min\{H(\tau_\ell, \tau_i) \mid \tau_i \in L(G)\}$ for  each tuple $\ell \in \L^S$, where $H(\tau_1, \tau_2)$ returns the Hamming  distance between $\tau_1$ and $\tau_2$.
\end{definition}
\end{sloppypar}

\begin{example}
\label{wgrammarex1}
Consider $S = \{x_1,x_2,x_3,x_4\}$, where $D(x_i) = \{a,b,c\}$ for $i=1\ldots 4$.  Given the grammar $G=(\{a,b,c\}, \{A_0,A,B,C\}, P, S)$ with the following production rules.
\begin{eqnarray*}
	A_0 &\rightarrow& AA \\
    A &\rightarrow& a \mid AA \mid BC \\
    B &\rightarrow& b \mid BB\\    
    C &\rightarrow& c \mid CC
\end{eqnarray*}
\noindent
The  cost returned by \gc{W\_Grammar}$^{var}(S, G)(\ell)$ is $1$ if $\ell = (c,a,b,c)$. The assignment of $x_1$ needs to be changed so that $L(M)$ accepts the corresponding string $aabc$.
\end{example}

\begin{theorem}
  \label{grammar}
  \gc{W\_Grammar}$^{var}(S,G)$ is a {polynomially DAG-filterable} and thus tractable projection-safe global cost function.
\end{theorem}
\begin{proof}
  We adopt the dynamic programming approach similar to the modified  CYK parser~\cite{KNW2011}. Without loss of generality, we assume $G$  is in Chomsky normal form, \ie\  {each production rule} always has the form  $A \rightarrow \alpha$ or $A \rightarrow BC$, where $A \in N$, $B,C \in N \setminus \{A_0\}$ and $\alpha \in \Sigma$.
	
  Define $\smallcf^A_{S_{i,j}} = \mbox{\gc{W\_Grammar}}^{var}(S_{i,j},
  G_A)$, where $i \leq j$, $S_{i,j} = \{x_i \ldots x_j\} \subseteq S$,  and $G_A = (\Sigma, N, P, A)$ for $A \in N$. By definition, \[
  \mbox{\gc{W\_Grammar}}^{var}(S,G)(\ell) =
  \smallcf_{S_{1,n}}^{A_0}(\ell)
  \]
  The base cases $\smallcf^A_{S_{i,i}}$ is defined as follows. Define $\Sigma_A = \{\alpha \mid A \rightarrow \alpha\}$ to be the set of terminals that can be yielded from $A$. 
  \begin{equation}
	\label{eq:geq1}
  \smallcf^A_{S_{i,i}}(\ell) =  \left\{ \begin{array}{ll}
      \displaystyle\min \{U^{\alpha}_i(\ell[x_i]) \mid (A \rightarrow \alpha) \in P\}, & \mbox{ if $\Sigma_A \neq \varnothing$} \\				
      \top, & 
      \mbox{ otherwise}\\																																		
    \end{array}\right. \\		
  \end{equation}

  The unary cost function $U^{\alpha}_i(\ell[x_i])$ is defined as follows.
  \begin{equation}
	\label{eq:geq1.5}
     U^{\alpha}_i(v) = \left\{ \begin{array}{ll}
      0, & \mbox{ if $v = \alpha$;} \\
      1, & \mbox{ otherwise} \\
    \end{array} \right.
  \end{equation}
  \noindent
  Other cost functions $\smallcf^A_{S_{i,j}}$, where $i < j$, are defined as follows. Let $N_A = \{(B,C) \mid A \rightarrow BC\}$  be the set of pairs of non-terminals that are yielded from $A$. 
  \begin{equation}
	\label{eq:geq2}	
     \smallcf^A_{S_{i,j}}(\ell) = \left\{ \begin{array}{ll}
      \displaystyle \min_{k = i,\ldots j-1} \{\smallcf^B_{S_{i,k}}(\ell[S_{i,k}]) \oplus \smallcf^C_{S_{k+1,j}}(\ell[S_{k+1,j}]) \mid (A \rightarrow BC) \in P \}, & \mbox{ if $N_A \neq \varnothing$ }\\
      \top, &   
      \mbox{ otherwise}\\
    \end{array}\right.   
  \end{equation}
  \end{proof}

The associated filtering DAG $(V,E)$ is illustrated in Figure~\ref{grammarDAG} on Example \ref{wgrammarex1}. In Figure~\ref{grammarDAG}, leaves are indicated by double circles, corresponding to the unary cost function in equation \ref{eq:geq1.5}.
Vertices with $\min$ or $\oplus$ aggregators are indicated by rectangles and circles respectively, corresponding to cost functions  
$\smallcf^A_{S_{i,j}}$ in equation~\ref{eq:geq2} if $i \neq j$, or equation~\ref{eq:geq1} otherwise. As shown in Figure~\ref{grammarDAG}, the root node \gc{W\_Grammar} is first split by the production rule $A_0 \rightarrow AA$. One of its children $\omega^{A}_{S_{1,1}}$ leads to the leaf $U^{a}_1$ according to the production rule $A \rightarrow a$. The DAG uses only $\oplus$ or $\min$ as aggregations and they satisfy the preconditions that allow to apply propositions~\ref{thm:oplussafe} and~\ref{thm:minsafe}. The cost function is therefore safely DAG-filterable. 
Moreover, the corresponding DAG $(V,E)$ has size $|V| = O(|P| \cdot |S|^3)$ polynomial in the size of the input. The leaves are unary functions  $\{U^{\alpha}_i\}$ and by Theorem \ref{thm:pd-safe}, the result follows.

\begin{figure}[htp]
\resizebox{1.0\textwidth}{!}{
\begin{tikzpicture}[>=latex',line join=bevel,]
\node (others4) at (284bp,90bp) [draw,draw=none] {$...$};
  \node (U4c) at (420bp,156bp) [draw,double,double distance=1pt,circle] {$U^c_4$};
  \node (sumw11Bw22C) at (284bp,61bp) [draw,circle] {$$};
  \node (U2b) at (556bp,197bp) [draw,double,double distance=1pt,circle] {$U^b_2$};
  \node (others3) at (284bp,148bp) [draw,draw=none] {$...$};
  \node (sumw12Aw34A) at (158bp,141bp) [draw,circle] {$$};
  \node (w24A) at (216bp,184bp) [draw,rectangle] {$\omega^{A}_{S_{2,4}}$};
  \node (sumw23Bw44C) at (284bp,177bp) [draw,circle] {$$};
  \node (sumw22Bw33B) at (420bp,197bp) [draw,circle] {$$};
  \node (w11B) at (352bp,19bp) [draw,rectangle] {$\omega^{B}_{S_{1,1}}$};
  \node (w11A) at (216bp,236bp) [draw,rectangle] {$\omega^{A}_{S_{1,1}}$};
  \node (w22B) at (488bp,197bp) [draw,rectangle] {$\omega^{B}_{S_{2,2}}$};
  \node (others2) at (284bp,206bp) [draw,draw=none] {$...$};
  \node (w34A) at (216bp,141bp) [draw,rectangle] {$\omega^{A}_{S_{3,4}}$};
  \node (U2c) at (420bp,66bp) [draw,double,double distance=1pt,circle] {$U^c_2$};
  \node (sumw33Bw44C) at (284bp,119bp) [draw,circle] {$$};
  \node (U3b) at (556bp,129bp) [draw,double,double distance=1pt,circle] {$U^b_3$};
  \node (U1a) at (284bp,244bp) [draw,double,double distance=1pt,circle] {$U^a_1$};
  \node (U1b) at (420bp,16bp) [draw,double,double distance=1pt,circle] {$U^b_1$};
  \node (w22C) at (352bp,63bp) [draw,rectangle] {$\omega^{C}_{S_{2,2}}$};
  \node (w44C) at (352bp,156bp) [draw,rectangle] {$\omega^{C}_{S_{4,4}}$};
  \node (w33B) at (488bp,129bp) [draw,rectangle] {$\omega^{B}_{S_{3,3}}$};
  \node (others) at (158bp,112bp) [draw,draw=none] {$...$};
  \node (sumw24Aw11A) at (158bp,184bp) [draw,circle] {$$};
  \node (wgrammar) at (57bp,141bp) [draw,rectangle] {$\omega^{A_0}_{S_{1,4}}=W\_Grammar^{var}$};
  \node (w23B) at (352bp,195bp) [draw,rectangle] {$\omega^{B}_{S_{2,3}}$};
  \node (w12A) at (216bp,90bp) [draw,rectangle] {$\omega^{A}_{S_{1,2}}$};
  \draw [->] (w22C) ..controls (373.49bp,63.931bp) and (383.12bp,64.369bp)  .. (U2c);
  \draw [->] (sumw22Bw33B) ..controls (428.94bp,189.13bp) and (433.79bp,184.29bp)  .. (438bp,180bp) .. controls (448.85bp,168.93bp) and (460.92bp,156.37bp)  .. (w33B);
  \draw [->] (w12A) ..controls (240.41bp,90bp) and (255bp,90bp)  .. (others4);
  \draw [->] (sumw33Bw44C) ..controls (317.29bp,120.6bp) and (417.61bp,125.56bp)  .. (w33B);
  \draw [->] (w24A) ..controls (240.52bp,191.84bp) and (255.31bp,196.77bp)  .. (others2);
  \draw [->] (sumw33Bw44C) ..controls (299.09bp,126.9bp) and (315.54bp,136.12bp)  .. (w44C);
  \draw [->] (wgrammar) ..controls (123.77bp,141bp) and (133.05bp,141bp)  .. (sumw12Aw34A);
  \draw [->] (w23B) ..controls (376.52bp,195.71bp) and (391.31bp,196.16bp)  .. (sumw22Bw33B);
  \draw [->] (sumw23Bw44C) ..controls (299.45bp,180.94bp) and (314.94bp,185.17bp)  .. (w23B);
  \draw [->] (sumw24Aw11A) ..controls (171.85bp,184bp) and (182.13bp,184bp)  .. (w24A);
  \draw [->] (w34A) ..controls (240.41bp,143.48bp) and (255bp,145.03bp)  .. (others3);
  \draw [->] (sumw24Aw11A) ..controls (171bp,195.14bp) and (185.08bp,208.22bp)  .. (w11A);
  \draw [->] (w33B) ..controls (509.49bp,129bp) and (519.12bp,129bp)  .. (U3b);
  \draw [->] (w24A) ..controls (240.52bp,181.51bp) and (255.31bp,179.94bp)  .. (sumw23Bw44C);
  \draw [->] (wgrammar) ..controls (100.49bp,159.46bp) and (126.14bp,170.59bp)  .. (sumw24Aw11A);
  \draw [->] (sumw23Bw44C) ..controls (299.45bp,172.4bp) and (314.94bp,167.47bp)  .. (w44C);
  \draw [->] (sumw12Aw34A) ..controls (171.85bp,141bp) and (182.13bp,141bp)  .. (w34A);
  \draw [->] (wgrammar) ..controls (109.34bp,125.96bp) and (127.74bp,120.57bp)  .. (others);
  \draw [->] (w22B) ..controls (509.49bp,197bp) and (519.12bp,197bp)  .. (U2b);
  \draw [->] (sumw22Bw33B) ..controls (435.37bp,197bp) and (450.63bp,197bp)  .. (w22B);
  \draw [->] (w34A) ..controls (240.75bp,133.08bp) and (255.94bp,128.02bp)  .. (sumw33Bw44C);
  \draw [->] (sumw11Bw22C) ..controls (299.37bp,61.435bp) and (314.63bp,61.898bp)  .. (w22C);
  \draw [->] (w44C) ..controls (373.49bp,156bp) and (383.12bp,156bp)  .. (U4c);
  \draw [->] (w11B) ..controls (373.49bp,18.069bp) and (383.12bp,17.631bp)  .. (U1b);
  \draw [->] (w11A) ..controls (237.38bp,238.47bp) and (246.84bp,239.62bp)  .. (U1a);
  \draw [->] (sumw12Aw34A) ..controls (170.91bp,130.15bp) and (184.75bp,117.55bp)  .. (w12A);
  \draw [->] (sumw11Bw22C) ..controls (298.55bp,52.374bp) and (315.54bp,41.566bp)  .. (w11B);
  \draw [->] (w12A) ..controls (240.97bp,79.467bp) and (256.58bp,72.607bp)  .. (sumw11Bw22C);
\end{tikzpicture}
}
\caption{The DAG corresponding to \gc{W\_Grammar}$^{var}$ \label{grammarDAG} }
\end{figure}
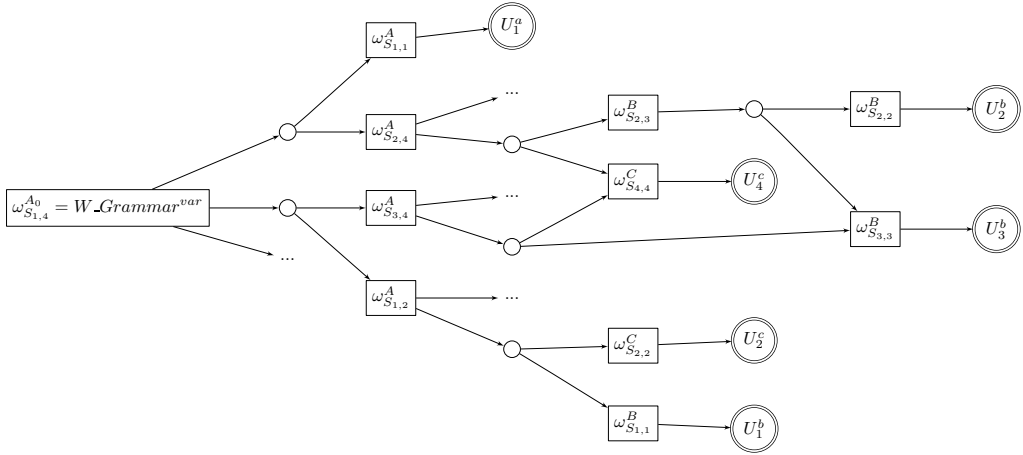

Note that Theorem \ref{grammar} also gives a proof that
\gc{W\_Regular}$^{var}$ is tractable projection-safe. Indeed, a finite
state automaton, defining a regular language, can be transformed into
a grammar with the number of non-terminals and production rules
polynomial in the number of states in the automaton. Then,
\gc{W\_Among}$^{var}$ is also tractable projection-safe since the
tuples satisfying an \gc{Among} global constraint can be represented
using a compact finite state counting automaton~\cite{beldiceanu2005}.

\begin{algorithm}[htb]

  \caption{Finding the minimum of \gc{W\_Grammar}$^{var}$}
  \label{algo:grammar}
  \SetKwBlock{Start}{}{}
  \SetKw{Func}{Function}
  \SetKw{Proc}{Procedure}    
  
  \Func{} \opSty{GrammarMin}($S,G$)
  \Start{      
    \lnl{grammarL1}
    \For{$i := 1$ \KwTo $n$}{
      \lnl{grammarL2}
      \For{$c \in \Sigma$}{
        \lnl{grammarL6}	
        $u[i,c] := \min\{U^{c}_i\}$\;           	           	           
      }
    }
			  \lnl{grammarL10}
    \For{$i := 1$ \KwTo $n$}{
      \lnl{grammarL11}
      \lForEach{$A \in N$}{$f[i,i,A] := \top$\;}
      \lnl{grammarL11.5}
      \ForEach{$(A, a)$ such that $(A \mapsto a) \in P$}{
        \lnl{grammarL12}
        $f[i,i,A] = \min\{f[i,i,A], u[i,a]\}$ \;
      }
     }
			\lnl{grammarL12.5}
    \Return{\opSty{GrammarPartialMin}($S,G,1$)}\;        
    }		
		\Func{} \opSty{GrammarPartialMin}($S,G,start$)
		\Start{        
				\lnl{grammarL13}
				\For{$len := 2$ \KwTo $n$}{
					\lnl{grammarL14}
					\For{$i := start$ \KwTo $n-len+1$}{
					        \lnl{grammarL15}
									$j := i+len-1$ \;            
												\lnl{grammarL15.5}
												\lForEach{$A \in N$}{$f[i,j,A] := \top$\;}
												\lnl{grammarL16}
												\ForEach{$(A, A_1, A_2)$ such that $(A \mapsto A_1A_2) \in P$}{
																	\lnl{grammarL17}
																	\For{$k := i$ \KwTo $j-1$} {
																					\lnl{grammarL17.5}
																					$f[i,j,A] := \min\{f[i,j,A], f[i,k,A_1] \oplus f[k+1,j,A_2]\}$ \;
																	}
								  }
					}
				}					
				\lnl{grammarL18}
        \Return{$f[1,n,A_0]$}\;          
		}
			
\end{algorithm}

\noindent Function \opSty{GrammarMin} in Algorithm~\ref{algo:grammar} computes the minimum of \gc{W\_Grammar}$^{var}(S,G)$. We first compute the minimum of the unary cost functions in the table $u[i,c]$ at lines \ref{grammarL1} to \ref{grammarL6}. The table $f$ of size $n \times n \times |N|$ is filled up in two separate for-loops: one at line \ref{grammarL10} according to the equation \ref{eq:geq1}, and another one at line \ref{grammarL17} for the equation \ref{eq:geq2}. The result is returned at line \ref{grammarL12.5}.

\begin{theorem}
  \label{grammarMinTime}
  The function \opSty{GrammarMin} in Algorithm~\ref{algo:grammar} computes the minimum of the global cost function \gc{W\_Grammar}$^{var}(S,G=(\Sigma,N,P,A_0))$ in time $O(nd \cdot |\Sigma| + n^3 \cdot |P|)$,
where $n = |S|$ and $d$ is the maximum domain size. 
\end{theorem}
\begin{proof}
  Lines \ref{grammarL1} to \ref{grammarL6} take $O(nd \cdot |\Sigma|)$.  The first
  for-loop at lines \ref{grammarL10} to \ref{grammarL12} requires $O(n \cdot |P|)$, while the second one at lines \ref{grammarL13} to
  \ref{grammarL17} requires $O(n^3 \cdot |P|)$. The overall time  complexity is $O(nd\cdot |\Sigma| + n \cdot |P| + n^3 \cdot |P|) = O(nd\cdot |\Sigma| + n^3
  \cdot |P|)$.
\end{proof}

As for incrementality, Algorithm~\ref{algo:grammarIncremental} gives the pre-processing performed on top of Algorithm~\ref{algo:grammar}, based on the weighted CYK propagator used in Katsirelos \etal~\cite{KNW2011}.
We compute the table $f$ at line \ref{grammarIncL1} using Algorithm~\ref{algo:grammar}.
Then we compute the table $F$ at lines \ref{grammarIncL6} to \ref{grammarIncL14} using the top-down approach.  For each production $A \mapsto A_1A_2$, lines \ref{grammarIncL12} and \ref{grammarIncL14} compute the maximum possible costs from their neighbors. An additional table $marked[i,j,A]$ is used to record whether the symbol $A$ is accessible when deriving sub-strings at positions $i$ to $j$ in $G$. Each time we need to compute the minimum for $x_i = v$, we just return $\min\{U^{\alpha}_i(v) \ominus F[i,i,A] \oplus f[0,n - 1,A_0]  \mid (A \mapsto v) \in P \wedge marked[i,i,A]\}$, or $\top$ if such production does not exist.

\begin{algorithm}[htb]

  \caption{Pre-computation for \gc{W\_Grammar}$^{var}$}
  \label{algo:grammarIncremental}
  
  \SetKwBlock{Start}{}{}
  \SetKw{Func}{Function}
  \SetKw{Proc}{Procedure}    
  \SetKw{KwDownTo}{down to}
	\SetKw{False}{false}
\SetKw{True}{true}
    
		\Proc{} \opSty{GrammarPreCompute}($S,G$)
  \Start{     
	\lnl{grammarIncL1}
    $F[1,n,A_0]$ := \opSty{GrammarMin}($S,G$)\;
		\lnl{grammarIncL2}
		\For{$i := 1$ \KwTo $n$}{      
		\lnl{grammarIncL3}
      \For{$j := i$ \KwTo $n$}{ 
			\lnl{grammarIncL3.1}
						\ForEach{$A \in N$}{  
						\lnl{grammarIncL3.2}
						      $F[i,j,A] := -\top$\; 
									\lnl{grammarIncL4}
									$marked[i,j,A]$ := \False\;
						 }
				}
    }		
		\lnl{grammarIncL5}
    $marked[1,n,A_0]$ := \True\;		
		\lnl{grammarIncL6}
    \For{$len := n$ \KwDownTo $2$}{		
		\lnl{grammarIncL7}
      \For{$i := 1$ \KwTo $n-len+1$}{			
			\lnl{grammarIncL8}			
          $j := i+len-1$ \;					
					\lnl{grammarIncL9}
          \ForEach{$(A, A_1, A_2)$ such that $(A \mapsto A_1A_2) \in P \wedge marked[i,j,A]$} {					          
					\lnl{grammarIncL10}
               \For{$k := i$ \KwTo $j$}{	
							\lnl{grammarIncL11}
               		$marked[i,k,A_1]$ := \True\;
									\lnl{grammarIncL12}
                    $F[i,k,A_1]$ := \opSty{max}($F[i,k,A_1], F[i,j,A] \ominus f[k + 1,j,A_2]$)\; 
                  \lnl{grammarIncL13}  
                    $marked[k + 1,j,A_2]$ := \True\;
										\lnl{grammarIncL14}  
                    $F[k + 1,j,A_2]$ := \opSty{max}($F[k + 1,j,A_2], F[i,j,A] \ominus f[i,k,A_1]$)\;
               }                 
     }
		}
		}
  }
\end{algorithm}

\begin{corollary}\label{grammarTime} 
  Given $\cf_S = \mbox{\gc{W\_Grammar}}^{var}(S,G=(\Sigma,N,P,A_0))$. 
  Enforcing GAC* on a variable $x_i \in S$ with respect to \gc{W\_Grammar} requires 
	$O(nd \cdot |\Sigma| + n^3 \cdot |P|)$
time, where $n = |S|$ and $d$ is the maximum domain size.    
\end{corollary}
\begin{proof}  
	Using a similar argument to that in the proof of Theorem~\ref{grammarMinTime}, Algorithm~\ref{algo:grammarIncremental} 
	 requires $O(nd \cdot |\Sigma| + n^3 \cdot |P|)$ time.
	The result follows directly from Corollary~\ref{Thm:pDAGProject} and Theorem~\ref{grammarMinTime}.     
\end{proof}


Algorithm~\ref{algo:GrammarProject} shows how projection is performed between $\gc{W\_Grammar}^{var}$ and $\cf_p$, and how incrementally can be achieved.
Line \ref{gprjln3} modifies the leaves $U^{c}_p$ for each $c \in \Sigma$, while lines \ref{gprjln4} and \ref{gprjln5} 
update the corresponding entries in the tables $u$ and $f$ respectively. 
The change is propagated up in $f$ at line \ref{gprjln6}, corresponding to derivation of sub-strings with positions from $p$ to the end in $G$.

\begin{algorithm}[ht]
\caption{Projection from $\mbox{\gc{W\_Grammar}}^{var}(S,G=(\Sigma,N,P,A_0))$ \label{algo:GrammarProject} }
\SetKw{Proc}{Procedure}
\SetKw{Func}{Function}
\SetKw{False}{false}
\SetKw{True}{true}
\SetKwBlock{Start}{}{}
  \Proc{} \procCall{GrammarProject}($S$, $\{x_p\}$, $(v)$, $\alpha$)
  \Start{  
		\lnl{gprjln1}
			 $\cf_{p}(v)$ := $\cf_{p}(v) \oplus \alpha$ \;
			\lnl{gprjln2}
			 \For{$c \in \Sigma$}{        
			\lnl{gprjln3}
        $U^{c}_p(v) := U^{c}_p(v) \ominus \alpha$\;           	           	           
				\lnl{gprjln4}
				$u[p,c] := \min\{U^{c}_p\}$\;  
      }
			  \lnl{gprjln5}
      \lForEach{$(A, a)$ such that $(A \mapsto a) \in P$}{        
        $f[p,p,A] = \min\{f[p,p,A], u[i,a]\}$ \;
      }			    	
	    \lnl{gprjln6}
			\procCall{GrammarPartialMin}($S, G,p$)\;     		
			\lnl{gprjln7}
			\procCall{GrammarPreCompute}($S, G$)\;     		
  }	
\end{algorithm}

\subsection{The \gc{W\_Among}$^{var}$ Cost Function}

\gc{W\_Among}$^{var}$ is the cost function variant of the softened version of \gc{Among} using the corresponding variable-based violation measure~\cite{ACNA2008}.

\begin{definition}\cite{ACNA2008} 
  Given a set of values $V$, a lower bound $lb$ and an upper bound  $ub$ such that $0 \leq lb \leq ub \leq |S|$.  \emph{\gc{W\_Among}$^{var}(S, lb, ub, V)$} returns $max\{0,
  lb-t(\ell,V), t(\ell,V)-ub\}$, where $t(\ell, V) = |\{i \mid
  \ell[x_i] \in V\}|$ for each tuple $\ell \in \L(S)$.
\end{definition}

\begin{example}
  \label{wamongex1}
  Consider $S = \{x_1,x_2,x_3\}$, where $D(x_1) = D(x_2) = D(x_3) = \{a,b,c,d\}$.  
  The cost returned by \gc{W\_Among}$^{var}(S, 1, 2, \{a,b\})(\ell)$ is:
  \begin{itemize}
  \item{} $0$ if $\ell = (a,b,c,d)$;
  \item{} $1$ if $\ell = (c,d,c,d)$;
  \item{} $2$ if $\ell = (a,b,a,b)$;
  \end{itemize}	 
\end{example}

\begin{theorem}
  \label{among}
  \gc{W\_Among}$^{var}(S,lb, ub, V)$ is polynomially DAG-filterable and  thus tractable projection-safe.
\end{theorem}
\begin{proof}
We first define two base cases $U^V_i$ and $\overline{U}^V_i$. The  function $U^V_i$ is the cost function on $x_i$ defined as:  \[
  U^V_i(v) = \left\{ \begin{array}{ll}
      0, & \mbox{ if $v \in V$;} \\
      1, & \mbox{ otherwise} \\
    \end{array} \right.
  \]
  \noindent
  and $\overline{U}^V_i(v) = 1-U^V_i$ is its negation.
  
We construct \gc{W\_Among}$^{var}$ based on $U^V_i$ and $\overline{U}^V_i$.  Define $\smallcf^j_{S_i} =
  \mbox{\gc{W\_Among}}^{var}(S_i,j, j, V)$, where $S_i = \{x_1,
  \ldots, x_i\} \subseteq S$. By definition, $S_i = S_{i-1} \cup
  \{x_i\}$ and $S_0 = \varnothing$. \gc{W\_Among}$^{var}(S, lb, ub,
  V)$ can be represented by the sub-cost functions $\smallcf^j_{S_i}$  as: \[
  \mbox{\gc{W\_Among}}^{var}(S, lb, ub, V)(\ell) = \min_{lb \leq j \leq
    ub}\{\smallcf^j_{S_{n}}(\ell)\}
  \]
  \noindent
  and each $\smallcf^j_{S_i}$ can be represented as:
\[\begin{array}{llll}
    \smallcf^j_{S_0}(\ell) &=& j \\
    \smallcf^0_{S_i}(\ell) &=& \smallcf^0_{S_{i-1}}(\ell[S_{i-1}]) \oplus \overline{U}^V_i(\ell[x_i]) & \mbox{ for $i > 0$}\\
    \smallcf^j_{S_i}(\ell) &=& \min \left\{ \begin{array}{l}
        \smallcf^{j-1}_{S_{i-1}}(\ell[S_{i-1}]) \oplus U^V_i(\ell[x_i]) \\
        \smallcf^{j}_{S_{i-1}}(\ell[S_{i-1}]) \oplus \overline{U}^{V}_i(\ell[x_i]) \\
    	\end{array}\right. &  \mbox{ for $j > 0$ and $i>0$}\\ 
    \end{array}
    \]
    The equations form a DAG $(V,E)$, as illustrated in Figure~\ref{amongDAG} using Example \ref{wamongex1}. In Figure \ref{amongDAG}, leaves are indicated by double-lined  circles. Vertices with $\min$ or $\oplus$ aggregators are    indicated by rectangles and circles respectively. The DAG has a  number of vertices $|V| = O(ub \cdot n) = O(n^2)$ and uses only using $\oplus$ or $\min$ as aggregations with proper scopes properties. 
		By propositions~\ref{thm:oplussafe} and~\ref{thm:minsafe}, \gc{W\_Among}$^{var}$ is safely DAG-filterable. 		
		Moreover, each leaf of the DAG is a  unary cost functions. By Theorem \ref{thm:pd-safe}, the result  follows 
	
    \begin{figure}[htp]
      \centering \psfrag{wamongvar}[cc][cc]{\tiny{$W\_Among^{var}$}}
      \psfrag{w2s3}[cc][cc]{\tiny{$\omega^2_{S_3}$}}
      \psfrag{w1s3}[cc][cc]{\tiny{$\omega^1_{S_3}$}}
      \psfrag{w1s2}[cc][cc]{\tiny{$\omega^1_{S_2}$}}
      \psfrag{w0s2}[cc][cc]{\tiny{$\omega^0_{S_2}$}}
      \psfrag{w1s1}[cc][cc]{\tiny{$\omega^1_{S_1}$}}
      \psfrag{w1s0}[cc][cc]{\tiny{$\omega^1_{S_0}$}}
      \psfrag{w0s1}[cc][cc]{\tiny{$\omega^0_{S_1}$}}
      \psfrag{w0s0}[cc][cc]{\tiny{$\omega^0_{S_0}$}}
      \psfrag{uv3}[cc][cc]{\tiny{$U^V_3$}}
      \psfrag{baruv3}[cc][cc]{\tiny{$\overline{U}^V_3$}}
      \psfrag{uv2}[cc][cc]{\tiny{$U^V_2$}}
      \psfrag{baruv2}[cc][cc]{\tiny{$\overline{U}^V_2$}}
      \psfrag{uv1}[cc][cc]{\tiny{$U^V_1$}}
      \psfrag{baruv1}[cc][cc]{\tiny{$\overline{U}^V_1$}}
	\includegraphics[width=0.7\textwidth]{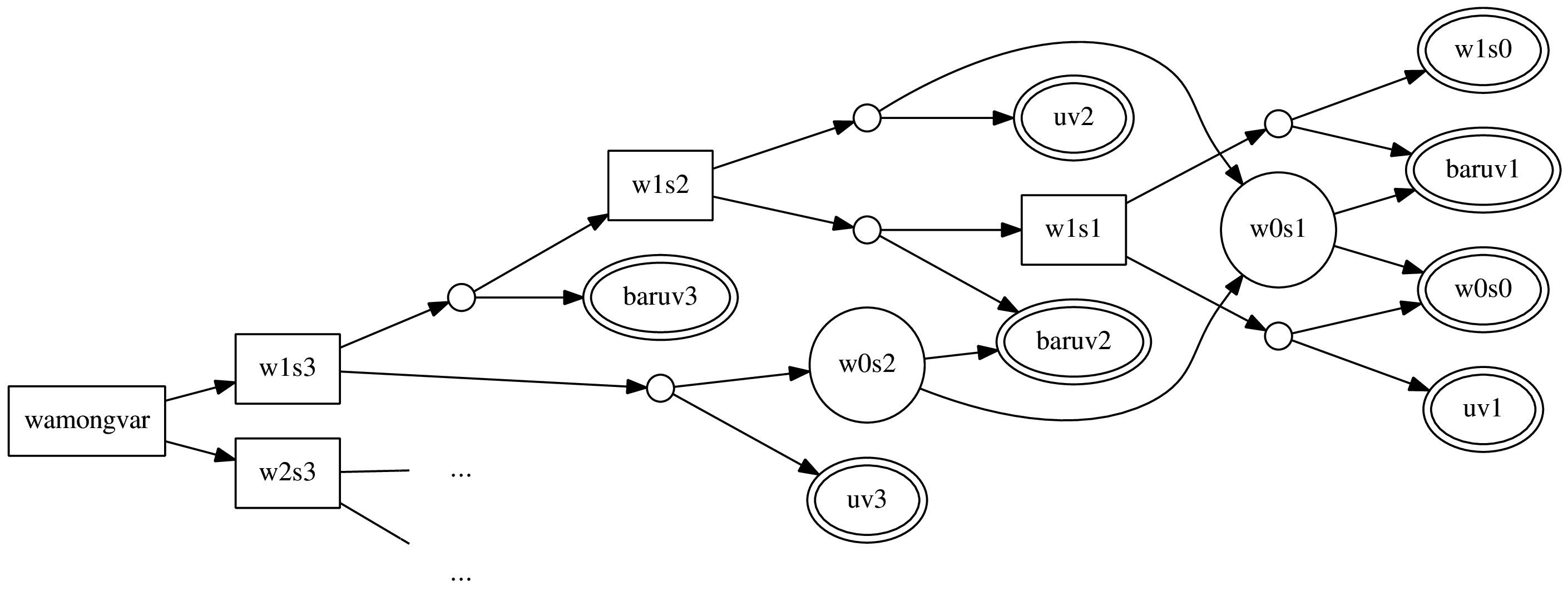}
	\caption{The DAG-filter corresponding to \gc{W\_Among}$^{var}$ \label{amongDAG} }
	\end{figure}
\end{proof}

Function \opSty{AmongMin} in Algorithm \ref{algo:among} computes the minimum of the \gc{W\_Among}$^{var}(S, lb, ub, V)$ cost function according to Theorem \ref{among}. Lines \ref{amongL1} to \ref{amongL7} compute the minimum costs returned by each additional unary cost functions and store at the arrays $\overline{u}$ and $u$. Lines \ref{amongL8} to \ref{amongL13} builds up the results by filling the table $f$ of size $n \times ub$ according to the formulation stated in the proof of Theorem~\ref{among}, and return
the result at line \ref{amongL14}. The complexity is stated in Theorem \ref{amongMinTime} as follows.

\begin{algorithm}[htb]
  \caption{Finding the minimum of \gc{W\_Among}$^{var}$}
  \label{algo:among}
  \SetKwBlock{Start}{}{}
  \SetKw{Func}{Function}
  \SetKw{Proc}{Procedure}    
  
  \Func{} \opSty{AmongMin}($S,lb, ub, V$)
  \Start{	 
    \lnl{amongL1}						    		    		
    \For{$i = 1$ \KwTo $n$}{    		  
      \lnl{amongL2}	
      $\overline{u}[i] := \min\{\overline{U}^V_i\}$\;    		  
      \lnl{amongL7}	
      $u[i] := \min\{U^V_i\}$\;    		  
    }    
    \lnl{amongL8}						
    \lFor{$j = 0$ \KwTo $ub$}{
      $f[0,j] := j$ \;
    }
    \lnl{amongL10}						
    \For{$i = 1$ \KwTo $n$}{
      \lnl{amongL11}						
      $f[i,0] := f[i-1,0] \oplus \overline{u}[i]$ \;
      \lnl{amongL12}						
      \For{$j = 1$ \KwTo $ub$}{
        \lnl{amongL13}						 	
        $f[i,j] := \min\{f[i-1,j-1] \oplus u[i],
        f[i-1,j] \oplus \overline{u}[i]\}$ \;
      }
    }
    \lnl{amongL14}						 	
    \Return{$\min_{lb \leq j \leq ub}\{f[n,j]\}$} \;
  }
\end{algorithm}

\begin{theorem}\label{amongMinTime}
  Function \opSty{AmongMin} in Algorithm \ref{algo:among} computes the  minimum of \gc{W\_Among}$^{var}(S,lb, ub, V)$ and requires  $O(n(n+d))$, where $n = |S|$ and $d$ is the maximum domain size.
\end{theorem}
\begin{proof}
  Lines \ref{amongL1} to \ref{amongL7} in Algorithm \ref{algo:among}  take $O(nd)$. Lines \ref{amongL8} to \ref{amongL14} requires $O(n
  \cdot ub)$. Since $ub$ is bounded by $n$, the result follows.
\end{proof}

\begin{corollary}  \label{amongTime}
Given $\cf_S = \mbox{\gc{W\_Among}}^{var}(S,lb, ub, V)$. Enforcing GAC* on a variable $x_i \in S$ with respect to $\cf_S$ requires $O(nd(n+d))$.
\end{corollary}
\begin{proof}  
	Follow directly from  Corollary~\ref{Thm:pDAGProject} and Theorem~\ref{amongMinTime} .
  \end{proof}

\subsection{The \gc{W\_Regular}$^{var}$ Cost Function}

\gc{W\_Regular}$^{var}$ is the cost function variant of the softened version of the hard constraint \gc{Regular}~\cite{GP2004} based on a regular language.

\begin{definition}
  A regular language $L(M)$ is represented by \emph{a deterministic finite state  automaton} (DFA) $M=(Q,\Sigma, \delta, q_0, F)$, where:
  \begin{itemize}
  \item{} $Q$ is a set of states;
  \item{} $\Sigma$ is a set of characters;
  \item{} The transition function $\delta$ is defined as: $\delta: Q
    \times \Sigma \mapsto Q$;
  \item{} $q_0 \in Q$ is the initial state, and;
  \item{} $F \subseteq Q$ is the set of final states.
\end{itemize}
A string $\tau$ lies in $L(M)$, written as $\tau \in L(M)$, iff $\tau$ can lead the transitions from $q_0$ to $q_f \in F$ in $M$
\end{definition}

The hard constraint \gc{Regular}$(S,M)$ authorizes a tuple $\ell \in
\L(S)$ if $\tau_{\ell} \in L(M)$, where $\tau_{\ell}$ is the string formed from $\ell$ \cite{GP2004}. A \gc{W\_Regular}$^{var}$ cost function is defined as follows, derived from the variable-based violation measure given by \CiteInLine{NMT2004} and \CiteInLine{WGL2006}.

\begin{definition} \cite{NMT2004,WGL2006}\label{def:regular}
  Given a DFA $M=(Q,\Sigma, \delta, q_0,
  F)$. The cost function \emph{\gc{W\_Regular}$^{var}(S,M)$} returns  $\min\{H(\tau_{\ell},\tau_i) \mid \tau_i \in L(M)\}$ for each tuple  $\ell \in \L(S)$, where $H(\tau_1, \tau_2)$ returns the Hamming  distance between $\tau_1$ and $\tau_2$.
\end{definition}

\begin{figure}[htp]
\centering
\psfrag{a}{\small{$a$}}
\psfrag{b}{\small{$b$}}
\psfrag{q0}{\small{$q_0$}}
\psfrag{q1}{\small{$q_1$}}
\psfrag{q2}{\small{$q_2$}}
\includegraphics[width=0.3\textwidth]{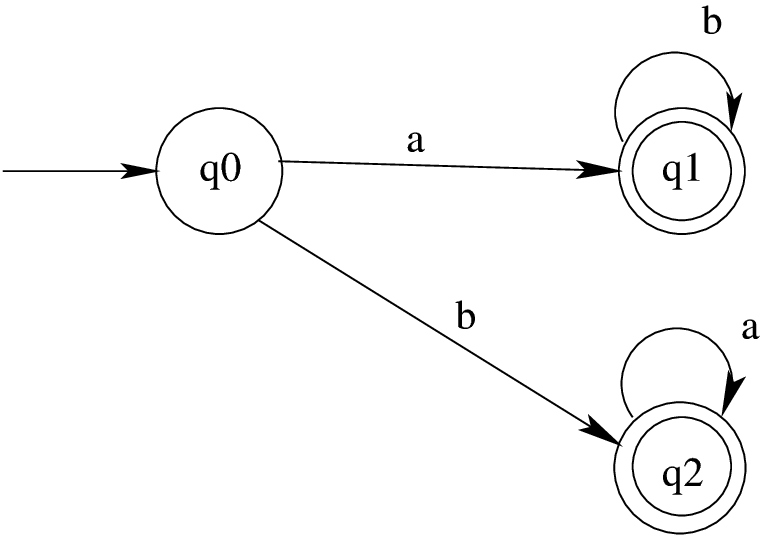}
\caption{The graphical representation of a DFA. \label{DFA}}
\end{figure}

\begin{example}
  \label{wregularex1}
  Consider $S = \{x_1,x_2,x_3\}$, where $D(x_1) = \{a\}$ and $D(x_2) =
  D(x_3) = \{a,b\}$.  Given the DFA $M$ shown in Figure \ref{DFA}. The  cost returned by \gc{W\_Regular}$^{var}(S, M)(\ell)$ is $1$ if $\ell
  = (a,b,a)$. The assignment of $x_3$ need changed in the tuple  $(a,b,a)$ so that $L(M)$ accepts the corresponding string $aba$.
\end{example}

\begin{theorem}
  \label{regular}
  \gc{W\_Regular}$^{var}(S,M)$ is polynomially DAG-filterable and thus  tractable projection-safe.
\end{theorem}
\begin{proof}

  \gc{W\_Regular}$^{var}$ can be represented as a  DAG~\cite{NMT2004,WGL2006,LL2012asa}, which directly gives a  polynomial DAG-filter. In the following, we reuse the symbols $S_i$ and $U^V_i(v)$, which are defined in the proof of  Theorem \ref{among}.
	
  Define $\smallcf^j_{S_i}$ to be the cost function  \gc{W\_Regular}$^{var}(S_i,M_j)$, where $M_j$ is the DFA $(Q,
  \Sigma, \delta, q_0, \{q_j\}))$. \gc{W\_Regular}$^{var}(S,M)$ can be  represented as: \[
  \mbox{\gc{W\_Regular}}^{var}(S,M)(\ell) = \min_{q_j \in
    F}\{\smallcf^j_{S_n}(\ell)\}
  \]
  The base cases $\smallcf^j_{S_0}$ are defined as:	 \[
  \smallcf^j_{S_0}(\ell) = \left\{ \begin{array}{ll}
      0, & \mbox{ if $j=0$} \\
      \top, & \mbox{ otherwise} \\			
    \end{array}\right. 
  \]
  Other sub-cost functions $\smallcf^j_{S_i}$, where $i > 0$, are
  defined as follows. Define $\delta_{q_j} = \{(q_i,v) \mid
  \delta(q_i,v) = q_j\}$. If $\delta_{q_j} = \varnothing$, no  transition can lead to $q_j$. \[
  \smallcf^j_{S_i}(\ell) = \left\{ \begin{array}{ll}
      \displaystyle \min_{\delta(q_k,v) = q_j}\{\smallcf^k_{S_{i-1}}(\ell[S_{i-1}]) \oplus U^{\{v\}}_i(\ell[x_i])\}, & \mbox{ if $\delta_{q_j}  \neq \varnothing$} \\
      \top, & \mbox{ otherwise} \\
    \end{array}\right. 
  \]
  \ignore{
    \[\begin{array}{lll}
      \smallcf^j_{S_0}(\ell) &=& \left\{ \begin{array}{ll}
          0, & \mbox{ if $j=0$} \\
          \top, & \mbox{ otherwise} \\			
        \end{array}\right. \\	
      \smallcf^j_{S_i}(\ell) &=& \left\{ \begin{array}{ll}
          \top, & \mbox{ if there does not exist $q_k$ and $v$ such that $\delta(q_k,v) = q_j$} \\		
          \min & 
          \{\smallcf^k_{S_{i-1}}(\ell[S_{i-1}]) \oplus U^{\{v\}}_i(\ell[x_i]) \mid \delta(q_k,v) = q_j\}, \mbox{ otherwise} \\
        \end{array}\right. \\		
    \end{array}
    \]
  }
		
  The corresponding DAG is shown in Figure \ref{regularDAG}, based on  Example \ref{wregularex1}. Again, the same notation as in Figure~\ref{amongDAG} is used. The DAG has a number of vertices $|V| =
  O(|S| \cdot |Q|)$, and its leaves are unary functions. The DAG-filter is thus polynomial, and, 
	by Theorem \ref{thm:pd-safe}, the result follows.
\end{proof}
	
Together with Theorem $6.11$ by \CiteInLine{LL2012asa} which showed that this global cost function was flow-based tractable, this result gives another reasoning for its tractability. Theorem \ref{regular} also gives another proof of the tractable projection-safety of \gc{W\_Among}$^{var}$ \cite{ACNA2008}. The \gc{Among} global constraint \cite{BC1994} can be modeled by the \gc{Regular} global constraint \cite{GP2004}, which the size of the corresponding DFA is polynomial in the size of the scope.

\begin{figure}[htp]
  \centering
  \psfrag{wregularvar}[cc][cc]{\tiny{$W\_Regular^{var}$}}
  \psfrag{w2s3}[cc][cc]{\tiny{$\omega^2_{S_3}$}}
  \psfrag{w1s3}[cc][cc]{\tiny{$\omega^1_{S_3}$}}
  \psfrag{w1s2}[cc][cc]{\tiny{$\omega^1_{S_2}$}}
  \psfrag{w0s2}[cc][cc]{\tiny{$\omega^0_{S_2}$}}
  \psfrag{w1s1}[cc][cc]{\tiny{$\omega^1_{S_1}$}}
  \psfrag{w1s0}[cc][cc]{\tiny{$\omega^1_{S_0}$}}
  \psfrag{w0s1}[cc][cc]{\tiny{$\omega^0_{S_1}$}}
  \psfrag{w0s0}[cc][cc]{\tiny{$\omega^0_{S_0}$}}
  \psfrag{ua3}[cc][cc]{\tiny{$U^{\{a\}}_3$}}
  \psfrag{ub3}[cc][cc]{\tiny{$U^{\{b\}}_3$}}
  \psfrag{ua2}[cc][cc]{\tiny{$U^{\{a\}}_2$}}
  \psfrag{ub2}[cc][cc]{\tiny{$U^{\{b\}}_2$}}
  \psfrag{ua1}[cc][cc]{\tiny{$U^{\{a\}}_1$}}
  \psfrag{ub1}[cc][cc]{\tiny{$U^{\{b\}}_1$}}		
  \includegraphics[width=0.7\textwidth]{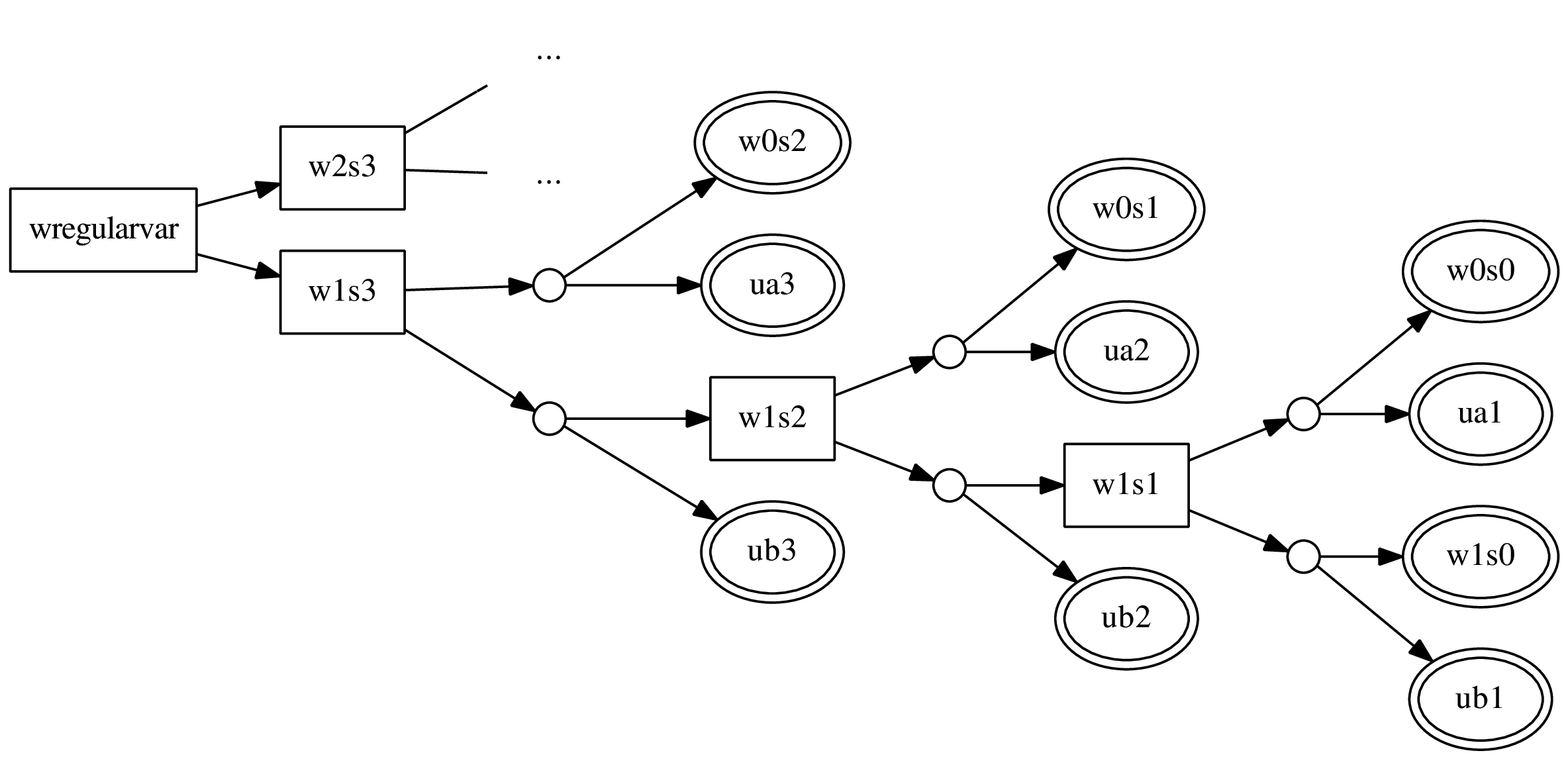}
  \caption{The DAG-filter corresponding to \gc{W\_Regular}$^{var}$ \label{regularDAG} }
\end{figure}

Function \opSty{RegularMin} in Algorithm~\ref{algo:regular} computes the minimum of a \gc{W\_Regular}$^{var}(S,M)$ cost function. The algorithm first initializes the table $u$ by assigning $\min\{U_i^c\}$ to $u[i,c]$ at lines \ref{regularL1} and \ref{regularL2}.  Lines \ref{regularL7} to \ref{regularL12} fills up the table $f$ of the size $n \times |Q|$.  Each entry $f[i,j]$ in $f$ holds the value
$\min\{\smallcf^j_{S_i}\}$, which is computed according to the formulation stated in Theorem \ref{regular}, and returns the result at
line \ref{regularL13}.

The time complexity of \opSty{RegularMin} in Algorithm~\ref{algo:regular} can be stated as follows.
\begin{theorem}\label{regularMinTime}
  Function \opSty{RegularMin} in Algorithm \ref{algo:regular} computes  the minimum of \gc{W\_Regular}$^{var}(S,M)$, and it requires $O(nd
  \cdot |Q|)$, where $n$ and $d$ are defined in Theorem  \ref{amongMinTime}.
\end{theorem}
\begin{proof}
  Lines \ref{regularL1} and \ref{regularL2} in  Algorithm~\ref{algo:regular} requires $O(n \cdot |\Sigma|)$.   Because $|\Sigma|$ is bounded by $d$, the time complexity is $O(n
  \cdot d)$. Lines \ref{regularL7} to \ref{regularL12} require $O(nd
  \cdot |Q|)$, according to the table size. Line \ref{regularL13}  requires $O(|Q|)$. The overall time complexity is $O(nd + nd \cdot
  |Q| + |Q|) = O(nd \cdot |Q|)$.
\end{proof}

\begin{algorithm}
  \caption{Finding the minimum of \gc{W\_Regular}$^{var}$}
  \label{algo:regular}
  \SetKwBlock{Start}{}{}
  \SetKw{Func}{Function}
  \SetKw{Proc}{Procedure}    
  
  \Func{} \opSty{RegularMin}($S,M$)
  \Start{	      
    \lnl{regularL1}			
    \For{$i := 1$ \KwTo $n$}{
      \lnl{regularL2}			
      \lFor{$c \in \Sigma$}{                	
        $u[i,c] := \min\{U_i^{\{c\}}\}$ \;           
      }
    }
    \lnl{regularL7}			
    $f[0,0] := 0$ \;
    \lnl{regularL8}			
    \lFor{$q_j \in Q \setminus \{q_0\}$}{
      $f[0,j] := \top$ \;
    }
    \lnl{regularL9}			
    \For{$i := 1$ \KwTo $n$}{
      \lnl{regularL10}
      $f[i,j] := \top$\;
      \lnl{regularL11}	
      \ForEach{$(q_k,q_j,c)$ such that $\delta(q_k,c) = q_j$}{
        \lnl{regularL12}	        
        $f[i,j] = \min\{f[i,j], f[i,k] \oplus u[i,c]$\}\;        
      }
    }
    \lnl{regularL13}			
    \Return{$\min_{q_j \in F}\{f[n,j]\}$} \;
  }
\end{algorithm}

The time complexity is polynomial in the size of the scope $S$ (with associated domains of size at most $d$), and the number of states in the finite automaton $M$. We state the time complexity of enforcing GAC* with respect to \gc{W\_Regular}$^{var}$ as follows.
\begin{corollary}\label{regularTime}	  
	Given $\cf_S = \mbox{\gc{W\_Regular}}^{var}(S,M)$. Enforcing GAC* on a variable $x_i \in S$ with respect to $\cf_S$ requires $O(nd^2 \cdot |Q|)$, where $n$ and $d$ is defined in Theorem  \ref{amongMinTime}.
\end{corollary}
\begin{proof}
	Follow directly from Corollary~\ref{Thm:pDAGProject} and Theorem \ref{regularMinTime}. 
\end{proof}

\subsection{The \gc{W\_Max}/\gc{W\_Min} Cost Functions}

\begin{definition}
\label{def:maxweight}
Given a function $f(x_i,v)$ that maps every variable-value pair $(x_i,v)$, where $v \in D(x_i)$, to a cost in $\{0 \ldots \top\}$.
\begin{itemize}
\item{} The \emph{\gc{W\_Max}$(S,f)$}$(\ell)$, where $\ell \in \L(S)$,  returns $\max \{f(x_i, \ell[x_i]) \mid x_i \in S\}$;
\item{} The \emph{\gc{W\_Min}$(S,f)$}$(\ell)$, where $\ell \in \L(S)$, returns $\min \{f(x_i, \ell[x_i]) \mid x_i \in S\}$. 
\end{itemize}
\end{definition}

Note that the \gc{W\_Max} and \gc{W\_Min} cost functions are not a direct generalization of any global constraints. Therefore, their name does not follow the traditional format. However, they can be used to model the \gc{Maximum} and \gc{Minimum} hard constraints \cite{NB2001}. For examples, the \gc{Maximum}$(x_{max}, S)$ can be represented as $x_{max} = \mbox{\gc{W\_Max}}(S,f)$, where $f(x_i,v) =
v$.

\begin{example}
  \label{wmaxex1}
  Consider $S = \{x_1, x_2, x_3\}$, where $D(x_1) = \{1,3\}$, $D(x_2)  = \{2,4\}$, and $D(x_3) = \{2,3\}$.  Given $f(x_i,v) =
  3\times v$, the cost of the tuple $(1,2,3)$ given by \emph{\gc{W\_Max}$(S,f)$} is $9$, while that of  $(3,4,2)$ is $12$.
\end{example}

\begin{theorem}
  \label{maxweight}
  \gc{W\_Max}$(S,f)$ and \gc{W\_Min}$(S,f)$ are polynomially DAG-filterable, and thus tractable projection-safe.
\end{theorem}

\begin{proof}
  Definition \ref{def:maxweight} does not lead to a DAG-filter that is tractable-safe. We give another DAG-filter of \gc{W\_Max}$(S,f)$ which satisfies  polynomial DAG-filterability, while that of \gc{W\_Min}$(S,f)$ is similar. For the ease of explanation, we arrange all possible outputs of $f$ in a non-decreasing sequence
	$A =[\alpha_0, \alpha_1, \ldots, \alpha_k]$, where $\alpha_i \leq \alpha_j$ iff $i \leq j$. The value $\alpha_0 = 0$ is added into the sequence as a base case.

	We define three families of unary cost functions $\{H_i^u\}$, $\{G_j^{\alpha}\}$ and $\{F_j^\alpha\}$. Cost functions $\{H_i^u \mid x_i \in S \wedge u 
    \in D(x_i)\}$ are unary functions on $x_i \in S$ defined as \begin{equation*}	 
        H_i^u(v) = \left\{\begin{array}{ll}
            f(x_i,v), & \mbox{ if $v=u$} \\
            \top, & \mbox{ if $v \neq u$}
        \end{array}\right.
    \end{equation*}
    \noindent
    
    The unary cost functions $F_j^{\alpha_{k-1}}$ are unary cost functions on $x_j\in S$ defined as:   \begin{equation*}	 
        F_j^{\alpha_k}(u) = \left\{\begin{array}{ll}
            0, & \mbox{ if $\alpha_k = f(x_j,u)$} \\
            \top, & \mbox{ otherwise}
        \end{array}\right.
    \end{equation*}

	Cost functions $\{G_j^{\alpha_k} \mid x_j \in S \wedge \alpha_k \in A\}$  are unary functions on $x_j \in S$, defined recursively as:    \begin{eqnarray*}
    	G_j^{\alpha_0}(u) &=& \top \\
      G_j^{\alpha_k}(u) &=& \left\{\begin{array}{ll}
      		  \top, & \mbox{ if  $\alpha_k > f(x_j,u)$ and $\forall v, f(x_i,v)\neq \alpha_k$} \\
            G_j^{\alpha_{k-1}}(u), & \mbox{ if  $\alpha_k  \leq f(x_j,u)$ and $\forall v, f(x_i,v)\neq \alpha_k$} \\
            \min\{ G_j^{\alpha_{k-1}}(u), F_j^{\alpha_k}(u) \}& \mbox{ if  $\alpha_k \leq f(x_j,u)$ and $\exists v, f(x_i,v) = \alpha_k$} \\
        \end{array}\right. 
    \end{eqnarray*}    
    
	They give a polynomial DAG-filter for \gc{W\_Max} as follows.	\begin{equation}
	\label{eqn:max}
			\mbox{\gc{W\_Max}}(S,f)(\ell) = \displaystyle\min_{\alpha_k \in A \wedge \alpha_k = f(x_i,v)} \{H_i^v(\ell[x_i]) \oplus 
										  \displaystyle\bigoplus_{x_j \in S \setminus \{x_i\}}G_j^{\alpha_k}(\ell[x_j])\}
    \end{equation}
	\noindent
    $H_i^v$ represents the choice of the maximum cost component in the  tuple, while $G_j^{\alpha}$ represents the  choice of each component other than the one with the  maximum weight.
    
    \begin{figure}[htp]
	\centering
	\psfrag{wmax}[cc][cc]{\tiny{$W\_Max$}}
	\psfrag{Hd2}[cc][cc]{\small{$H^d_2$}}
  \psfrag{Hb2}[cc][cc]{\small{$H^b_2$}}
  \psfrag{Ha1}[cc][cc]{\small{$H^a_1$}}
  
  \psfrag{G2d1}[cc][cc]{\small{$G^{\alpha_6}_1$}}
  \psfrag{G2d2}[cc][cc]{\small{$G^{\alpha_6}_2$}}
  \psfrag{G2d3}[cc][cc]{\small{$G^{\alpha_6}_3$}}
  \psfrag{F2d1}[cc][cc]{\small{$F^{\alpha_6}_1$}}
  \psfrag{F2d2}[cc][cc]{\small{$F^{\alpha_6}_2$}}
  \psfrag{F2d3}[cc][cc]{\small{$F^{\alpha_6}_3$}}
  
  \psfrag{G2b1}[cc][cc]{\small{$G^{\alpha_2}_1$}}
  \psfrag{G2b2}[cc][cc]{\small{$G^{\alpha_2}_2$}}
  \psfrag{G2b3}[cc][cc]{\small{$G^{\alpha_2}_3$}}
  \psfrag{F2b1}[cc][cc]{\small{$F^{\alpha_2}_1$}}
  \psfrag{F2b2}[cc][cc]{\small{$F^{\alpha_2}_2$}}
  \psfrag{F2b3}[cc][cc]{\small{$F^{\alpha_2}_3$}}
  
  \psfrag{G1a1}[cc][cc]{\small{$G^{\alpha_1}_1$}}
  \psfrag{G1a2}[cc][cc]{\small{$G^{\alpha_1}_2$}}
  \psfrag{G1a3}[cc][cc]{\small{$G^{\alpha_1}_3$}}
  \psfrag{F1a1}[cc][cc]{\small{$F^{\alpha_1}_1$}}
  \psfrag{F1a2}[cc][cc]{\small{$F^{\alpha_1}_2$}}
  \psfrag{F1a3}[cc][cc]{\small{$F^{\alpha_1}_3$}}
  
  \psfrag{Ga01}[cc][cc]{\small{$G^{\alpha_0}_1$}}
  \psfrag{Ga02}[cc][cc]{\small{$G^{\alpha_0}_2$}}
  \psfrag{Ga03}[cc][cc]{\small{$G^{\alpha_0}_3$}}

	\includegraphics[width=0.7\textwidth]{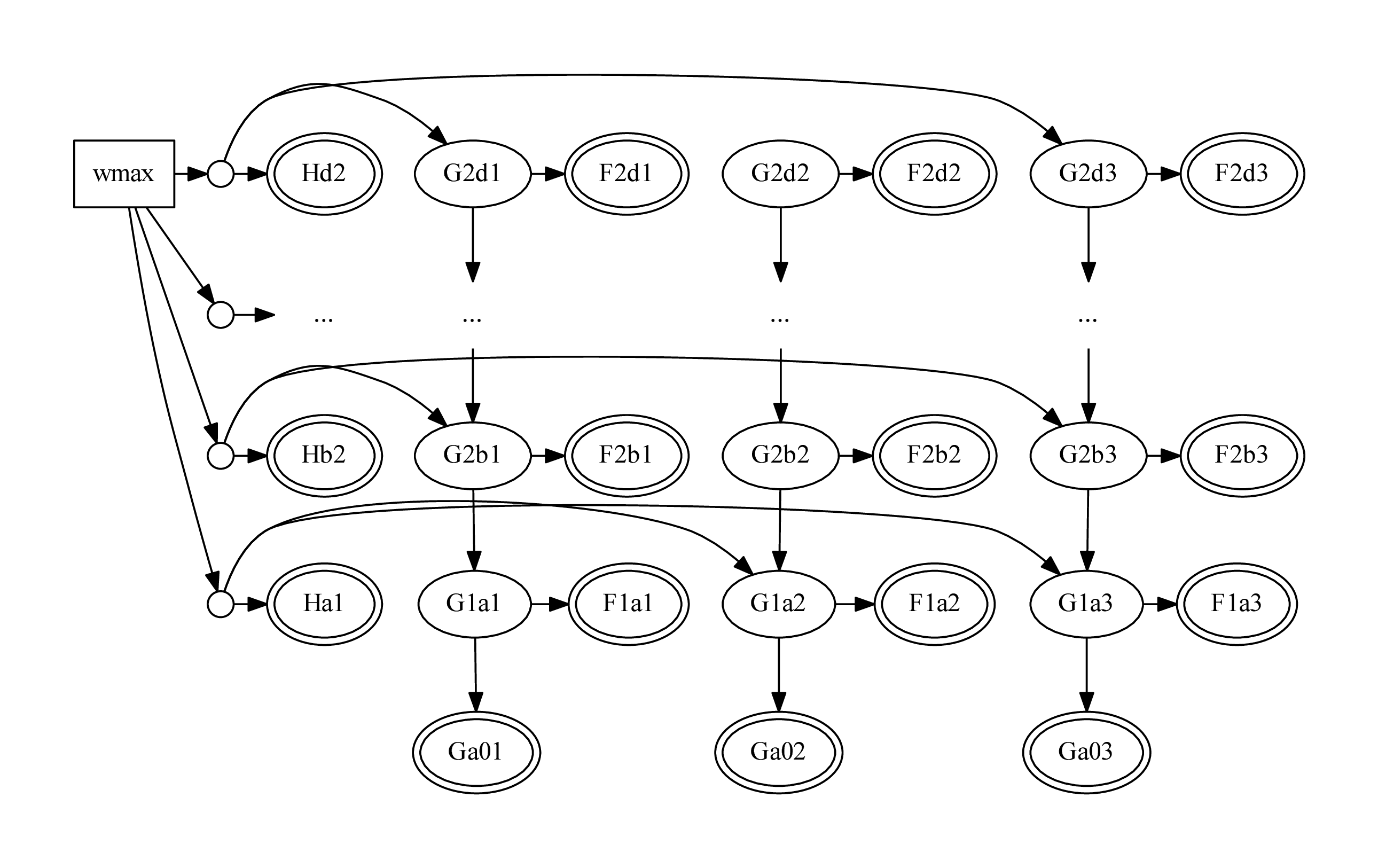}
	\caption{The DAG-filter corresponding to \gc{W\_Max} \label{maxDAG} }
	\end{figure} 
        
    The corresponding DAG $(V,E)$ as shown in Figure \ref{maxDAG} based on Example \ref{wmaxex1}. The notation is the same as Figure \ref{amongDAG}. The DAG contains $|V|$ vertices, where $|V| = O(nd \cdot n^2d) = O(n^3d^2)$.     
		By propositions~\ref{thm:oplussafe} and~\ref{thm:minsafe}, the decomposition is safely DAG-filterable. Moreover, the leaves are unary cost
		functions. The DAG-filter is polynomial and, by Theorem \ref{thm:pd-safe}, the result follows.					
    \end{proof}

\begin{example}
  \label{wmaxex2}
  Following Example \ref{wmaxex1}, the sequence $A$ is defined as:
	\begin{eqnarray*}
		A &=& [\alpha_0, \alpha_1,\alpha_2,\alpha_3,\alpha_4,\alpha_5, \alpha_6] \\
			&=& [0, f(x_1,1),f(x_2,2),f(x_3,2),f(x_1,3),f(x_3,3), f(x_2,4)] \\
		  &=& [0, 3,6,6,9,9,12]
	\end{eqnarray*}
	\noindent
	The DAG-filter for \gc{W\_Max} can be represented as:	
	\[
	\mbox{\gc{W\_Max}}(S,c)(\ell) = \min \left\{ \begin{array}{l} 		
		H_2^4(\ell[x_2]) \oplus G^{\alpha_6}_1(\ell[x_1]) \oplus G^{\alpha_6}_3(\ell[x_3]), \\		
		H_3^3(\ell[x_3]) \oplus G^{\alpha_5}_1(\ell[x_1]) \oplus G^{\alpha_5}_2(\ell[x_2]), \\
		H_1^3(\ell[x_1]) \oplus G^{\alpha_4}_2(\ell[x_2]) \oplus G^{\alpha_4}_3(\ell[x_3]), \\
		H_3^2(\ell[x_3]) \oplus G^{\alpha_3}_1(\ell[x_1]) \oplus G^{\alpha_3}_2(\ell[x_2]), \\
		H_2^2(\ell[x_2]) \oplus G^{\alpha_2}_1(\ell[x_1]) \oplus G^{\alpha_2}_3(\ell[x_3]), \\
		H_1^1(\ell[x_1]) \oplus G^{\alpha_1}_2(\ell[x_2]) \oplus G^{\alpha_1}_3(\ell[x_3]), \\
		\end{array} \right\}
	\]	
	Assume $\ell = (1,2,3)$. We first compute the values of $\{H_i^u\}$ and $\{G^\alpha_j\}$, 
	incrementally starting from $\alpha_0$. The results are shown in Table \ref{tab:values}. 
	\begin{table}[htb]
    \caption{Computing the values of $\{H_i^u\}$ and $\{G^\alpha_j\}$}
    \label{tab:values}
	\begin{center}
    \begin{tabular}{|c|c|c|c|c|}
    		\hline
         $\alpha_j$ & $H_i^u$ & $G^{\alpha_j}_1$ & $G^{\alpha_j}_2$ & $G^{\alpha_j}_3$\\  
         \hline
         $\alpha_0               $ & $-$ & $\top$ & $\top$ & $\top$\\                    
         \hline
         $\alpha_1 = f(x_1,1) = 3$ & $3$ & $0$ & $\top$ & $\top$\\            
         \hline
         $\alpha_2 = f(x_2,2) = 6$ & $6$ & $0$ & $0$ & $\top$\\            
         \hline
         $\alpha_3 = f(x_3,2) = 6$ & $\top$ & $0$ & $0$ & $0$\\            
         \hline
         $\alpha_4 = f(x_1,3) = 9$ & $\top$ & $0$ & $0$ & $0$\\            
         \hline
         $\alpha_5 = f(x_3,3) = 9$ & $9$ & $0$ & $0$ & $0$\\            
         \hline
         $\alpha_6 = f(x_2,4) = 12$ & $\top$ & $0$ & $0$ & $0$\\            
        \hline        
    \end{tabular}
\end{center}
\end{table}
	
	\noindent
	\gc{W\_Max}($S$,$c$)($\ell$) can be computed using Table \ref{tab:values}, which gives the cost $9$.	
	\[
	\mbox{\gc{W\_Max}}(S,c)(\ell) = \min \left\{ \begin{array}{l} 		
		\top \oplus 0 \oplus 0, \\		
		\top  \oplus 0 \oplus 0, \\		
		9  \oplus 0 \oplus 0, \\
		\top  \oplus 0 \oplus 0, \\
		6  \oplus 0 \oplus \top, \\
		3  \oplus \top \oplus \top, \\		
		\end{array} \right\} = 9
	\]
\end{example}

Function \opSty{WMaxMin} in Algorithm \ref{algo:maximum} computes the minimum of a \gc{W\_Max}$(S,c)$ cost function, based on Equation \ref{eqn:max}. The one for \gc{W\_Min}$(S,c)$ is similar. The for-loop at line \ref{maxL3} tried every possible variable-value pair $(x_i,a)$ in the non-decreasing order of $f(x_i,a)$. At each iteration, it first computes the minimum among all tuple $\ell$ which $\ell[x_i] = v$ and it is the maximum weighted component in the tuple in line \ref{maxL5}, and update the global minimum in line \ref{maxL6}. The variables $\{g[x_i]\}$ is then updated in line \ref{maxL4}. They store the current minimum of $\{G_i^\alpha\}$, which is used for compute the minimum among tuples with $\ell[x_i]$ is not the maximum weighted component.

\begin{algorithm}
  \caption{Finding the minimum of \gc{W\_Max}}
  \label{algo:maximum}
  
  \SetKwBlock{Start}{}{}
  \SetKw{Func}{Function}
  \SetKw{Proc}{Procedure}    
  
  \Func{} \opSty{WMaxMin}($S,f$)
  \Start{    
    \lnl{maxL1}
    \lFor{$i := 1$ \KwTo $n$}{
      $g[x_i] := \top$ \;        
    }    	
    \lnl{maxL2}
    $\ArgSty{curMin} := \top$ \;
    \lnl{maxL2.1}
    $A := \{(x_i,v) \mid x_i \in S \wedge v \in D(x_i)\}$\;
    \lnl{maxL2.2}
    sort $A$ in the nondecreasing order of $f(x_i,v)$\;
    \lnl{maxL3}
    \ForEach{$(x_i,v)$ according to the sorted list $A$}{
      \lnl{maxL3.5}
      $\alpha := f(x_i,v)$\;    		
      \lnl{maxL5}
      $\ArgSty{curCost} := H^{v}_i(v) \oplus \bigoplus_{j = 1 \ldots n, j \neq i}g[x_j] $\;
      \lnl{maxL6}
      $\ArgSty{curMin} := \min\{\ArgSty{curMin}, \ArgSty{curCost}\}$\;        	        	
      \lnl{maxL4}      	        	      	        	      	  
      $g[x_i] := \min\{g[x_i], G^{\alpha}_i(v)\}$\;       	
    } 
    \lnl{maxL7}   	
    \Return{$\ArgSty{curMin}$} \;
  }
\end{algorithm}	
	
The time complexity is given by the theorem below.
\begin{theorem}\label{thm:maxmintime}
  Function \opSty{WMaxMin} in Algorithm \ref{algo:maximum} computes  the minimum of \mbox{\gc{W\_Max}}$(S,f)$, and it requires  $O(nd\cdot\log(nd))$, where $n$ and $d$ are defined in Theorem  \ref{amongMinTime}.
\end{theorem}

\begin{proof}
  \noindent
  Line \ref{maxL2.1} takes $O(nd \cdot \log(nd))$ to sort. The  for-loop at line \ref{maxL3} iterates $nd$ times. All operations in
  the iteration requires $O(1)$ except line \ref{maxL5}. As it is,  line \ref{maxL5} requires $O(n)$. By using special data structure
  like segment trees \cite{JL1977}, the time complexity can be reduced  to $O(\log(n))$.  The overall complexity becomes $O(nd \cdot \log(nd) +
  nd \cdot \log(n)) = O(nd \cdot \log(nd))$.
\end{proof}

\begin{corollary}
  \label{minmaxTime}
  Given $\cf_S = \mbox{\gc{W\_Max}}(S,f)$. 
	Enforcing GAC* on a variable $x_i \in S$ with respect to $\cf_S$ requires $O(nd^2\cdot\log(nd))$, where $n$ and $d$ is defined in Theorem  \ref{amongMinTime}.
\end{corollary}
\begin{proof}
	Follow directly from Corollary~\ref{Thm:pDAGProject} and Theorem  \ref{thm:maxmintime}.
\end{proof}

In this section, we have seen how the minimum of a polynomially DAG-filterable global cost function can be computed efficiently, leading  to efficient soft local consistency enforcement. However, each newly implemented cost function requires to build a corresponding DAG structure with a dedicated dynamic
programming algorithm.

In the next section, we show that, in some cases, it is also possible to
avoid this by directly decomposing a global cost functions into a CFN in such a way that local consistency enforcement will emulate dynamic programming, avoiding the need for dedicated enforcement algorithms.
\section{Decomposing Global Cost Functions into CFNs}
\label{s:dec}

In  CSPs, some global constraints can be efficiently represented by a logically equivalent subnetwork of constraints of bounded arities~\cite{besvanHCP03,bessiereHandbook06}, and are said to be decomposable. Similarly, we will show that some global cost functions can be encoded as a sum of bounded arity cost functions. The definition below applies to any cost function, including constraints,  
extending the definition in~\cite{besvanHCP03} and \cite{bessiereHandbook06}.

\begin{definition}\label{def-netdecomp}
 ~~For a given integer $p$, a \emph{$p$-network-decomposition}  of a global cost function  $\gc{W\_GCF}(S,A_1,\ldots,A_k)$ is a polynomial transformation $\delta_p$ that returns a CFN  $\delta_p(S,A_1,\ldots,A_k) = (S\cup E,\F,\top)$, where $S\cap E=\varnothing$, such that $\forall \cf_{T}\in \F,|T|\leq p$ and $\forall \ell \in \L^S,  \gc{W\_GCF}(S,A_1,\ldots,A_k)(\ell) = \min_{\ell'\in \L^{S\cup E},  \ell'[S] = \ell} \bigoplus_{\cf_{S_i} \in \F} \cf_{S_i}(\ell'[S_i])$.
\end{definition}

Definition~\ref{def-netdecomp} above allows for the use of extra variables $E$, which do not appear in the original cost function scope and are eliminated by minimization. We assume, without loss of generality, that every extra variable $x\in E$ is involved in at least two cost functions in the decomposition.\footnote{Otherwise, such a variable  can be removed by variable elimination: remove $x$ from $E$ and  replace the $\cf_{T}$ involving $x$ by the cost function $\min_x \cf_{T}$ on  $T\setminus\{x\}$. This preserves the Berge-acyclicity of the network if it exists.}  Clearly, if $\gc{W\_GCF}(S,A_1,\ldots,A_k)$ appears in a CFN $P=(\X,\C,\top)$ and decomposes into $(S\cup E,\F,\top)$, the optimal solutions of $P$ can directly be obtained by projecting the optimal solutions of the CFN $P'=(\X\cup E, \C\setminus\{\gc{W\_GCF}(S,A_1,\ldots,A_k)\} \cup \F,\top)$ on $\X$.

\subsection{Building network-decomposable global cost functions}

A global cost function can be shown to be network-decomposable by exhibiting a bounded arity network decomposition of the global cost function. 
There is a simple way of deriving network-decomposable cost functions from known decomposable global constraints. The process goes directly from a known decomposable global constraint to a network-decomposable global cost function and does not require to use an intermediate soft global constraint with an associated violation measure $\mu$. Instead, the global cost function will use any {relaxation} of the decomposed global constraint.

{We say that the cost function $W_S$ is a \emph{relaxation} 
of $W'_S$ if for all $\ell \in \L^S, W_S(\ell) \leq W'_S(\ell)$. 
We then  write $W_S\leq W'_S$.  
} 
%
From a network-decomposable global constraint, it is possible to 
define an associated network-decomposable global cost function by 
relaxing every constraint in the decomposition. 

\begin{theorem}\label{dec-relax}
  Let $\gc{GC}(S,A_1,\ldots,A_k)$ be a global constraint that $p$-network decomposes into a classical constraint network $(S\cup E,\F,\top)$ and $f_\theta$ be a function parameterized by $\theta$ that maps every $C_T\in \F$ to a cost function $f_\theta(C_{T})$ such  that $f_\theta(C_{T}) \leq C_T$. The global cost function 
  $$\gc{W\_GCF}(S,A_1,...,A_k,f_\theta)(\ell)  = \min_{\myatop{\ell'\in \L^{S\cup E}}{\ell'[S] = \ell}} \bigoplus_{C_{T} \in \F}
  f_\theta(C_{T})(\ell'[T])$$ is a relaxation  of $\gc{GC}(S,A_1,\ldots,A_n)$, and is  $p$-network-decomposable by construction.
\end{theorem}

\begin{proof} 
  Since $(S\cup E,\F)$ is a network-decomposition
  of  $\gc{GC}(S,A_1,...,A_k)$, for any tuple $\ell\in \L^S$, $\gc{GC}(S,A_1,...,A_k)(\ell) = 0$
  if and only if   $\min_{\ell'\in \L^{S\cup E}, \ell'[S] = \ell} 
  \bigoplus_{C_T \in \F} C_T(\ell'[T]) = 0$. Let $\ell'\in \L^{S\cup E}$ be a
  tuple where this minimum is reached. This implies that $\forall C_{T}\in \F$,
  $C_{T}(\ell'[T]) = 0$. Since $f_\theta(C_T)\leq C_{T}$,  $f_\theta(C_{T})(\ell'[T])=0$. Therefore  $\bigoplus_{C_{T}
  \in \F} f_\theta(C_{T})(\ell'[T]) = 0$ and $\gc{W\_GCF}(S,A_1,\ldots,A_k,
  f_\theta)(\ell) = 0$. Moreover, the global cost function is $p$-network-decomposable by
  construction.
\end{proof}


Theorem~\ref{dec-relax} allows to immediately derive a long list of network decomposable global cost functions from existing network decompositions of global constraints such as \textsc{AllDifferent}, \textsc{Regular}~\cite{GP2004}, \textsc{Among} and \textsc{Stretch}~\cite{bessiere2007reformulating}. The parameterization through $f_\theta$ also allows a lot of flexibility. 

\begin{example}
  Consider the softened variant \gc{W\_AllDifferent}$^{dec}(S)$  of the global constraint \gc{AllDifferent}$(S)$ constraint using the  \emph{decomposition} violation measure where the cost of an  assignment is the number of pairs of variables taking the same   value~\cite{TJC2001}. It is well known that \gc{AllDifferent} decomposes into a set of  $\frac{n.(n-1)}{2}$ binary difference constraints.  Similarly, the   \gc{W\_AllDifferent}$^{dec}(S)$ cost function can be decomposed into a set of $\frac{n.(n-1)}{2}$ soft difference cost functions. A soft   difference cost function takes cost 1 iff the two involved variables   have the same value and 0 otherwise. In these cases, no extra variable is required. 

{\gc{AllDifferent} can be softened in a different way. 
Take an arbitrary graph $G=(V,E)$ over $V$, and consider the 
violation measure where the cost of an  assignment is the number 
of pairs of variables in $E$ taking the same   value. 
This gives rise to a global cost function \gc{W\_AllDifferent}$^{f_G}(V)$ 
that allows  a zero cost assignment if and only if $G$ is colorable, 
which is an NP-hard problem. }
Enforcing any soft arc consistency on that single global cost function will be intractable as well since it requires to compute the minimum of the cost function. Instead,  enforcing soft arc consistencies 
on the network-decomposition into binary cost functions will obviously be polynomial but will achieve a lower level of filtering.
\end{example}

\section{Local Consistency and Network-Decompositions}
\label{s:loc}

As we have seen with the \gc{W\_AllDifferent}$(V,f_G)$ global cost function, the use of network-decompositions instead of a monolithic variant has both advantages and drawbacks.  Thanks to local reasoning, a decomposition may be filtered more efficiently, but this may hinder the level of filtering achieved. 
{In CSP, it was observed that the structure of the decomposition 
has an impact on  the level of consistency achieved when filtering 
the decomposition. }

{Before going further, we give some extra definitions that are 
useful to characterize structure of decompositions. 
The hypergraph $(X,E)$ of a CFN $(\X,\C,\top)$ has one vertex per 
variable $x_i\in \X$ and one hyperedge for every scope $S$ such 
that $\exists \cf_S\in \C$. 
The incidence graph of a hypergraph $(X,E)$ is a bipartite  graph 
$G=(X\cup  E,E_H)$ where $\{x_i,e_j\}\in E_H$ iff $x_i\in X,e_j\in E$ 
and $x_i$  belongs to the hyperedge $e_j$.  
A hypergraph $(X,E)$ is Berge-acyclic iff its incidence graph is acyclic.}

In  CSP, it is known that if the decomposition is Berge-acyclic, 
then enforcing GAC on the decomposition enforces GAC on the global 
constraint itself~\cite{Beeri83}. 
We now show that a similar result can be obtained for cost functions
using either a variant of Directional Arc Consistency or Virtual Arc
Consistency (VAC), whose definitions are given in the two subsections
below.

\subsection{Berge-acyclicity and directional arc consistency}

In this section, we will show that enforcing directional arc
consistency on a Berge-acyclic network-decomposition of a cost
function or on the original global cost function yields the same cost
distribution on the last variable and therefore the same lower bound
(obtained by node consistency
) provided a correct variable ordering is used.

Directional Arc Consistency has been originally defined on binary
networks. We define Terminal DAC (or T-DAC) which generalizes
Directional Arc Consistency~\cite{CooperFCSP} by removing the
requirement of having binary scopes.

\begin{definition}[T-DAC] Given a CFN $N=(\X, \C, \top)$ a total order
  $\prec$ over variables:
  \begin{itemize}
  \item{} For a cost function $\cf_S \in \C^+$, a tuple $\ell\in \L^S$
    is a full support for a value $a\in D(x_i)$ of $x_i\in S$ iff
    $\cf_S(\ell)\bigoplus_{x_j\in S, j\neq i} \cf_j(\ell[x_j]) = 0$.
  \item{} A variable $x_i \in S$ is \emph{star directional arc
      consistent (DAC*)} for $\cf_S$ iff \vspace{-5pt}
    \begin{itemize}
    \item{} $x_i$ is NC*;
    \item{} each value $v_i \in D(x_i)$ has a full support $\ell$
      for $\cf_S$.
    \end{itemize}
  \item{} $N$ is \emph{Terminal Directional Arc Consistent} (T-DAC)
    w.r.t. the order $\prec$ iff for all cost functions
    $\cf_S\in \C^+$, the minimum variable in $S$ is DAC* for $\cf_s$.
  \end{itemize}
\end{definition}

To enforce T-DAC on a cost function $\cf_S$, it suffices to first shift the cost of every unary cost function $\cf_i, i \in S$ inside $\cf_S$ by applying \Proj{$S,\{x_i\},(a),-\cf_i(a)$} for every value $a\in D_i$.  Let $x_j$ be the minimum variable in $S$ according to $\prec$, one can then apply \Proj{$S,\{x_j\},(b),\alpha$} for every value $b\in D(x_j)$ with $\alpha = \min_{\ell\in \L^S, \ell[x_j] =b} \cf_S(\ell)$. Let $\ell$ be a tuple where this minimum is reached. Then either $\alpha = \top$ and the value will be deleted, or $\ell$ is a full support for $b\in D(x_j)$:
$ \cf_S(\ell)\bigoplus_{x_i\in S, i\neq j} \cf_i(\ell[x_i]) = 0$. This support can only be broken if for some unary cost function $\cf_i, i \in S, i\neq j$, $\cf_i(a)$ increases for some value $a\in D(x_i)$. Since $j$ is minimum, $i\succ j$.

To enforce T-DAC on a CFN $(\X,\C,\top)$, one can simply sort $\C$
 in a \emph{decreasing} order of the minimum variable in the scope
of each cost function,
and apply the previous process on each cost function, successively. When a cost function $\cf_S$ is processed, all the cost functions whose minimum variable is larger than the minimum variable of $S$ have already been processed, which guarantees that none of the established full supports will be broken in the future. Enforcing T-DAC is therefore in $O(ed^r)$ in time, where $e=|\C|$ and $r=\max_{\cf_S\in \C}|S|$. 
 
\begin{theorem}\label{theo-dac}
  If a global cost function \gc{W\_GCF}$(S,A_1,\ldots,A_k)$ decomposes
  into a Berge-acyclic CFN $N=(S\cup E, \F)$, there exists an ordering
  on $S\cup E$ such that the unary cost function $\cf_{x_{i_n}}$ on
  the last variable $x_{i_n}$ of $S$ produced by enforcing T-DAC on
  the sub-network
  $(S, \{\gc{W\_GCF}(S,A_1,\ldots,A_k)\}\cup\{\cf_{x_i}\}_{x_i\in S})$
  is identical to the unary cost function $\cf'_{x_{i_n}}$ produced by
  enforcing T-DAC on the decomposition
  $N=(S \cup E, \F\cup\{\cf_{x_i}\}_{x_i\in S})$.
\end{theorem}

\begin{proof}
  Consider the decomposed network $N$ and $I_N=(S\cup E\cup \F,E_I)$  its incidence graph. As $N$ is Berge-acyclic we know that $I_N$ is a tree whose vertices are the variables and the cost functions of $N$.  We root  $I_N$ in a variable of $S$. The neighbors (parent and children, if any)  of cost functions $\cf_T$ are the variables in $T$. The neighbors of
  a variable $x_i$ are the cost functions involving $x_i$. Consider any   topological ordering of the vertices of $I_N$. This ordering   induces a variable ordering $(x_{i_1},\ldots,x_{i_n}),x_{i_n} \in S$ which is   used to enforce T-DAC on $N$. Notice that for any cost function  $\cf_T\in \F$, the parent variable of $\cf_T$ in $I_N$ appears after all  the other variables of $T$.

  Consider a value $a\in D(x_{i_n})$ of the root. Since NC* is enforced, $\cf_{x_{i_n}}(a) < \top$.  Let $\cf_T$ be any child of $x_{i_n}$ and  $\ell$ a full support of value $a$ on $\cf_{T}$. We have $\cf_{x_{i_n}}(a)  = \cf_{T}(\ell) \bigoplus_{x_i\in T}\cf_{x_i}(\ell[x_i])$, which proves that $\cf_{T}(\ell)
  = 0$ and $\forall x_ i \in T, i\neq i_n, \cf_{x_i}(\ell[x_i]) = 0$. $I_N$ being a  tree, we can inductively apply the same argument on all the  descendants of $x_{i_n}$ until leaves are reached, proving that the  assignment $(x_{i_n}=a)$ can be extended to a complete assignment with  cost $\cf_{x_{i_n}}(a)$ in $N$. In both cases, $\cf_{x_{i_n}}(a)$ is the cost  of an optimal extension of $(x_{i_n} =a)$ in $N$.

  Suppose now that we enforce T-DAC using the previous variable  ordering on the undecomposed sub-network $(S,  \{\gc{W\_GCF}(S,A_1,\ldots,A_k)\}\cup\{\cf_{x_i}\}_{x_i\in S})$.  Let $\ell$ be a full support of value $a\in D(x_{i_n})$ on \gc{W\_GCF}$(S,A_1,\ldots,A_k)$. By definition, $\cf_{x_{i_n}}(a) = \gc{W\_GCF}(S,A_1,\ldots,A_k)(\ell)  \bigoplus_{x_i\in S}\cf_{x_i}(\ell[x_i])$ which proves that $\cf_{x_{i_n}}(a)$ is the  cost of an optimal extension of $(x_{i_n} =a)$ on $(S,  \{\gc{W\_GCF}(S,A_1,\ldots,A_k)\}\cup\{\cf_{x_i}\}_{x_i\in S})$. By definition of   decomposition, and since $x_{i_n}\not\in E$, this is equal to the cost  of an optimal extension of $(x_{i_n}=a)$ in $N$.
\end{proof}

T-DAC has therefore enough power to handle Berge-acyclic network-decompositions without losing any filtering strength, provided a correct order is used for applying EPTs. In this case, T-DAC emulates a simple form of dynamic programming on the network-decomposition.

\begin{example}\label{ex:wregular}
Consider the \gc{Regular} $(\{x_1,\ldots,x_n\},M)$ global
constraint, defined by a (not necessarily deterministic) finite automaton $M = (Q,\Sigma,\delta,q_0,F)$, where $Q$ is a set of states, $\Sigma$ the
emission alphabet, $\delta$ a transition function from $\Sigma\times
Q\rightarrow 2^Q$, $q_0$ the initial state and $F$ the set of final
states. As shown in~\cite{comicsECAI08}, this constraint decomposes into a constraint network
$(\{x_1,\ldots,x_n\}\cup\{Q_0,\ldots,Q_n\},C)$ where the extra variables $Q_i$ have $Q$ as their domain. The set of constraints $C$ in the network decomposition contains two unary constraints restricting $Q_0$ to $\{q_0\}$ and $Q_n$ to $F$ and a sequence of identical ternary constraints $c_{\{Q_i,x_{i+1},Q_{i+1}\}}$ each of which authorizes a triple
$(q,s,q')$ iff $q'\in\delta(q,s)$, thus capturing $\delta$. A relaxation of this decomposition may relax each of these constraints. The unary constraints on $Q_0$ and $Q_n$ would be replaced by unary cost functions $\lambda_{Q_0}$ and $\rho_{Q_n}$ stating the cost for using every state as either an initial or final state while the ternary constraints would be relaxed to ternary cost functions $\sigma_{\{Q_i,x_{i+1},Q_{i+1}\}}$ stating the cost for using any $(q,s,q')$ transition. 

This relaxation precisely corresponds to the use of a weighted
automaton $M_W=(Q,\Sigma,\lambda,\sigma,\rho)$ where every transition, starting and finishing state has an associated, possibly intolerable, cost defined by the cost functions $\lambda,\sigma$ and $\rho$~\cite{DBLP:conf/mfcs/CulikK93}.
The cost of an assignment in the decomposition is equal, by definition, to
the cost of an optimal parse of the assignment by the weighted
automaton. This defines a \gc{W\_Regular}$(S,M_W)$ global cost
function which is 
parameterized by a weighted automaton. As shown in~\cite{KNW2011}, a weighted automaton can encode the Hamming and Edit distances to the language of a classical automaton. 
We observe that the hypergraph of the decomposition of 
\gc{W\_Regular} is Berge-acyclic. Thus, contrary to 
the \gc{AllDifferent} example, where decomposition was hindering filtering,  
T-DAC on the \gc{W\_Regular} network-decomposition 
achieves T-DAC on the original cost function. 
\end{example}

It should be pointed out that T-DAC is closely related to
mini-buckets~\cite{dechter97minibuckets} and Theorem~\ref{theo-dac}
can easily be adapted to this scheme. Mini-buckets perform a weakened
form of variable elimination: when a variable $x$ is eliminated, the
cost functions linking $x$ to the remaining variables are partitioned
into sets containing at most $i$ variables in their scopes and at most
$m$ functions (with arity $>1$). If we compute mini-buckets using the same variable ordering, with $m=1$ and unbounded $i$, 
we will obtain the same unary costs as T-DAC on the root variable $r$, with the same time and space complexity. 
Mini-buckets can be used along two main recipes:
precomputed (static) mini-buckets do not require update during search
but restrict search to one static variable ordering; dynamic
mini-buckets allow for dynamic variable ordering (DVO) but suffer from
a lack of incrementality. Soft local consistencies, being based on
EPTs, always yield equivalent problems, providing incrementality
during search and are compatible with DVO.  

\subsection{Berge-acyclicity and virtual arc consistency}

Virtual Arc Consistency offers a simple and direct link between CSPs
and CFNs which allows to directly lift CSP properties to CFNs, under
simple conditions. 

\begin{definition}[VAC~\cite{Cooper2010}] 
{Given a CFN $N=(\X,\C,\top)$, 
we define the  constraint network Bool$(N)$ as the CSP with 
the same set $\X$ of variables with the same domains,  
and which contains, for each cost function $\cf_S\in W, |S|>0$, 
a constraint  $c_S$ with the same scope, which exactly forbids all 
tuples $\ell\in \L^S$ such that $\cf_S(\ell) \neq 0$. 
A CFN $N$ is said to be Virtual Arc Consistent (VAC) 
iff the arc consistent closure of the constraint network $Bool(N)$ is non empty.}
\end{definition}

\begin{theorem}\label{theo-vac}
  If a global cost function \gc{W\_GCF}$(S,A_1,\ldots,A_k)$ decomposes
  into a Berge-acyclic CFN $N=(S\cup E, \F,\top)$ then enforcing VAC
  on either $(S\cup E, \F\cup\{\cf_{x_i}\}_{x_i\in S},\top)$ or on
  $(S, \{\gc{W\_GCF}(S,A_1,\ldots,A_k)\}\cup\{\cf_{x_i}\}_{i\in
    S},\top)$ yields the same lower bound $\cf_\varnothing$.
\end{theorem}

\begin{proof} 
  Enforcing VAC on the CFN
  $N=(S\cup E, \F\cup\{\cf_{x_i}\}_{x_i\in S},\top)$ does not modify
  the set of scopes as it only performs 1-EPTs (See
  Definition~\ref{r-ept}).  Hence it yields an equivalent problem $N'$
  such that $Bool(N')$ has the same hypergraph as $Bool(N)$. Since $N$
  has a Berge acyclic structure, this is also the case for $Bool(N)$
  and $Bool(N')$. Now, Berge-acyclicity is a situation where arc
  consistency is a decision procedure.
  We can directly make use of Proposition 10.5 of~\cite{Cooper2010},
  which states that if a CFN $N$ is VAC and $Bool(N)$ is in a class of
  CSPs for which arc consistency is a decision procedure, $N$ has an
  optimal solution of cost $w_\varnothing$.

  Similarly, the network
  $Q=(S, \{\gc{W\_GCF}(S,A_1,\ldots,A_k)\}\cup\{\cf_{x_i}\}_{x_i\in
    T},\top)$
  contains just one cost function with arity strictly above 1 and
  $Bool(Q)$ will be decided by arc consistency. Enforcing VAC will
  therefore provide a CFN which also has an optimal solution of cost
  $\cf_\varnothing$. Finally, the networks $N$ and $Q$ have the same
  optimal cost by definition of a decomposition.
\end{proof}

Given that VAC is both stronger and more expensive to enforce than
DAC*, the added value of this theorem, compared to
theorem~\ref{theo-dac}, is that it does not rely on a variable
ordering. Such order always exists but it is specific to each global
cost function. Theorem~\ref{theo-vac} becomes interesting when a
problem contains several global cost functions with intersecting
scopes, for which theorem~\ref{theo-dac} may produce inconsistent
orders.

\section{Relation between DAG-filterability and Network-Decompositions}
\label{comparison}

In this section, we show that Berge-acyclic network-decomposable global
cost functions are also {polynomially DAG-filterable}. 

\begin{theorem}\label{theo-decompo}
  Let $\gc{W\_GCF}(S,A_1,\ldots,A_k)$ be a network-decomposable global
  cost function that decomposes into a CFN $(S\cup E,\F,\top)$ with a
  Berge-acyclic hypergraph. Then \gc{W\_GCF}$(S,A_1,\ldots,A_k)$ is
  polynomially DAG-filterable.
\end{theorem}

\begin{proof}
  We consider the incidence graph of the Berge-acyclic hypergraph of
  the CFN $(S\cup E,\F,\top)$ and choose a root for it in the original
  variables $S$, defining a rooted tree denoted as $I$. This root
  orients the tree $I$ with leaves being variables in $S$ and $E$. In
  the rest of the proof, we denote by $I(x_i)$ the subtree of $I$
  rooted in $x_i\in S\cup E$. Abusively, when the context is clear,
  $I(x_i)$ will also be used to denote the set of all variables in the
  subtree.
  
  The proof is constructive. We will transform $I$ into a filtering DAG (actually
  a tree) of nodes that computes the correct cost $\min_{\ell'\in
    \L^{S\cup E}, \ell'[S] = \ell} \bigoplus_{\cf_{T} \in \F}
  \cf_{T}(\ell'[T])$ and satisfies all the required properties of
  polynomial DAG-filters. To achieve this, we need to
  guarantee that the aggregation function $f_i=\oplus$ is always used
  on cost functions of disjoint scopes, that $f_i=\min$ is always
  applied on identically scoped functions and that sizes remain
  polynomial.

  We will be using three types of DAG nodes. A first type of node will
  be associated with every cost function $\cf_{T} \in\F$ in the
  network-decomposition. Each cost function appears in $I$ with a
  parent variable $x_i$ and a set of children variables among which
  some may be leaf variables. By the assumption that extra variables
  belong to at least two cost functions (see paragraph below
  Definition \ref{def-netdecomp}), leaf variables necessarily belong
  to $S$. We denote by $\textit{leaf}(T)$ the set of leaf variables
  in the scope $T$. The first type of node aims at computing the value
  of the cost function $W_T$ combined with the unary cost functions on
  each leaf variable. This computation will be performed by a family
  of nodes $U_T^\ell$, where $\ell\in\L^{T-\textit{leaf}(T)}$ is an
  assignment of non-leaf variables. Therefore, for a given cost
  function $W_T$ and a given assignment $\ell$ of non-leaf variables,
  we define a DAG node with scope $leaf(T)$:

  $$U_{T}^\ell(\ell') = \cf_{T}(\ell\cup \ell') \bigoplus_{x_j\in \textit{leaf}(T)} \cf_{x_j}(\ell'[x_j])$$ 

  These nodes will be leaf nodes of the filtering DAG. Given that
  all cost functions in $I$ have bounded arity, these nodes
  have an overall polynomial size and can be computed in polynomial
  time in the size of the input global cost function.

  Nodes of the second and third types are associated to every non-leaf
  variable $x_i$ in $I$. For every value $a\in D(x_i)$, we will have a
  node $\omega_i^a$ with scope $I(x_i)\cap S$. $x_i$ may have
  different children cost functions in $I$ and we denote by $\C_i$ the
  set of all the children cost functions of $x_i$ in $I$. For each
  $\cf_T \in \C_i$, we will also have a DAG node $\omega_T^{i,a}$ with
  scope $S'_i=(I(\cf_T)\cup\{x_i\}) \cap S$. Notice that even if these
  scopes may be large (ultimately equal to $S$ for $\omega_i^a$ if
  $x_i$ is the root of $I$), these nodes are not leaf nodes of the
  filtering DAG and do not rely on an extensional definition, avoiding
  exponential space.

  The aim of all these nodes is to compute the cost of an optimal
  extension of the assignment $\ell$ to the subtree $I(\cf_T)$ (for
  $\omega_T^{i,a}$) or $I(x_i)$ (for $\omega_i^a$). We therefore
  define:

  $$\omega_i^a(\ell) = \mathop\bigoplus_{\cf_T\in\C_i} \omega_T^{i,a}(\ell[S'_i])$$

  Indeed, if $\omega_T^{i,a}$ computes the cost of an optimal
  extension to the subtree rooted in $\cf_T$, an optimal extension to
  $I(x_i)$ is just the $\oplus$ of each optimal extension on each
  child, since the scopes $S'_i$ do not intersect ($I$ is a
  tree). The DAG node uses the $\oplus$ aggregation operator on non-intersecting scopes.

  The definition of the DAG nodes $\omega_T^{i,a}$ is more
  involved. It essentially requires:
\begin{enumerate}
\item to combine the cost of $\cf_T$ with the unary cost
  functions on leaf variables in $T$ (this is achieved by $U_T$ nodes)
  and costs of optimal extensions subtrees rooted in other non-leaf
  variables (this is achieved by $\omega_j^b$ nodes). 
\item to eliminate in this function all extra variables in the scope
  $T$ except $x_i$ if $x_i\in E$. In this case, $x_i$'s value will be
  set in $\ell$ and eliminated on higher levels.
\end{enumerate}

If $x_i\in E$ or else if $\ell[x_i]=a$, this leads to the following
definition of $\omega_T^{i,a}(\ell)$:
  \begin{equation}\label{eq:kitu}
  \min_{\myatop{\ell'\in\L^{T\cap E}}{(x_i\in S \lor \ell'[x_i]=a)}} \Big[U_T^{(\ell\cup\ell')[T-\textit{leaf}(T)]}(\ell[\textit{leaf}(T)]) \bigoplus_{x_j\in (T-\textit{leaf}(T)-\{x_i\})} \omega_j^{\ell[x_j]}(\ell[S_j])\Big]
  \end{equation}
  
  Otherwise ($x_i\in S$ and $\ell[x_i]\neq a$),
  $\omega_T^{i,a}(\ell)=\top$. This captures the fact that there is no
  optimal extension of $\ell$ that extends $(x_i,a)$ since $\ell$ is
  inconsistent with $x_i=a$.

  If we consider the root variable $x_i \in S$ of $I$, the
  $\omega_i^a$ nodes provide the cost of a best extension of any
  assignment $\ell$ (if $\ell[x_i]=a$) or $\top$ otherwise. An
  ultimate root DAG node using the aggregation operator $\min$ over
  all these $\omega_i^a$ will therefore return the optimal extension
  of $\ell\in\L^S$ to all variables in $I(x_i)$, including extra variables.
  
  From equation~\ref{eq:kitu}, one can see that nodes $\omega_T^{i,a}$
  use the aggregation operator $\min$ on intermediary nodes. These
  intermediary nodes combine the node $U_T$ and $\omega_j$ with
  $\oplus$ which have non-intersecting scopes.

  Overall all those nodes form a DAG (actually a tree). In this tree,
  every node with the aggregation operation $\oplus$ is applied to
  operands with non-intersecting scopes, as required in
  Property~\ref{thm:oplussafe}. Similarly, every node with the $\min$
  aggregation operation is applied to functions whose scope is always
  identical, as required by Property~\ref{thm:minsafe}. Note that the
  definitions of the $\omega_i^a$ and $\omega_T^{i,a}$ are linear
  respectively in the number of children of $\cf_T$ or $x_i$
  respectively. So, we have a filtering DAG satisfying Definition
  \ref{def:polydecom}.
\end{proof}



For a global cost function which is Berge-acyclic
network-decomposable, and therefore also polynomially DAG-filterable
(as Theorem~\ref{theo-decompo} shows), a natural question is which
approach should be preferred.  The main desired effect of enforcing
local consistencies is that it may increase the lower bound
$W_\varnothing$.  From this point of view,
Theorems~\ref{theo-dac} and~\ref{theo-vac} give a clear
answer for a single global cost function.  
\begin{itemize}
\item Since OSAC~\cite{Cooper2010}
is the strongest form of arc consistency (implying also
VAC), the strongest possible lower bound will be obtained by enforcing
OSAC on the network-decomposed global cost function.  The size of the OSAC
linear program being exponential in the arity of the cost functions,
the bounded arities of the network decomposed version will define a
polynomial-size linear program. This however requires an LP solver.
\item 
  If a network containing network-decomposed global cost
  functions is VAC, the underlying global cost functions are also 
  VAC. As a result, good quality lower bounds can be obtained by 
  enforcing  VAC. These lower bounds are not as good as those 
  obtained by OSAC, but VAC is usually much faster than OSAC.
\item T-DAC is otherwise extremely efficient, easy to implement,
  offering good lower bounds and incrementality for little
  effort. However, when several global cost functions co-exist in a
  problem, a variable order that is a topological sort of all these
  global cost functions may not exist. In this case, using a
  topological order for each scope independently would lead to the
  creation of cycles leading to possibly infinite
  propagation.
  It may then be more attractive to use 
  filtering DAGs to
  process these cost functions.
\end{itemize}

Finally, it should be noted that Theorem~\ref{theo-dac} only
guarantees that T-DAC on a global cost function or its topologically
sorted Berge-acyclic network-decomposition provide the same bound
contribution. If a consistency stronger than DAC* is enforced (such as FDAC* or EDAC*),
it may be more powerful when enforcedt on the global cost function itself
than on its network-decomposition, thus giving an advantage to
filtering DAGs.

In the end, the only truly informative answer will be provided by experimental results, as proposed in Section~\ref{sec-expe}.


\section{Experiments}
\label{sec-expe}

In this section, we put theory into practice and
demonstrate the practicality of the transformations 
described in the previous sections in solving
over-constrained and optimization problems. 
We implemented cost functions with our transformations 
in \texttt{\small toulbar2} v0.9.8\footnote{\url{http://www.inra.fr/mia/T/toulbar2/}}. 
For each cost function used in our benchmark problems,
we implemented {weak Existential Directional Generalized Arc Consistency (EDGAC*)~\cite{GHZL2005,LL2010,LL2012asa}, a local consistency combining AC, DAC and EAC, using DAG-filtering (called {\em DAG-based} approach in the sequel) with pre-computed tables (as described in Section~\ref{sec:proj-constr}).
When possible, we also implemented a  Berge-acyclic network-decomposition 
to be propagated using EDGAC* (called {\em network-based}
approach). We ignore weaker forms of local consistency such as Arc
Consistency or 0-inverse consistency~\cite{MCT2009} as previous
experiments with global cost functions have shown that these weak
local consistencies lead to much less efficient solving~\cite{LL2012asa}.

In the experiments, we used default options for \texttt{\small
  toulbar2}, including a new hybrid best-first search strategy introduced
in~\cite{Katsirelos15a}, which finds good solutions more rapidly compared to classical depth-first search.  The default variable ordering strategy is
dom/wdeg~\cite{boussemart2004} with Last Conflict~\cite{Lecoutre09},
while the default value ordering consists, for each variable, in
choosing first its {\em fully supported value} as defined by EDGAC*.  At each node during search, including the root node, we
eliminate dominated values using Dead End Elimination
pruning~\cite{Desmet1992,lecoutre2012,Givry13a} and we eliminate all
variables having degree less than two using variable
elimination~\cite{Bertele72,Larrosa00}. At the root node only, this is
improved by pairwise decomposition~\cite{Favier11a} and we also
eliminate all variables having a functional or bijective binary
relation ({\em e.g.,} an equality constraint) with another variable.
The tests are conducted on a single core of an Intel Xeon E5-2680
(2.9GHz) machine with 256GB RAM.

We performed our experiments on four different benchmark problems.  For the
two first benchmarks (car sequencing and nonogram), we have a model
with Berge-acyclic network-decompositions, whereas for the two others
(well-formed parentheses and market split), we do not. Each
benchmark has a 5-minute timeout. We randomly generate 30 instances
for each parameter setting of each benchmark. We first compare the
number of solved instances, \ie finding the optimum and proving its
optimality (no initial upper bound).  We report the average run-time 
in seconds and consider that an unsolved problem requires the maximum
available time (timeout). When all instances are solved, we also report 
the average number of backtracks (or '--' otherwise).
The best results are marked in bold (taking first into account the number of solved instances in less than 5 minutes and secondly CPU time). 

\subsection{The Car Sequencing Problem} 

The car sequencing problem (prob001 in CSPLib, \cite{BWL1986}) requires 
sequencing $n$ cars of different types specified by a set of
options. For any subsequence of $c_i$ consecutive cars on the assembly
line, the option $o_i$ can be installed on at most $m_i$ of them. 
This is called the \emph{capacity} constraint. 
The problem
is to find a production sequence on the assembly line such that each
car can be installed with all the required options without violating the capacity
constraint.  We use $n$ variables with domain $1$ to $n$ to model this
problem. The variable $x_i$ denotes the type of the $i^{th }$ car in
the sequence.  One \gc{GCC} (global cardinality~\cite{gcc}) constraint ensures all cars are scheduled
on the assembly line.  We post $n-c_i+1$ \gc{Among} constraints~\cite{BC1994} for
each option $o_i$ to ensure the capacity constraint is not
violated. We randomly generate $30$ over-constrained instances, 
each of which has $5$ possible
options, and for each option $o_i$, $m_i$ and $c_i$ 
are randomly generated in such a way that 
$1\leq m_i < c_i \leq 7$.  Each car in each instance is randomly
assigned to one type, and each type is randomly assigned to a set of
options in such a way that each option has $1/2$ chance to be included
in each type. To introduce costs, we randomly assign unary costs
(between 0 to 9) to each variable.

The problem is then modeled in three different ways. 
The first model is obtained by replacing each 
\gc{Among} constraint by the \gc{W\_Among}$^{var}$ cost function and 
the  \gc{GCC} constraint by the \gc{W\_GCC}$^{var}$ cost 
function. 
\gc{W\_Among}$^{var}$ returns a cost equal to the number of variables 
that need to be re-assigned to satisfy the \gc{Among} constraint. 
\gc{W\_GCC}$^{var}$ is used as a global constraint and returns $\top$ on violation~\cite{LL2012asa}. 
This model is called ``flow\&DAG-based'' approach in Table~\ref{tab:car}. 

The second model, identified as ``DAG-based'' in  Table~\ref{tab:car}, 
uses a set of \gc{W\_Among}$^{var}$ cost functions to encode \gc{GCC}, 
\ie replacing the single global cost function exploiting a flow network
by a set of DAG-based global cost functions~\cite{TR-Lee2014}. 

In the third model, identified as ``network-based'' in
Table~\ref{tab:car}, each of the \gc{W\_Among}$^{var}$ in the previous DAG-based
model is decomposed into a set of ternary cost functions with extra
variables as described in Section \ref{s:dec}.

Table~\ref{tab:car} gives the experimental results. 
Column $n'$ indicates the sum of the number of original variables ($n$) and the number of extra variables added in the network-based approach. Column $n''$ gives the total number of unassigned variables after pre-processing.
We observe that the network-based approach performed the worst among the three approaches. The 
DAG-based approach is up to six times faster than the flow\&DAG-based approach on completely solved instances ($n \leq 13$) and solves more instances within the 5-minute time limit.
Surprisingly, it also develops the least number of backtracks on completely solved instances. We found that the initial lower bound produced by weak EDGAC on the flow\&DAG-based approach can be lower than the one produced by the DAG-based approach. This is due to different orders of EPTs 
done by the two approaches resulting in different lower bounds. Finding an optimal order of integer arc-EPTs is NP-hard~\cite{MT2004}. Recall that EDGAC has a chaotic behavior compared to OSAC or VAC and encoding \gc{GCC} into a set of \gc{W\_Among}$^{var}$ will produce more EPTs (each \gc{W\_Among}$^{var}$ moving unary costs differently) creating new opportunities for the overlapping \gc{W\_Among}s$^{var}$ to deduce a better lower bound.

\ignore{
Table~\ref{tab:car} gives the experimental results. The results show
that enforcing FDGAC* reduces the number of backtracks at least $25$
times more than strong $\varnothing$IC, and $10$ times more than
GAC*. Besides, enforcing FDGAC* runs at least $1.2$ times faster than
GAC*, and $4$ times faster than strong $\varnothing$IC.
}

\begin{table}[htb]
    \caption{Car sequencing problem (timeout=5min). For each approach, we give the number of instances solved (\#), the mean number of backtracks only if all the instances have been completely solved (bt.), and the mean CPU time over all the instances (in seconds).}
    \label{tab:car}
    \centering
{\small
    \begin{tabular}{|c|c|r|r|c|r|r|r|r|c|r|r|}
        \hline $n$
            & \multicolumn{3}{|c|}{flow\&DAG-based}
            & \multicolumn{3}{|c|}{DAG-based}
            & \multicolumn{5}{|c|}{network-based} \\
        \cline{2-12}
            & \# & bt.\hspace*{3mm} & time
            & \# & bt.\hspace*{3mm} & time
            & $n'$ & $n''$ & \# & bt.\hspace*{3mm} & time  \\
        \hline
8 & 30 & 19.7 & 0.10 & \bf 30 & 13.6 & 0.03 & 154 & 102 & 30 & 210.9 & 0.11\\
9 & 30 & 58.1 & 0.31 & \bf 30 & 36.4 & 0.09 & 198 & 135 & 30 & 798.5 & 0.41\\
10 & 30 & 109.9 & 0.88 & \bf 30 & 82.1 & 0.21 & 245 & 170 & 30 & 3,372 & 2.0\\
11 & 30 & 193.2 & 2.1 & \bf 30 & 156.7 & 0.50 & 293 & 206 & 30 & 17,286 & 12.2\\
12 & 30 & 522.0 & 8.0 & \bf 30 & 306.1 & 1.4 & 344 & 245 & 29 & --~~~ & 90.5\\
13 & 30 & 1,251 & 22.6 & \bf 30 & 963.1 & 4.9 & 396 & 285 & 10 & --~~~ & 233.5\\
14 & 26 & --~~~ & 86.4 & \bf 30 & 3,227 & 20.4 & 451 & 328 & 2 & --~~~ & 280.3\\
15 & 17 & --~~~ & 160.4 & \bf 29 & --~~~ & 72.1 & 507 & 372 & 2 & --~~~ & 283.8\\
16 & 12 & --~~~ & 204.9 & \bf 23 & --~~~ & 111.8 & 566 & 419 & 1 & --~~~ & 297.3\\
        \hline
    \end{tabular}
}
\end{table}

\subsection{The Nonogram Problem} 

The nonogram problem (prob012 in CSPLIB~\cite{N1994}) is a typical
board puzzle on a board of size $p \times p$. Each row and column has
a specified sequence of shaded cells.  For example, a row specified
$(2,3)$ contains two segments of shaded cells, one with length $2$ and
another with length $3$. The problem is to find out which cells need
to be shaded such that every row and every column contain the specific
sequence of shaded cells.  We model the problem by $n=p^2$ variables,
in which $x_{ij}$ denotes whether the cell at the $i^{th}$ row and
$j^{th}$ column needs to be shaded. In the experiments, we generate
random instances from perturbed white noise images.  A random
solution grid, with each cell colored with probability 0.5, is
generated.  A feasible nonogram problem instance is created from the
lengths of the segments observed in this random grid. To make it infeasible, 
for each row and each column, the list of segment lengths is randomly
permuted, \ie, its elements are shuffled randomly. If a list is empty, then a segment of random length $l$ is added ($0 < l < p$).   
We model and
soften the restrictions on each row and column by
\gc{W\_Regular}$^{var}$, resulting in three models: flow-based, DAG-based,  
and network-based. 
The flow-based model uses the \gc{W\_Regular}$^{var}$ implementation based on minimum cost flows described in~\cite{LL2012asa}, the DAG-based version uses the filtering DAG (see~\cite{TR-Lee2014} for implementation details), and the network-based version uses the decomposition presented in Example~\ref{ex:wregular}.

Table~\ref{tab:logigraphe} shows the results of the experiments. 
For medium-size problems ($p \leq 9$, $n \leq 81$), the network-based approach develops 
the least number of backtracks on average compared to the two other approaches. Value and variable elimination at pre-processing reduces the number of variables by a factor greater than two. The flow-based and DAG-based approaches develop the same number of backtracks, producing the same EPTs, but the dynamic programming algorithm implemented in the DAG-based approach is about one order-of-magnitude faster than the minimum cost flow algorithm used in the flow-based approach. Moreover, the network-based approach is at least one order-of-magnitude faster than the DAG-based approach.
On the largest instances, because of an exponential increase of the number of backtracks, the network-based approach becomes unable to solve all the instances in less than five minutes, but still outperforms the other two approaches.
}
\ignore{
Table \ref{tab:logigraphe} shows the results of the experiment. In a
time limit of $5$ minutes, enforcing strong $\varnothing$IC could only
solve relatively small instances ($n=6$).  Enforcing GAC* could solve
relatively larger ones ($n=8$). For $n=10$, all instances can only be
solved when FDGAC* is enforced, and each instance is solved within a
minute.

Table \ref{tab:logigraphe} also gives the comparison on the
\gc{W\_Regular}$^{var}$ function when it is enforced by polynomially
decomposable approach and flow-based projection-safe approach
\cite{LL2009}.  The two approaches result in the same search tree when
we enforce the same consistency, but the run-time varies. Results show
that using the polynomially decomposable approach speeds up searching
by at least $3$ times than using the flow-based projection-safe
approach, due to the large constant factor behind the flow algorithm.
}

\begin{table}[htb]
    \caption{Nonogram (timeout=5min). For each approach, we give the number of instances solved (\#), the mean number of backtracks only if all the instances have been completely solved (bt.), and the mean CPU time over all the instances (in seconds).}
    \label{tab:logigraphe}
    \centering
{
\tabcolsep=0.11cm
    \begin{tabular}{|c|c|r|r|c|r|r|r|r|c|r|r|}
        \hline $n$
            & \multicolumn{3}{|c|}{flow-based}
            & \multicolumn{3}{|c|}{DAG-based}
            & \multicolumn{5}{|c|}{network-based} \\
        \cline{2-12}
            & \# & bt.\hspace*{3mm} & time
            & \# & bt.\hspace*{3mm} & time
            & $n'$ &  $n''$ & \# & bt.\hspace*{3mm} & time \\
        \hline
36 & 30 & 11.4 & 0.09 & 30 & 11.4 & 0.01 & 96 & 18 & \bf 30 & 4.4 & 0.00\\
49 & 30 & 41.8 & 0.29 & 30 & 41.8 & 0.05 & 133 & 42 & \bf 30 & 22.5 & 0.01\\
64 & 30 & 186.4 & 2.3 & 30 & 186.4 & 0.26 & 176 & 64 & \bf 30 & 90.3 & 0.01\\
81 & 30 & 254.4 & 4.5 & 30 & 254.4 & 0.50 & 225 & 97 & \bf 30 & 248.9 & 0.04\\
100 & 25 & --~~~ & 86.0 & 30 & 3,581 & 10.8 & 280 & 131 & \bf 30 & 3,861 & 0.47\\
121 & 19 & --~~~ & 166.8 & 26 & --~~~ & 72.8 & 341 & 171 & \bf 30 & 12,919 & 1.6\\
144 & 3 & --~~~ & 279.6 & 9 & --~~~ & 233.2 & 408 & 224 & \bf 28 & --~~~ & 44.1\\
169 & 0 & --~~~ & 300.0 & 5 & --~~~ & 266.5 & 481 & 267 & \bf 23 & --~~~ & 116.4\\
196 & 0 & --~~~ & 300.0 & 1 & --~~~ & 297.1 & 560 & 330 & \bf 7 & --~~~ & 257.1\\
        \hline
    \end{tabular}
}
\end{table}

\subsection{The Well-formed Parentheses problem}

In this experiment, we use a network-decomposition of the
\gc{W\_Grammar} constraint whose structure is depicted in
Figure~\ref{fig:grammarwcsp}. It is obviously not Berge-acyclic.  This
experiment will allow us to see the behavior of network-decompositions
when they are not Berge-acyclic.

Given a set of $2p$ even length intervals within $[1, \ldots, 2p]$, the
well-formed parentheses problem is to find a string of parentheses with length $2p$ such that
substrings in each of the intervals are well-formed parentheses. We
model this problem by a set of $n = 2p$ variables. Domains of size 6 are composed of three different parenthesis types: $()[]\{\}$.
We post a \gc{W\_Grammar}$^{var}$ cost function on each interval to represent
the requirement of well-formed parentheses. We generate $2p-1$ even
length intervals by randomly picking their end points in $[1, \ldots,
2p]$, and add an interval covering the whole range to ensure that all
variables are constrained. 
We also randomly assign unary costs (between 0 and 10) to each
variable.

\begin{figure}[htbp]
\begin{center}
\includegraphics{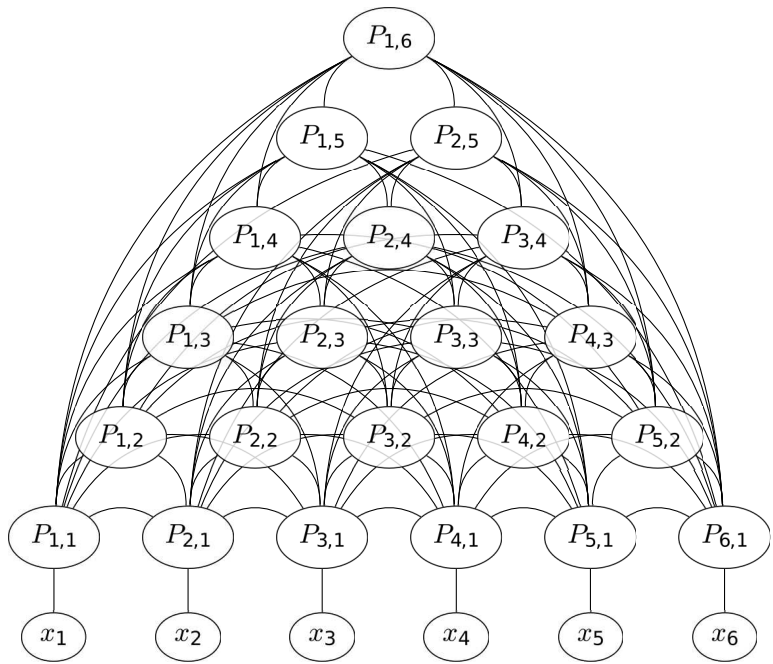}
\end{center}
\caption{Network associated to the decomposition of \gc{W\_Grammar}$^{var}(x_1,x_2,x_3,x_4,x_5,x_6)$.}\label{fig:grammarwcsp}
\end{figure}

We compare two models.
The first model, the DAG-based approach, is obtained by modeling each
\gc{W\_Grammar}$^{var}$ cost function using a filtering DAG approach.

In the second network-based model,  
we decompose each \gc{W\_Grammar}$^{var}$ cost function involving $m$ variables using $m(m+1)/2$ extra variables $P_{i,j}$ ($1 \leq j \leq m, 1 \leq i \leq m-j+1$) whose value corresponds to either a symbol value (for $j=1$) or a pair of a symbol value $S$ and a string length $k$ ($1 \leq k < j$, for $j \geq 2$) associated to the substring $(i,i+j-1)$, starting from $i$ of length $j$. Ternary cost functions link every triplet $P_{i,j}$, $P_{i,k}$, $P_{i+k,j-k}$ so that there exists a compatible rule \verb|S->AB| in order to get the substring $(i,i+j-1)$ from the two substrings $(i,i+k-1)$ and $(i+k,i+j-1)$ when $P_{i,j}=(S,k)$, $P_{i,k}=(A,u)$, $P_{i+k,j-k}=(B,v)$ with $u<k$, $v<j-k$. Binary cost functions are used to encode the terminal rules between $P_{i,1}$ ($i \in [1,m]$) and the original variables. 

\begin{table}[htb]
    \caption{Soft well-formed parentheses (timeout=5min). For each approach, we give the number of instances solved (\#), the mean number of backtracks only if all the instances have been completely solved (bt.), and the mean CPU time over all the instances (in seconds).}
    \label{tab:well-formed}
    \centering
    \begin{tabular}{|c|c|r|r|r|r|c|r|r|}
        \hline $n$
            & \multicolumn{3}{|c|}{DAG-based}
            & \multicolumn{5}{|c|}{network-based} \\
        \cline{2-9}
            & \# & bt.\hspace*{3mm} & time
            & $n'$ & $n''$ & \# & bt.\hspace*{3mm} & time \\
        \hline
8 & \bf 30 & 3.5 & 0.06 & 145 & 131 & 30 & 676.8 & 0.21\\
10 & \bf 30 & 6.6 & 0.60 & 250 & 228 & 30 & 63,084 & 12.5\\
12 & \bf 30 & 9.1 & 3.8 & 392 & 361 & 7 & --~~ & 260.3\\
14 & \bf 30 & 21.1 & 8.3 & 580 & 538 & 0 & --~~ & 300\\
16 & \bf 29 & --~~ & 48.9 & 841 & 785 & 0 & --~~ & 300\\
18 & \bf 23 & --~~ & 115.6 & 1,146 & 1,075 & 0 & --~~ & 300\\
        \hline
    \end{tabular}
\end{table}

Results are shown in Table \ref{tab:well-formed}.
The network-based approach is clearly inefficient. It has $n'=1,146$ variables on average for $p=9$ ($n=18$). The number of backtracks increases very rapidly due to the poor propagation on a non Berge-acyclic network. The DAG-based approach clearly dominates here. Notice that the DAG-based propagation of \gc{W\_Grammar}$^{var}$ can be very slow with around 1 backtrack per second for $p=9$.

As a second experiment on well-formed parentheses, 
we generate new instances using only one hard global grammar constraint and a set of $p(2p-1)$ binary cost functions corresponding to a complete graph. For each possible pair of positions, if a parentheses pair ($()$, $[]$, or $\{\}$) is placed at these specific positions, then it incurs a randomly-generated cost (between 0 to 10). A single \gc{W\_Grammar}$^{var}$ cost function is placed on all the $n=2p$ variables, which
returns $\top$  on violation (a Grammar constraint), ensuring that the whole string has well-formed parentheses.
As in the experiments of Table \ref{tab:well-formed}, the two models 
are characterized by how the consistency is enforced on the \gc{W\_Grammar}$^{var}$ cost function: a filtering DAG for the DAG-based approach, 
a network-decomposition for the network-based approach. 

Results are shown in Table \ref{tab:well-formed-hard}.
The network-based approach still develops more backtracks on average for $p \geq 6$ ($n \geq 12$) than the DAG-based approach but the difference is less important than in the previous experiment because there is a single grammar constraint. Surprisingly, for $p \leq 5$, the network-based approach develops less backtracks than the DAG-based approach. The network-based approach benefits from variable elimination that exploits bijective binary relations occurring in the decomposed hard grammar cost function. Moreover, having only one global constraint implies less extra variables for the network-based approach than in the previous experiment ($n'=189$ for $p=9$ instead of $n'=1,146$). The propagation speed of the network-based approach is much better than the DAG-based approach, with $\mathord{\sim} 4,100~bt./sec$ instead of $\mathord{\sim} 23~bt./sec$ for $p=9$, resulting in better overall time efficiency compared to the DAG-based approach, being up to 8 times faster for $p=7$ to solve all the thirty instances.

\begin{table}[htb]
    \caption{Well-formed parentheses (single hard global constraint) with additional binary cost functions (timeout=5min). For each approach, we give the number of instances solved (\#), the mean number of backtracks if available (bt.), and the mean CPU time (in seconds).}
    \label{tab:well-formed-hard}
    \centering
    \begin{tabular}{|c|c|r|r|r|r|c|r|r|}
        \hline $n$
            & \multicolumn{3}{|c|}{DAG-based}
            & \multicolumn{5}{|c|}{network-based} \\
        \cline{2-9}
            & \# & bt.\hspace*{3mm} & time
            & $n'$ &  $n''$ & \# & bt.\hspace*{3mm} & time \\
        \hline
8 & 30 & 37.4 & 0.11 & 44 & 33 & \bf 30 & 19.5 & 0.04\\
10 & 30 & 105.5 & 0.51 & 65 & 51 & \bf 30 & 100.5 & 0.12\\
12 & 30 & 265.4 & 2.4 & 90 & 73 & \bf 30 & 916.5 & 0.38\\
14 & 30 & 887.7 & 14.0 & 119 & 99 & \bf 30 & 6,623 & 1.7\\
16 & 30 & 3,037 & 80.6 & 152 & 129 & \bf 30 & 54,544 & 12.0\\
18 & 13 & --~~~ & 257.9 & 189 & 163 & \bf 30 & 394,391 & 95.7\\
        \hline
    \end{tabular}
\end{table}

\ignore{
\subsection{The Minimum Energy Broadcasting Problem} 

The task (CSPLib prob048) \cite{DK2007} is to find a broadcast tree
connecting $n$ wireless routers on the network, one of which is the
root that broadcasts messages to other routers. A link between any two
routers is either not available, or requiring an energy level $e_{ij}$
to transmit.  The energy consumed by a router is equal to the maximum
energy among the links that send messages.  The task is to find a
broadcast tree that minimizes the sum of the energy consumed by
routers.  We use $n$ variables, where $x_i \mapsto j$ denotes the
$i^{th}$ router receiving messages from the $j^{th}$ router. $j \in
D(x_i)$ iff the link between the $i^{th}$ and $j^{th}$ routers is
available. One hard global constraint \gc{Tree} \cite{NMJ2005} is
posted to ensure the solution is a valid broadcast tree. We post one
\gc{W\_Max}$(\X, c_i)$ for each $i$ to give the energy consumed by the
$i^{th}$ router, where $c_i(x_j,k)$ is defined as follows.
\[
c_i(x_j,k) = \left\{ \begin{array}{ll}
    e_{ij}, & k = i \\
    0, & k \neq i
  \end{array} \right.
\]
We generate $10$ instances of randomly connected network for each
configuration of $n$ routers and $m$ available links. Links are
uniformly distributed between all pairs of routers with random energy
levels. We also implement the GAC enforcement algorithm of the
\gc{tree} constraint from Beldiceanu \etal \shortcite{NMJ2005} to
remove unfeasible values during search.

Results are shown in Table \ref{tab:meb}, which is different from the
previous experiments. FDGAC* can reduce the search spaces up to $6$
times more than GAC*, but runs $2$ times slower than GAC*. The hard
\gc{tree} global constraint accounts for the results, which can only
achieve strong $\varnothing$IC and GAC* but not FDGAC*.

\begin{table}[htb]
    \caption{Minimum energy broadcast (timeout=10min)}
    \label{tab:meb}
    \centering
    \begin{tabular}{|c|c|c|c|c|c|c|c|c|c|c|}
        \hline $n$ & $m$
            & \multicolumn{3}{|c|}{strong $\varnothing$IC}
            & \multicolumn{3}{|c|}{GAC*}
            & \multicolumn{3}{|c|}{FDGAC*} \\
        \cline{3-11} &
            & \# & time & bt.\quad
            & \#  & time & bt.\quad
            & \#  & time & bt.\quad\\

        \hline 20 & 40 & \best{10} & 8.03 & 61806 & \best{10} & \best{1.64} & 9080 & \best{10} &
        2.03 & \best{1352} \\
        \hline 20 & 60 & \best{10} & 26.08 & 153237 & \best{10} & \best{13.54} & 55317 &
        \best{10} & 37.77 & \best{16694} \\
        \hline 20 & 100 & \best{10} & 13.55 & 69453 & \best{10} & \best{12.50} & 37323 &
        \best{10} & 41.78 & \best{12106} \\
        \hline 25 & 50 & \best{10} & 72.55 & 303422 & \best{10} & \best{15.34} & 52855 &
        \best{10} & 15.48 & \best{4849} \\
        \hline 25 & 75 & 5 & 301.68 & 1044058 & \best{7} & \best{229.10} & 625415
        & 5 & 176.45 & \best{34108} \\
        \hline 25 & 125 & \best{5} & 50.27 & 121473 & \best{5} & \best{43.04} & 73262 & 3
        & 166.85 & \best{22005} \\
        \hline 30 & 60 & 4 & 216.44 & 557575 & \best{9} & \best{101.33} & 233610 &
        \best{9} & 118.48 & \best{21424} \\
        \hline 30 & 90 & 1 & 401.92 & 1050414 & \best{2} & \best{162.63} & 293660
        & 1 & 305.96 & \best{43238} \\
        \hline
    \end{tabular}
\end{table}
}

\subsection{The Market split problem}

In some cases, problems may contain global cost functions which are not network-decomposable because the bounded arity cost function decomposition is not polynomial in size. However, if the network is
Berge-acyclic, Theorem~\ref{theo-dac} still applies. With exponential size networks, filtering will take exponential time, but may yield strong lower bounds. 
The 
global constraint $\sum_{i=1}^n a_i x_i = b$ ($a$ and $b$ being integer
coefficients) can be easily decomposed by introducing $n-3$ intermediate
sum variables $q_i$ and ternary sum constraints of the form $q_{i-1} + a_i x_i = q_i$ with $i \in [3,n-2]$ and $a_1 x_1 + a_2 x_2 = q_2$, $q_{n-2} + a_{n-1} x_{n-1} + a_n x_n = b$.  More generally, ternary decompositions can be built for the more general case where the right hand side of the constraint uses any relational operator, including any Knapsack constraint. In this representation, the extra variables $q_i$ have $b$ values in their domain, which is exponential in the size of the representation of $b$ (in $\log(b)$). As for the pseudo-polynomial Knapsack problem, if $b$ is polynomially bounded by the size of the global constraint, propagation will be efficient. It may otherwise be exponential in it. 

As an example, we
consider a generalized version of the Knapsack problem, the Market Split problem defined in~\cite{CornuejolsD98,Trick2003}. The goal is to minimize $\sum_{i=1}^n o_i x_i$ such that $\sum_{i=1}^n a_{i,j} x_i = b_j$ for
each $j \in [1,m]$ and $x_i$ are Boolean variables in $\{0,1\}$ ($o$, $a$ and $b$ being positive integer coefficients). We compared the Berge-acyclic decomposition in \texttt{\small toulbar2} 
(version 0.9.8)
with a direct application of the Integer Linear Programming solver \texttt{\small cplex} (version 12.6.3.0). We used a depth-first search with a static variable ordering (in decreasing $\frac{o_i}{\sum_{j=1}^m a_{i,j}}$ order) and no pre-processing  (options {\em -hbfs: -svo -o -nopre}) for \texttt{\small toulbar2}. We generated random instances with random integer coefficients in $[0,99]$ for $o$ and $a$, and $b_j =
\lfloor \frac{1}{2} \sum_{i=1}^n a_{i,j} \rfloor$. We used a sample of $30$ problems with $m=4, n=30$ leading to $\max b_j = 918$. The mean number of nodes developed in \texttt{\small toulbar2} was 29\%\ higher than in \texttt{\small cplex}, which was on average 4.5 times faster than \texttt{\small toulbar2} on these problems. The 0/1 knapsack problem probably represents a worst case situation for \texttt{\small toulbar2}, given that \texttt{\small cplex} embeds much of what is known about 0/1 knapsacks (and only part of these extend to more complicated domains).
Possible avenues to improve \texttt{\small toulbar2} results in this unfavorable situation would be to use a
combination of the $m$ knapsack constraints into one as suggested in~\cite{Trick2003}.


\section{Conclusion}
\label{conclusion}

Existing tools for solving optimization on graphical models are usually restricted to cost functions involving a reasonably small set of variables, often using an associated cost table. But problem modeling may require to express complex conditions on a non-bounded set of variables. This has been solved in Constraint Programming by using Global Constraints. 
Our results contribute to lift this approach to the more general 
framework of cost function networks, 
allowing to express and efficiently process both global constraints 
and global cost functions, using dedicated soft arc consistency filtering.

Our contributions are four-fold. First, we define the
\emph{tractability} of a global cost function, and study its behavior
with respect to projections/extensions with different arities of cost
functions. We show that tractable $r$-projection-safety is always
possible for projections/extension to/from the nullary cost function,
while it is always impossible for projections/extensions to/from
$r$-ary cost functions for $r \geq 2$. When $r = 1$, we show that a
tractable cost function may or may not be tractable
$1$-projection-safe.  Second, we define {\emph{polynomially
  DAG-filterable cost functions}} and show them to be tractable
$1$-projection-safe.  We give also a polytime dynamic programming
based algorithm to compute the minimum of this class of global cost
functions.  We also show that the cost function
\gc{W\_Grammar}$^{var}$is {polynomially
DAG-filterable} and tractable $1$-projection-safe. The same results applies to \gc{W\_Among}$^{var}$, \gc{W\_Regular}$^{var}$, and \gc{W\_Max}/\gc{W\_Min} as shown in the associated technical report~\cite{TR-Lee2014}.
Third, we show that dynamic programming can be emulated by soft consistencies such as DAC and VAC if a suitable network decomposition of the global cost function into a Berge-acyclic network of bounded arity cost functions exists. In this case, local consistency on the decomposed network is essentially as strong as on the global cost function. This approach is shown to be a specific case of the previous approach in the sense that any Berge-acyclic network-decomposable cost function is also {polynomially DAG-filterable}.
Finally, we perform
experiments and compare the DAG-based and network-based approaches, in
terms of run-time and search space. 
The DAG-based approach dominates when there are several overlapping global cost functions.
On the contrary, the network-based approach performs better if there are few global cost functions resulting in a reasonable number of extra variables.
This is complexified by additional techniques such as boosting search by variable elimination \cite{Larrosa00}, Weighted Degree heuristics~\cite{boussemart2004}, and Dead-End Elimination~\cite{Givry13a} which work better with the low-arity cost functions of the network-based approach. 
We also compare against the flow-based approach~\cite{LL2012asa} and
show that our approaches are usually more competitive.
On Berge acyclic network-decomposable cost function just as \gc{W\_Regular}$^{var}$, this is not unexpected as the dynamic programming based propagation or its emulation by T-DAC essentially solves a shortest path problem, which can easily be reduced to the more general min-cost flow problem used in~\cite{LL2012asa} which can itself be reduced to LP~\cite{NetFlo}. As problems become more specific, algorithmic efficiency can increase.

An immediate possible future work is to investigate other sufficient
conditions for {polynomially DAG-filterable} and also tractable
$1$-projection-safety.  Our results only provide a partial
answer. Whether there exists necessary conditions for  {polynomially DAG-filterable} is unknown.  
Besides  {polynomially DAG-filterable}, we
would like to investigate other form of tractable $1$-projection-safety and techniques for enforcing typical consistency
notions efficiently.  

\subsubsection*{Acknowledgements} This work has been partly funded by the ``Agence Nationale de la Recherche'' (ANR-10-BLA-0214) and a PHC PROCORE project number 28680VH.

\bibliographystyle{elsarticle-harv}
\bibliography{journal}

\end{document}


%
\title{A Catalog of Polynomially DAG-Decomposable Global Cost Functions}
\author{J.H.M.\ Lee, K.L.\ Leung and Y.\ Wu\\
Department of Computer Science and Engineering\\
The Chinese University of Hong Kong\\
Shatin, N.T., Hong Kong SAR\\
{\tt \{jlee,klleung,ywu\}@cse.cuhk.edu.hk}
}

\maketitle

\section{Introduction}

This companion manuscript should be read in conjunction with the paper of  \CiteInLine{All15}, which contains the necessary definitions, notations and theorems.

\section{Polynomial DAG-Decomposable Global Cost Functions}

In the following, we show that
\gc{W\_Among}$^{var}$,\gc{W\_Regular}$^{var}$, 
\gc{W\_Max}, and \gc{W\_Min} are Polynomially DAG-Decomposable and thus
Tractable Projection-Safe.
For simplicity, we
assume the scope of each global cost function to be $S = \{x_1, \ldots,
x_n\}$.

\subsection{The \gc{W\_Among}$^{var}$ Cost Function}

\gc{W\_Among}$^{var}$ is the cost function variant of the softened version of \gc{Among} using the corresponding variable-based violation measure~\cite{ACNA2008}.

\begin{definition}\cite{ACNA2008} 
  Given a set of values $V$, a lower bound $lb$ and an upper bound  $ub$ such that $0 \leq lb \leq ub \leq |S|$.  \emph{\gc{W\_Among}$^{var}(S, lb, ub, V)$} returns $max\{0,
  lb-t(\ell,V), t(\ell,V)-ub\}$, where $t(\ell, V) = |\{i \mid
  \ell[x_i] \in V\}|$ for each tuple $\ell \in \L(S)$.
\end{definition}

\begin{example}
  \label{wamongex1}
  Consider $S = \{x_1,x_2,x_3\}$, where $D(x_1) = D(x_2) = D(x_3) = \{a,b,c,d\}$.  
  The cost returned by \gc{W\_Among}$^{var}(S, 1, 2, \{a,b\})(\ell)$ is:
  \begin{itemize}
  \item{} $0$ if $\ell = (a,b,c,d)$;
  \item{} $1$ if $\ell = (c,d,c,d)$;
  \item{} $2$ if $\ell = (a,b,a,b)$;
  \end{itemize}	 
\end{example}

\begin{theorem}
  \label{among}
  \gc{W\_Among}$^{var}(S,lb, ub, V)$ is polynomially DAG-decomposable and  thus tractable projection-safe.
\end{theorem}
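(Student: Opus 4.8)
The plan is to exhibit an explicit polynomial-sized DAG decomposition of \gc{W\_Among}$^{var}(S, lb, ub, V)$ and then appeal to the result of \CiteInLine{All15} that every polynomially DAG-decomposable global cost function is tractable projection-safe. Since the variable-based violation of \gc{Among} depends on an assignment $\ell$ only through the count $t(\ell, V) = |\{i \mid \ell[x_i] \in V\}|$, the natural piece of state to maintain while scanning the variables $x_1, \ldots, x_n$ in order is precisely this running count. I would therefore build a layered graph in which a path from a distinguished source to a distinguished sink corresponds to a tuple in $D(S)$, and in which the accumulated cost along the path equals the value returned by the cost function.

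Concretely, I would introduce layers $0, 1, \ldots, n$, placing in layer $i$ one node $q_{i,c}$ for each achievable count $c \in \{0, 1, \ldots, i\}$; the node $q_{i,c}$ represents having assigned $x_1, \ldots, x_i$ with exactly $c$ of them taking a value in $V$. For each value $v \in D(x_{i+1})$ I would add an edge from $q_{i,c}$ to $q_{i+1, c+1}$ labelled $v$ when $v \in V$, and to $q_{i+1, c}$ labelled $v$ otherwise, all carrying local cost $0$. The single source is $q_{0,0}$. To inject the penalty, I would add a sink $q^\star$ together with an edge from each terminal node $q_{n,t}$ to $q^\star$ with cost $\max\{0, lb - t, t - ub\}$, matching the definition exactly.

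It then remains to verify the two defining properties. First, the map sending a source-to-sink path to the tuple formed by its edge labels is a bijection onto $D(S)$: the count component of each state is forced by the labels chosen so far, so each tuple determines a unique path and conversely. Second, along the path induced by a tuple $\ell$ the only nonzero contribution is the final edge, whose cost is $\max\{0, lb - t(\ell, V), t(\ell, V) - ub\}$, which is exactly \gc{W\_Among}$^{var}(S, lb, ub, V)(\ell)$. For the size bound, layer $i$ contains $i+1$ nodes, so there are $O(n^2)$ nodes and $O(n^2 \cdot \max_i |D(x_i)|)$ edges, which is polynomial, and minimisation over the DAG is a shortest-path computation.

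The main obstacle I anticipate is not the construction itself but checking that it conforms precisely to the formal definition of a DAG decomposition in \CiteInLine{All15} --- in particular, correctly distributing the terminal penalty (whether that framework permits costs on a final layer of edges, or requires the penalty to be folded into the last transition), and confirming that the source-to-sink cost structure is the one consumed by the tractable-projection-safety theorem. Once the decomposition is shown to meet that definition with polynomially many nodes and edges, tractable projection-safety follows immediately from the cited theorem, with no further reasoning about projections or extensions required.
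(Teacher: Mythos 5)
Your construction is correct, and it lands on the same $O(n^2)$ state space as the paper --- layers indexed by prefix length and count --- but it distributes the costs differently, so it is a genuinely different decomposition rather than a restatement. The paper's proof takes the vertex at layer $i$, index $j$ to be the sub-cost function $\smallcf^j_{S_i} = \mbox{\gc{W\_Among}}^{var}(S_i,j,j,V)$, i.e.\ the violation of ``exactly $j$'' on the prefix $S_i$; its transitions carry soft $\{0,1\}$-valued unary leaves $U^V_i$ and $\overline{U}^V_i$, so the penalty accumulates incrementally, the count index never needs to exceed $ub$, and the root takes $\min_{lb \leq j \leq ub}$ over the final layer. You instead track the exact count with hard cost-$0$ transitions and defer the entire penalty $\max\{0, lb-t, t-ub\}$ to a terminal layer, which makes correctness immediate (path--tuple bijection plus the terminal edge cost) at the price of the formalization step you flag as your main obstacle. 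That step is routine and unproblematic in the framework of \CiteInLine{All15}: an edge of your graph restricted to values in (or out of) $V$ becomes a unary leaf taking value $0$ on the permitted values and $\top$ otherwise, each node $q_{i,c}$ becomes a $\min$ over the $\oplus$ of its predecessors with those leaves (the scopes are disjoint under $\oplus$ and equal under $\min$, exactly the scope conditions of Theorems 6 and 7 of \CiteInLine{All15}), and your terminal penalties become constant leaves, which already occur in the paper's own decomposition as the base cases $\smallcf^j_{S_0}(\ell) = j$. Your variant in fact mirrors the paper's treatment of \gc{W\_Regular}$^{var}$, whose decomposition selects DFA transitions with unary functions $U^{\{v\}}_i$; this is consistent with the paper's own remark that \gc{Among} can be modeled by \gc{Regular} via a polynomial-size counting automaton. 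What the paper's version buys is that every leaf is finite-valued and every internal vertex is itself a \gc{W\_Among}$^{var}$ cost function, making the recursive identity transparent; what yours buys is a correctness argument that needs no reasoning about how violations decompose across the recursion. With the translation into $\min$/$\oplus$ form made explicit, polynomial DAG-decomposability and hence tractable projection-safety follow exactly as in the paper.
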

\begin{proof}
We first define two base cases $U^V_i$ and $\overline{U}^V_i$. The  function $U^V_i$ is the cost function on $x_i$ defined as:  \[
  U^V_i(v) = \left\{ \begin{array}{ll}
      0, & \mbox{ if $v \in V$;} \\
      1, & \mbox{ otherwise} \\
    \end{array} \right.
  \]
  \noindent
  and $\overline{U}^V_i(v) = 1-U^V_i$ is its negation.
  
We construct \gc{W\_Among}$^{var}$ based on $U^V_i$ and $\overline{U}^V_i$.  Define $\smallcf^j_{S_i} =
  \mbox{\gc{W\_Among}}^{var}(S_i,j, j, V)$, where $S_i = \{x_1,
  \ldots, x_i\} \subseteq S$. By definition, $S_i = S_{i-1} \cup
  \{x_i\}$ and $S_0 = \varnothing$. \gc{W\_Among}$^{var}(S, lb, ub,
  V)$ can be represented by the sub-cost functions $\smallcf^j_{S_i}$  as: \[
  \mbox{\gc{W\_Among}}^{var}(S, lb, ub, V)(\ell) = \min_{lb \leq j \leq
    ub}\{\smallcf^j_{S_{n}}(\ell)\}
  \]
  \noindent
  and each $\smallcf^j_{S_i}$ can be represented as:
\[\begin{array}{llll}
    \smallcf^j_{S_0}(\ell) &=& j \\
    \smallcf^0_{S_i}(\ell) &=& \smallcf^0_{S_{i-1}}(\ell[S_{i-1}]) \oplus \overline{U}^V_i(\ell[x_i]) & \mbox{ for $i > 0$}\\
    \smallcf^j_{S_i}(\ell) &=& \min \left\{ \begin{array}{l}
        \smallcf^{j-1}_{S_{i-1}}(\ell[S_{i-1}]) \oplus U^V_i(\ell[x_i]) \\
        \smallcf^{j}_{S_{i-1}}(\ell[S_{i-1}]) \oplus \overline{U}^{V}_i(\ell[x_i]) \\
    	\end{array}\right. &  \mbox{ for $j > 0$ and $i>0$}\\ 
    \end{array}
    \]
    The equations form a DAG $(V,E)$, as illustrated in Figure~\ref{amongDAG} using Example \ref{wamongex1}. In Figure   \ref{amongDAG}, leaves are indicated by double-lined  circles. Vertices with $\min$ or $\oplus$ aggregators are    indicated by rectangles and circles respectively. The DAG has a  number of vertices $|V| = O(ub \cdot n) = O(n^2)$ and uses only using $\oplus$ or $\min$ as aggregations with proper scopes properties. 
		By Theorem 6 and 7 by \CiteInLine{All15}, \gc{W\_Among}$^{var}$ is safely DAG- decomposable. 		
		Moreover, each leaf of the DAG is a  unary cost functions. 
		By Definition 21 from \CiteInLine{All15}, \gc{W\_Among}$^{var}$ is therefore polynomially DAG-decomposable. The result  follows by Theorem 9 by \CiteInLine{All15}.
	
    \begin{figure}[htp]
      \centering \psfrag{wamongvar}[cc][cc]{\tiny{$W\_Among^{var}$}}
      \psfrag{w2s3}[cc][cc]{\tiny{$\omega^2_{S_3}$}}
      \psfrag{w1s3}[cc][cc]{\tiny{$\omega^1_{S_3}$}}
      \psfrag{w1s2}[cc][cc]{\tiny{$\omega^1_{S_2}$}}
      \psfrag{w0s2}[cc][cc]{\tiny{$\omega^0_{S_2}$}}
      \psfrag{w1s1}[cc][cc]{\tiny{$\omega^1_{S_1}$}}
      \psfrag{w1s0}[cc][cc]{\tiny{$\omega^1_{S_0}$}}
      \psfrag{w0s1}[cc][cc]{\tiny{$\omega^0_{S_1}$}}
      \psfrag{w0s0}[cc][cc]{\tiny{$\omega^0_{S_0}$}}
      \psfrag{uv3}[cc][cc]{\tiny{$U^V_3$}}
      \psfrag{baruv3}[cc][cc]{\tiny{$\overline{U}^V_3$}}
      \psfrag{uv2}[cc][cc]{\tiny{$U^V_2$}}
      \psfrag{baruv2}[cc][cc]{\tiny{$\overline{U}^V_2$}}
      \psfrag{uv1}[cc][cc]{\tiny{$U^V_1$}}
      \psfrag{baruv1}[cc][cc]{\tiny{$\overline{U}^V_1$}}
	\includegraphics[width=0.7\textwidth]{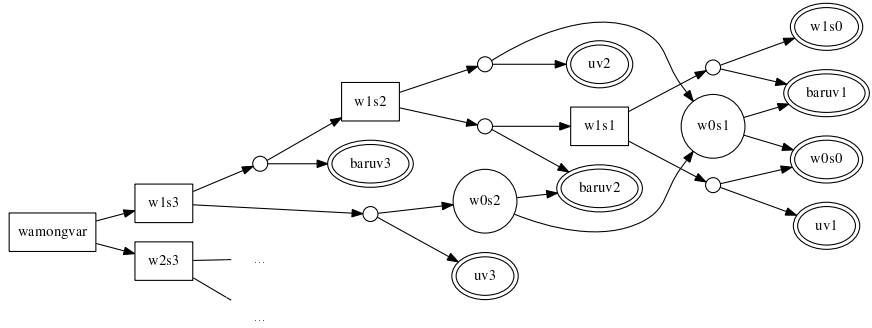}
	\caption{The DAG corresponding to \gc{W\_Among}$^{var}$ \label{amongDAG} }
	\end{figure}
\end{proof}

Function \opSty{AmongMin} in Algorithm \ref{algo:among} computes the minimum of the \gc{W\_Among}$^{var}(S, lb, ub, V)$ cost function according to Theorem \ref{among}. Lines \ref{amongL1} to \ref{amongL7} compute the minimum costs returned by each additional unary cost functions and store at the arrays $\overline{u}$ and $u$. Lines \ref{amongL8} to \ref{amongL13} builds up the results by filling the table $f$ of size $n \times ub$ according to the formulation stated in the proof of Theorem~\ref{among}, and return
the result at line \ref{amongL14}. The complexity is stated in Theorem \ref{amongMinTime} as follows.

\begin{algorithm}[htb]
  \caption{Finding the minimum of \gc{W\_Among}$^{var}$}
  \label{algo:among}
  \SetKwBlock{Start}{}{}
  \SetKw{Func}{Function}
  \SetKw{Proc}{Procedure}    
  
  \Func{} \opSty{AmongMin}($S,lb, ub, V$)
  \Start{	 
    \lnl{amongL1}						    		    		
    \For{$i = 1$ \KwTo $n$}{    		  
      \lnl{amongL2}	
      $\overline{u}[i] := \min\{\overline{U}^V_i\}$\;    		  
      \lnl{amongL7}	
      $u[i] := \min\{U^V_i\}$\;    		  
    }    
    \lnl{amongL8}						
    \lFor{$j = 0$ \KwTo $ub$}{
      $f[0,j] := j$ \;
    }
    \lnl{amongL10}						
    \For{$i = 1$ \KwTo $n$}{
      \lnl{amongL11}						
      $f[i,0] := f[i-1,0] \oplus \overline{u}[i]$ \;
      \lnl{amongL12}						
      \For{$j = 1$ \KwTo $ub$}{
        \lnl{amongL13}						 	
        $f[i,j] := \min\{f[i-1,j-1] \oplus u[i],
        f[i-1,j] \oplus \overline{u}[i]\}$ \;
      }
    }
    \lnl{amongL14}						 	
    \Return{$\min_{lb \leq j \leq ub}\{f[n,j]\}$} \;
  }
\end{algorithm}

\begin{theorem}\label{amongMinTime}
  Function \opSty{AmongMin} in Algorithm \ref{algo:among} computes the  minimum of \gc{W\_Among}$^{var}(S,lb, ub, V)$ and requires  $O(n(n+d))$, where $n = |S|$ and $d$ is the maximum domain size.
\end{theorem}
\begin{proof}
  Lines \ref{amongL1} to \ref{amongL7} in Algorithm \ref{algo:among}  take $O(nd)$. Lines \ref{amongL8} to \ref{amongL14} requires $O(n
  \cdot ub)$. Since $ub$ is bounded by $n$, the result follows.
\end{proof}

\begin{corollary}  \label{amongTime}
Given $\cf_S = \mbox{\gc{W\_Among}}^{var}(S,lb, ub, V)$. Function  \opSty{findSupport}$()$ in Algorithm 3 from \CiteInLine{All15} requires $O(nd(n+d))$.
\end{corollary}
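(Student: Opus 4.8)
The plan is to derive the bound by marrying two facts already in hand: the single-minimum cost from Theorem~\ref{amongMinTime}, and the generic complexity guarantee that Theorem~9 of \CiteInLine{All15} attaches to \opSty{findSupport}$()$ for any polynomially DAG-decomposable cost function. Theorem~\ref{among} already certifies that $\cf_S = \mbox{\gc{W\_Among}}^{var}(S,lb,ub,V)$ is polynomially DAG-decomposable via the DAG of Figure~\ref{amongDAG}, so the generic bound applies verbatim; the only remaining work is to instantiate it for this particular DAG.

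First I would recall that \opSty{findSupport}$()$ does not compute a single global minimum but must furnish a support for every variable--value pair $(x_i,v)$, namely the conditional minimum $\min_{\ell:\ell[x_i]=v}\cf_S(\ell)$, before projecting. The key observation is that these $O(nd)$ conditional minima are not computed independently. The DAG of Figure~\ref{amongDAG} is a layered trellis with $n+1$ layers and $O(ub)=O(n)$ states per layer, so one forward sweep --- which is exactly \opSty{AmongMin}, costing $O(n(n+d))$ by Theorem~\ref{amongMinTime} --- together with one symmetric backward sweep suffices to tabulate, at every layer, the best cost of reaching it from the leaves and from the root. Combining the forward table at layer $i-1$ with the backward table at layer $i$ through the unary leaf $U^V_i$ or $\overline{U}^V_i$ then reads off the support for each $(x_i,v)$.

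The heart of the argument is the bookkeeping that turns these sweeps into the claimed total. Because the two sweeps are shared across all $n$ variables, the variable index contributes no extra factor; the only genuine repetition is over the $d$ candidate values, so \opSty{findSupport}$()$ costs $O(d)$ times the single-minimum computation, i.e. $O\bigl(d\cdot n(n+d)\bigr)=O(nd(n+d))$. This is precisely the shape of the guarantee in Theorem~9 of \CiteInLine{All15}, which I would invoke with the minimum-computation time $O(n(n+d))$ supplied by Theorem~\ref{amongMinTime}.

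The step I expect to be the main obstacle is justifying the factor $d$ rather than $nd$: one must verify that the forward and backward tables built once for the whole scope can be reused for every variable without recomputation, and that reading off and projecting a support for every pair $(x_i,v)$ adds at most $O(nd(n+d))$ in aggregate rather than $O(n(n+d))$ per pair. Once this amortization across the $n$ layers is made explicit --- and it is already encapsulated in the statement of Theorem~9 of \CiteInLine{All15} --- the corollary follows immediately.
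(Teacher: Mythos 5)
Your final invocation --- ``generic complexity guarantee applied with the minimum-computation time $O(n(n+d))$ from Theorem~\ref{amongMinTime}'' --- is exactly the paper's proof, which is a one-liner: it cites Property~2 of \CiteInLine{All15} together with Theorem~\ref{amongMinTime}. But two things in your write-up misrepresent how that guarantee works. First, the citation: in this paper Theorem~9 of \CiteInLine{All15} is the statement that polynomial DAG-decomposability implies tractable projection-safety (it is what Theorem~\ref{among} invokes); the complexity bound on \opSty{findSupport}$()$ comes from Property~2, a different result. That slip is forgivable since you could not see the companion paper, but the two play distinct roles.

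Second, and more substantively, your explanation of where the factor $d$ comes from is backwards. The mechanism behind Property~2, and the reason all three corollaries in the paper have the form $O(d \cdot T_{\min})$, is the \emph{naive} one you explicitly dismiss: \opSty{findSupport}$()$ treats one variable $x_i$, and for each of its $d$ values $v$ it computes the conditional minimum by rerunning the DAG minimization with $D(x_i)$ restricted to $\{v\}$ --- $d$ independent calls, each $O(n(n+d))$ by Theorem~\ref{amongMinTime}, hence $O(nd(n+d))$. Your forward--backward amortization, by contrast, does \emph{not} cost ``$O(d)$ times the single-minimum computation'': if the two sweeps are shared and each pair $(x_i,v)$ is read off in $O(ub)=O(n)$ time, the total is $O(n(n+d))+O(n^2d)$, which is a different (in fact slightly stronger) bound that happens to lie inside $O(nd(n+d))$. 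So your bookkeeping paragraph derives the factor $d$ from a mechanism that does not produce it, and you then defer the correctness of the amortization (table reuse across variables, interaction with the projections performed after each support is found) to the generic theorem of \CiteInLine{All15} --- which contains no such amortization, only the per-value restriction argument. The scheme you sketch could be repaired into a valid, self-contained proof, but as written its key step is outsourced to a result that does not contain it, while the simple argument it was meant to replace is the one the paper actually uses.
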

\begin{proof}
	Follow directly from Property 2 in \CiteInLine{All15} and Theorem~\ref{amongMinTime} .
  \end{proof}

\subsection{The \gc{W\_Regular}$^{var}$ Cost Function}

\gc{W\_Regular}$^{var}$ is the cost function variant of the softened version of the hard constraint \gc{Regular}~\cite{GP2004} based on a regular language.

\begin{definition}
  A regular language $L(M)$ is represented by \emph{a deterministic finite state  automaton} (DFA) $M=(Q,\Sigma, \delta, q_0, F)$, where:
  \begin{itemize}
  \item{} $Q$ is a set of states;
  \item{} $\Sigma$ is a set of characters;
  \item{} The transition function $\delta$ is defined as: $\delta: Q
    \times \Sigma \mapsto Q$;
  \item{} $q_0 \in Q$ is the initial state, and;
  \item{} $F \subseteq Q$ is the set of final states.
\end{itemize}
A string $\tau$ lies in $L(M)$, written as $\tau \in L(M)$, iff $\tau$ can lead the transitions from $q_0$ to $q_f \in F$ in $M$
\end{definition}

The hard constraint \gc{Regular}$(S,M)$ authorizes a tuple $\ell \in
\L(S)$ if $\tau_{\ell} \in L(M)$, where $\tau_{\ell}$ is the string formed from $\ell$ \cite{GP2004}. A \gc{W\_Regular}$^{var}$ cost function is defined as follows, derived from the variable-based violation measure given by \CiteInLine{NMT2004} and \CiteInLine{WGL2006}.

\begin{definition} \cite{NMT2004,WGL2006}\label{def:regular}
  Given a DFA $M=(Q,\Sigma, \delta, q_0,
  F)$. The cost function \emph{\gc{W\_Regular}$^{var}(S,M)$} returns  $\min\{H(\tau_{\ell},\tau_i) \mid \tau_i \in L(M)\}$ for each tuple  $\ell \in \L(S)$, where $H(\tau_1, \tau_2)$ returns the Hamming  distance between $\tau_1$ and $\tau_2$.
\end{definition}

\begin{figure}[htp]
\centering
\psfrag{a}{\small{$a$}}
\psfrag{b}{\small{$b$}}
\psfrag{q0}{\small{$q_0$}}
\psfrag{q1}{\small{$q_1$}}
\psfrag{q2}{\small{$q_2$}}
\includegraphics[width=0.3\textwidth]{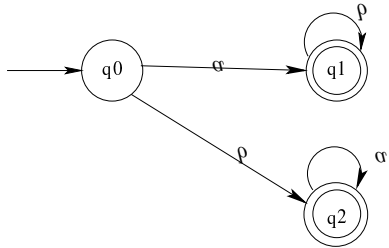}
\caption{The graphical representation of a DFA. \label{DFA}}
\end{figure}

\begin{example}
  \label{wregularex1}
  Consider $S = \{x_1,x_2,x_3\}$, where $D(x_1) = \{a\}$ and $D(x_2) =
  D(x_3) = \{a,b\}$.  Given the DFA $M$ shown in Figure \ref{DFA}. The  cost returned by \gc{W\_Regular}$^{var}(S, M)(\ell)$ is $1$ if $\ell
  = (a,b,a)$. The assignment of $x_3$ need changed in the tuple  $(a,b,a)$ so that $L(M)$ accepts the corresponding string $aba$.
\end{example}

\begin{theorem}
  \label{regular}
  \gc{W\_Regular}$^{var}(S,M)$ is polynomially DAG-decomposable and thus  tractable projection-safe.
\end{theorem}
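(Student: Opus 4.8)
The plan is to mirror the proof of Theorem~\ref{among} exactly, replacing the counter index $j$ by the state of the DFA and driving the recursion by the transition function $\delta$. The natural decomposition is a dynamic program over the states of $M$: for each prefix length $i$ and each state $q \in Q$, the sub-cost function records the minimum number of character changes needed so that the first $i$ characters of $\ell$ drive $M$ from $q_0$ to $q$.

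First I would introduce, for each index $i$ and each character $v \in \Sigma$, the unary base cost function $P^v_i$ defined by $P^v_i(u) = 0$ if $u = v$ and $P^v_i(u) = 1$ otherwise; this plays the role that $U^V_i$ and $\overline{U}^V_i$ play in the Among proof, measuring whether position $i$ must be altered to read the character $v$. Then I would define sub-cost functions $\smallcf^q_{S_i}$ with scope $S_i = \{x_1,\ldots,x_i\}$, base case $\smallcf^{q_0}_{S_0} = 0$ and $\smallcf^{q}_{S_0} = \top$ for $q \neq q_0$, and recurrence
\[
\smallcf^q_{S_i}(\ell) = \min_{\substack{q' \in Q,\, v \in \Sigma \\ \delta(q',v) = q}} \left\{ \smallcf^{q'}_{S_{i-1}}(\ell[S_{i-1}]) \oplus P^v_i(\ell[x_i]) \right\}.
\]
The whole cost function is then recovered as $\mbox{\gc{W\_Regular}}^{var}(S,M)(\ell) = \min_{q_f \in F}\{\smallcf^{q_f}_{S_n}(\ell)\}$, since a length-$n$ string accepted by $M$ corresponds exactly to a sequence of transitions from $q_0$ ending in a final state, and its Hamming distance to $\tau_{\ell}$ is the sum of the per-position mismatch costs. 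Correctness of this equation is the one place needing a genuine argument: by induction on $i$, $\smallcf^q_{S_i}(\ell)$ equals the minimum Hamming distance between the prefix $\tau_{\ell}[1..i]$ and any length-$i$ string taking $M$ from $q_0$ to $q$.

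Next I would read off the structural properties exactly as in Theorem~\ref{among}. The equations form a DAG whose internal vertices use only $\min$ and $\oplus$ aggregators, whose leaves are the unary functions $P^v_i$, and in which every vertex $\smallcf^q_{S_i}$ has scope $S_i$ equal to the union of the scopes of its children (each $\smallcf^{q'}_{S_{i-1}}$ of scope $S_{i-1}$ together with a unary of scope $\{x_i\}$), so the proper-scope conditions of Theorems 6 and 7 of \CiteInLine{All15} hold. The number of sub-cost vertices is $O(|Q|\cdot n)$ and the number of $\oplus$ and leaf vertices is bounded by the transition count times $n$, i.e.\ $O(|Q|\cdot|\Sigma|\cdot n)$, which is polynomial in the size of the DFA and of $S$. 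Hence the construction is a safe DAG-decomposition with unary leaves, and polynomial DAG-decomposability and tractable projection-safety follow from Definition 21 and Theorem 9 of \CiteInLine{All15}, just as before.

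The main obstacle I anticipate is not the decomposition itself but two bookkeeping subtleties in the correctness step. The first is handling a partial DFA: if $\delta$ is undefined for some $(q',v)$, the corresponding term must be dropped (equivalently, complete $M$ with a non-accepting sink state) so the $\min$ never credits a nonexistent transition; I would state this completion explicitly. The second, which I expect to be the real crux, is verifying the induction while respecting that the target character $v$ ranges over all of $\Sigma$ rather than over $D(x_i)$: one must check that allowing $v \notin D(x_i)$, where $P^v_i \equiv 1$, is exactly right, since an accepted string may legitimately use such a character and must then incur a unit mismatch at position $i$.
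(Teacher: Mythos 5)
Your proposal is correct and follows essentially the same route as the paper: the identical dynamic program indexed by prefix $S_i$ and DFA state, with unary mismatch functions (your $P^v_i$ is exactly the paper's $U^{\{v\}}_i$), the same base case and recurrence over transitions $\delta(q',v)=q$, the same $\min$ over final states at the root, and the same appeal to Theorems 6, 7 and 9 and Definition 21 of \CiteInLine{All15} with an $O(n\cdot|Q|)$ vertex bound. The only difference is presentational: the paper defines $\smallcf^j_{S_i}$ semantically as \gc{W\_Regular}$^{var}(S_i,M_j)$ with $M_j=(Q,\Sigma,\delta,q_0,\{q_j\})$ and cites prior work for the DAG representation, whereas you define the sub-functions by the recurrence and sketch the inductive correctness argument explicitly, which is a sound (arguably more self-contained) way of establishing the same decomposition.
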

\begin{proof}

  \gc{W\_Regular}$^{var}$ can be represented as a  DAG~\cite{NMT2004,WGL2006,LL2012asa}, which directly gives a  polynomial DAG-decomposition. In the following, we reuse  the symbols $S_i$ and $U^V_i(v)$, which are defined in the proof of  Theorem \ref{among}.
	
  Define $\smallcf^j_{S_i}$ to be the cost function  \gc{W\_Regular}$^{var}(S_i,M_j)$, where $M_j$ is the DFA $(Q,
  \Sigma, \delta, q_0, \{q_j\}))$. \gc{W\_Regular}$^{var}(S,M)$ can be  represented as: \[
  \mbox{\gc{W\_Regular}}^{var}(S,M)(\ell) = \min_{q_j \in
    F}\{\smallcf^j_{S_n}(\ell)\}
  \]
  The base cases $\smallcf^j_{S_0}$ are defined as:	 \[
  \smallcf^j_{S_0}(\ell) = \left\{ \begin{array}{ll}
      0, & \mbox{ if $j=0$} \\
      \top, & \mbox{ otherwise} \\			
    \end{array}\right. 
  \]
  Other sub-cost functions $\smallcf^j_{S_i}$, where $i > 0$, are
  defined as follows. Define $\delta_{q_j} = \{(q_i,v) \mid
  \delta(q_i,v) = q_j\}$. If $\delta_{q_j} = \varnothing$, no  transition can lead to $q_j$. \[
  \smallcf^j_{S_i}(\ell) = \left\{ \begin{array}{ll}
      \displaystyle \min_{\delta(q_k,v) = q_j}\{\smallcf^k_{S_{i-1}}(\ell[S_{i-1}]) \oplus U^{\{v\}}_i(\ell[x_i])\}, & \mbox{ if $\delta_{q_j}  \neq \varnothing$} \\
      \top, & \mbox{ otherwise} \\
    \end{array}\right. 
  \]
  \ignore{
    \[\begin{array}{lll}
      \smallcf^j_{S_0}(\ell) &=& \left\{ \begin{array}{ll}
          0, & \mbox{ if $j=0$} \\
          \top, & \mbox{ otherwise} \\			
        \end{array}\right. \\	
      \smallcf^j_{S_i}(\ell) &=& \left\{ \begin{array}{ll}
          \top, & \mbox{ if there does not exist $q_k$ and $v$ such that $\delta(q_k,v) = q_j$} \\		
          \min & 
          \{\smallcf^k_{S_{i-1}}(\ell[S_{i-1}]) \oplus U^{\{v\}}_i(\ell[x_i]) \mid \delta(q_k,v) = q_j\}, \mbox{ otherwise} \\
        \end{array}\right. \\		
    \end{array}
    \]
  }
		
  The corresponding DAG is shown in Figure \ref{regularDAG}, based on  Example \ref{wregularex1}. Again, the same notation as in Figure~\ref{amongDAG} is used. The DAG has a number of vertices $|V| =
  O(|S| \cdot |Q|)$, and its leaves are unary functions. The   DAG-decomposition is thus polynomial, and, 
	by Theorem 9 by \CiteInLine{All15}, the result follows.
\end{proof}
	
Together with Theorem $6.11$ by \CiteInLine{LL2012asa} which showed that this global cost function was flow-based tractable, this result gives another reasoning for its tractability. Theorem \ref{regular} also gives another proof of the tractable projection-safety of \gc{W\_Among}$^{var}$ \cite{ACNA2008}. The \gc{Among} global constraint \cite{BC1994} can be modeled by the \gc{Regular} global constraint \cite{GP2004}, which the size of the corresponding DFA is polynomial in the size of the scope.

\begin{figure}[htp]
  \centering
  \psfrag{wregularvar}[cc][cc]{\tiny{$W\_Regular^{var}$}}
  \psfrag{w2s3}[cc][cc]{\tiny{$\omega^2_{S_3}$}}
  \psfrag{w1s3}[cc][cc]{\tiny{$\omega^1_{S_3}$}}
  \psfrag{w1s2}[cc][cc]{\tiny{$\omega^1_{S_2}$}}
  \psfrag{w0s2}[cc][cc]{\tiny{$\omega^0_{S_2}$}}
  \psfrag{w1s1}[cc][cc]{\tiny{$\omega^1_{S_1}$}}
  \psfrag{w1s0}[cc][cc]{\tiny{$\omega^1_{S_0}$}}
  \psfrag{w0s1}[cc][cc]{\tiny{$\omega^0_{S_1}$}}
  \psfrag{w0s0}[cc][cc]{\tiny{$\omega^0_{S_0}$}}
  \psfrag{ua3}[cc][cc]{\tiny{$U^{\{a\}}_3$}}
  \psfrag{ub3}[cc][cc]{\tiny{$U^{\{b\}}_3$}}
  \psfrag{ua2}[cc][cc]{\tiny{$U^{\{a\}}_2$}}
  \psfrag{ub2}[cc][cc]{\tiny{$U^{\{b\}}_2$}}
  \psfrag{ua1}[cc][cc]{\tiny{$U^{\{a\}}_1$}}
  \psfrag{ub1}[cc][cc]{\tiny{$U^{\{b\}}_1$}}		
  \includegraphics[width=0.7\textwidth]{regular-eps-converted-to}
  \caption{The DAG corresponding to \gc{W\_Regular}$^{var}$ \label{regularDAG} }
\end{figure}

Function \opSty{RegularMin} in Algorithm~\ref{algo:regular} computes the minimum of a \gc{W\_Regular}$^{var}(S,M)$ cost function. The algorithm first initializes the table $u$ by assigning $\min\{U_i^c\}$ to $u[i,c]$ at lines \ref{regularL1} and \ref{regularL2}.  Lines \ref{regularL7} to \ref{regularL12} fills up the table $f$ of the size $n \times |Q|$.  Each entry $f[i,j]$ in $f$ holds the value
$\min\{\smallcf^j_{S_i}\}$, which is computed according to the formulation stated in Theorem \ref{regular}, and returns the result at
line \ref{regularL13}.

The time complexity of \opSty{RegularMin} in Algorithm~\ref{algo:regular} can be stated as follows.
\begin{theorem}\label{regularMinTime}
  Function \opSty{RegularMin} in Algorithm \ref{algo:regular} computes  the minimum of \gc{W\_Regular}$^{var}(S,M)$, and it requires $O(nd
  \cdot |Q|)$, where $n$ and $d$ are defined in Theorem  \ref{amongMinTime}.
\end{theorem}
\begin{proof}
  Lines \ref{regularL1} and \ref{regularL2} in  Algorithm~\ref{algo:regular} requires $O(n \cdot |\Sigma|)$.   Because $|\Sigma|$ is bounded by $d$, the time complexity is $O(n
  \cdot d)$. Lines \ref{regularL7} to \ref{regularL12} require $O(nd
  \cdot |Q|)$, according to the table size. Line \ref{regularL13}  requires $O(|Q|)$. The overall time complexity is $O(nd + nd \cdot
  |Q| + |Q|) = O(nd \cdot |Q|)$.
\end{proof}

\begin{algorithm}
  \caption{Finding the minimum of \gc{W\_Regular}$^{var}$}
  \label{algo:regular}
  \SetKwBlock{Start}{}{}
  \SetKw{Func}{Function}
  \SetKw{Proc}{Procedure}    
  
  \Func{} \opSty{RegularMin}($S,M$)
  \Start{	      
    \lnl{regularL1}			
    \For{$i := 1$ \KwTo $n$}{
      \lnl{regularL2}			
      \lFor{$c \in \Sigma$}{                	
        $u[i,c] := \min\{U_i^{\{c\}}\}$ \;           
      }
    }
    \lnl{regularL7}			
    $f[0,0] := 0$ \;
    \lnl{regularL8}			
    \lFor{$q_j \in Q \setminus \{q_0\}$}{
      $f[0,j] := \top$ \;
    }
    \lnl{regularL9}			
    \For{$i := 1$ \KwTo $n$}{
      \lnl{regularL10}
      $f[i,j] := \top$\;
      \lnl{regularL11}	
      \ForEach{$(q_k,q_j,c)$ such that $\delta(q_k,c) = q_j$}{
        \lnl{regularL12}	        
        $f[i,j] = \min\{f[i,j], f[i,k] \oplus u[i,c]$\}\;        
      }
    }
    \lnl{regularL13}			
    \Return{$\min_{q_j \in F}\{f[n,j]\}$} \;
  }
\end{algorithm}

We state the time complexity of enforcing GAC* with respect to \gc{W\_Regular}$^{var}$ as follows.
\begin{corollary}\label{regularTime}	  
	Given $\cf_S = \mbox{\gc{W\_Regular}}^{var}(S,M)$. Function  \opSty{findSupport}$()$ in Algorithm 3 from \CiteInLine{All15} requires $O(nd^2 \cdot |Q|)$, where $n$ and $d$ is defined in Theorem  \ref{amongMinTime}.
\end{corollary}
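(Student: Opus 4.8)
The plan is to derive this bound directly from the generic relationship between \opSty{findSupport}$()$ and the cost of computing the minimum of a cost function, as captured by Property 2 in \CiteInLine{All15}. This exactly parallels the argument used for Corollary~\ref{amongTime}, so the proof will be a two-line instantiation rather than a fresh construction.

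First I would recall what Property 2 provides: enforcing GAC* on a variable through \opSty{findSupport}$()$ amounts to computing, for each of the $d$ values in that variable's domain, the minimum cost of a tuple that extends the corresponding variable--value assignment. Consequently \opSty{findSupport}$()$ costs a factor of $O(d)$ more than a single minimum computation on $\cf_S$. Next I would substitute the bound established in Theorem~\ref{regularMinTime}, namely that \opSty{RegularMin} computes $\min\{\cf_S\}$ in $O(nd\cdot|Q|)$ time. Combining the two, the total cost is $O(d)\cdot O(nd\cdot|Q|) = O(nd^2\cdot|Q|)$, which is the claimed complexity.

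The only point requiring care --- rather than a genuine obstacle --- is confirming that fixing a single variable--value pair before invoking the minimum routine does not change the $O(nd\cdot|Q|)$ asymptotic cost, so that the $d$-fold repetition over the domain yields precisely $O(nd^2\cdot|Q|)$. This is immediate, since restricting one variable to a single value only shrinks the relevant unary table $u$ used by \opSty{RegularMin} and never enlarges it. With that observation in place, the result follows directly from Property 2 in \CiteInLine{All15} together with Theorem~\ref{regularMinTime}, mirroring the proof of Corollary~\ref{amongTime}.
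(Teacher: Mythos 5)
Your proof is correct and follows exactly the paper's own route: the paper's proof of Corollary~\ref{regularTime} is simply the one-line invocation of Property 2 in \CiteInLine{All15} together with Theorem~\ref{regularMinTime}, which is precisely your $O(d)\cdot O(nd\cdot|Q|) = O(nd^2\cdot|Q|)$ instantiation. Your extra check that fixing a variable--value pair does not inflate the cost of a single minimum computation is a reasonable elaboration of what the paper leaves implicit.
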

\begin{proof}
	Follow directly from Property 2 in \CiteInLine{All15} and Theorem \ref{regularMinTime}. 
\end{proof}

The time complexity is polynomial in the size of the input $(S,M)$ which includes the scope of size $n$ (with associated domains of size at most $d$) and the finite automata $M$, including $Q$.


\subsection{The \gc{W\_Max}/\gc{W\_Min} Cost Functions}

\begin{definition}
\label{def:maxweight}
Given a function $f(x_i,v)$ that maps every variable-value pair $(x_i,v)$, where $v \in D(x_i)$, to a cost in $\{0 \ldots \top\}$.
\begin{itemize}
\item{} The \emph{\gc{W\_Max}$(S,f)$}$(\ell)$, where $\ell \in \L(S)$,  returns $\max \{f(x_i, \ell[x_i]) \mid x_i \in S\}$;
\item{} The \emph{\gc{W\_Min}$(S,f)$}$(\ell)$, where $\ell \in \L(S)$, returns $\min \{f(x_i, \ell[x_i]) \mid x_i \in S\}$. 
\end{itemize}
\end{definition}

Note that the \gc{W\_Max} and \gc{W\_Min} cost functions are not a direct generalization of any global constraints. Therefore, their name does not follow the traditional format. However, they can be used to model the \gc{Maximum} and \gc{Minimum} hard constraints \cite{NB2001}. For examples, the \gc{Maximum}$(x_{max}, S)$ can be represented as $x_{max} = \mbox{\gc{W\_Max}}(S,f)$, where $f(x_i,v) =
v$.

\begin{example}
  \label{wmaxex1}
  Consider $S = \{x_1, x_2, x_3\}$, where $D(x_1) = \{1,3\}$, $D(x_2)  = \{2,4\}$, and $D(x_3) = \{2,3\}$.  Given $f(x_i,v) =
  3\times v$, the cost of the tuple $(1,2,3)$ given by \emph{\gc{W\_Max}$(S,f)$} is $9$, while that of  $(3,4,2)$ is $12$.
\end{example}

\begin{theorem}
  \label{maxweight}
  \gc{W\_Max}$(S,f)$ and \gc{W\_Min}$(S,f)$ are polynomially DAG-decomposable, and thus tractable projection-safe.
\end{theorem}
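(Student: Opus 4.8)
The plan is to follow the exact same DAG-decomposition template established in the proofs of Theorem~\ref{among} and Theorem~\ref{regular}: construct a sequence of sub-cost functions $\smallcf_{S_i}$ over the nested scopes $S_i = \{x_1, \ldots, x_i\}$, show that each is expressible via $\min$ and $\oplus$ aggregators (or, for \gc{W\_Max}/\gc{W\_Min}, the $\max$ aggregator) applied to previously-defined sub-functions and unary leaves, verify that the resulting DAG has polynomially many vertices with unary leaves, and then invoke Theorems~6, 7 and 9 of \CiteInLine{All15} together with Definition~21 to conclude polynomial DAG-decomposability and hence tractable projection-safety.

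First I would handle \gc{W\_Max}. The key observation is that $\max\{f(x_i,\ell[x_i]) \mid x_i \in S\}$ decomposes recursively: define $\smallcf_{S_i}(\ell) = \max\{f(x_k,\ell[x_k]) \mid x_k \in S_i\}$, so that $\smallcf_{S_0} = 0$ (or the appropriate minimum-cost base value) and
\[
\smallcf_{S_i}(\ell) = \max\{\smallcf_{S_{i-1}}(\ell[S_{i-1}]),\ g_i(\ell[x_i])\},
\]
where $g_i$ is the unary cost function $g_i(v) = f(x_i,v)$. Then $\gc{W\_Max}(S,f)(\ell) = \smallcf_{S_n}(\ell)$. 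This yields a DAG with $O(n)$ internal vertices each using the $\max$ aggregator, and exactly $n$ unary leaves $g_1, \ldots, g_n$. The treatment of \gc{W\_Min} is symmetric, replacing $\max$ by $\min$ throughout; since $\min$ is already one of the sanctioned aggregators this case is even more immediate.

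The one step that requires care is confirming that the $\max$ aggregator is covered by the decomposition framework of \CiteInLine{All15}. Theorems~6 and 7 there were invoked in Theorem~\ref{among} specifically for the $\oplus$ and $\min$ aggregators, so I would need to check that $\max$ is likewise an admissible aggregator in their catalog (it is a standard monotone, idempotent aggregator and the framework is stated for a general class), or else express $\max$ in terms of the permitted operations. I expect this to be the main obstacle: the bulk of the proof is a routine unwinding of the recursion, so the real content is the citation-level verification that the $\max$-based DAG satisfies the proper-scope and admissible-aggregator hypotheses required to apply Theorem~9 of \CiteInLine{All15}. Assuming $\max$ is admissible, the leaves being unary gives the polynomial bound directly via Definition~21, and tractable projection-safety then follows immediately.
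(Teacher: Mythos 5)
There is a genuine gap, and it sits exactly at the point you flagged as ``the main obstacle'' and then assumed away. The framework of \CiteInLine{All15} sanctions only $\min$ and $\oplus$ as aggregators (Theorems 6 and 7 there, with their proper-scope conditions: $\min$ over sub-functions of identical scope, $\oplus$ over sub-functions of disjoint scopes); $\max$ is not an admissible aggregator, and the paper states explicitly that decomposing \gc{W\_Max} directly from Definition~\ref{def:maxweight} \emph{does not} lead to a tractable-safe DAG-decomposition. So your recursion $\smallcf_{S_i}(\ell) = \max\{\smallcf_{S_{i-1}}(\ell[S_{i-1}]), g_i(\ell[x_i])\}$ is not a DAG-decomposition in the required sense, and conditioning the conclusion on ``assuming $\max$ is admissible'' leaves the entire content of the theorem unproven. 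The actual work in the paper's proof is precisely the elimination of $\max$: sort the $O(nd)$ outputs of $f$ into a non-decreasing sequence $A = [\alpha_0, \ldots, \alpha_k]$, introduce unary functions $H_i^v$ (which pin $x_i$ to $v$ and charge $f(x_i,v)$) and $G_j^{\alpha_k}$ (cost $0$ on values $u$ with $f(x_j,u) \leq \alpha_k$, $\top$ otherwise, built recursively via the $F_j^{\alpha_k}$), and rewrite
\[
\mbox{\gc{W\_Max}}(S,f)(\ell) = \min_{\alpha_k = f(x_i,v)} \{H_i^v(\ell[x_i]) \oplus \bigoplus_{x_j \in S \setminus \{x_i\}} G_j^{\alpha_k}(\ell[x_j])\},
\]
a min-of-sums in which every $\min$-branch has the full scope $S$ and every $\oplus$ combines disjoint unary scopes, giving a DAG with $O(n^3d^2)$ vertices (not $O(n)$). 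This ``guess the maximum component'' conversion is the key idea your proposal is missing.

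Your treatment of \gc{W\_Min} has the same flaw in a subtler form: $\smallcf_{S_i} = \min\{\smallcf_{S_{i-1}}, g_i\}$ applies $\min$ to children with scopes $S_{i-1}$ and $\{x_i\}$, violating the same-scope condition under which Theorems 6 and 7 of \CiteInLine{All15} guarantee safe decomposability --- a projection on $x_i$ cannot be pushed into the branch $\smallcf_{S_{i-1}}$, since $\min\{a,b\} - \alpha \neq \min\{a, b-\alpha\}$ in general. \gc{W\_Min} does admit an easy repair (e.g., $\min_i$ over branches $g_i \oplus \bigoplus_{j \neq i} Z_j$, padding with zero-cost unary functions so every branch has scope $S$), but \gc{W\_Max} does not: there the sorted enumeration of candidate maximum values is unavoidable, and it is what this theorem is really about.
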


\begin{proof}
  Decomposing \gc{W\_Max}$(S,f)$ and \gc{W\_Min}$(S,f)$ directly using  Definition \ref{def:maxweight} does not lead to a tractable-safe DAG-decomposition. We give a  polynomial DAG-decomposition of \gc{W\_Max}$(S,f)$ only, since that of
  \gc{W\_Min}$(S,f)$ is similar. For the ease of explanation, we arrange all possible outputs of $f$ in a non-decreasing sequence
	$A =[\alpha_0, \alpha_1, \ldots, \alpha_k]$, where $\alpha_i \leq \alpha_j$ iff $i \leq j$. The value $\alpha_0 = 0$ is added into the sequence as a base case.

	We define three familys of unary cost functions $\{H_i^u\}$, $\{G_j^{\alpha}\}$ and $\{F_j^\alpha\}$. Cost functions $\{H_i^u \mid x_i \in S \wedge u 
    \in D(x_i)\}$ are unary functions on $x_i \in S$ defined as \begin{equation*}	 
        H_i^u(v) = \left\{\begin{array}{ll}
            f(x_i,v), & \mbox{ if $v=u$} \\
            \top, & \mbox{ if $v \neq u$}
        \end{array}\right.
    \end{equation*}
    \noindent
    
    The unary cost functions $F_j^{\alpha_{k-1}}$ are unary cost functions on $x_j\in S$ defined as:   \begin{equation*}	 
        F_j^{\alpha_k}(u) = \left\{\begin{array}{ll}
            0, & \mbox{ if $\alpha_k = f(x_j,u)$} \\
            \top, & \mbox{ otherwise}
        \end{array}\right.
    \end{equation*}

	Cost functions $\{G_j^{\alpha_k} \mid x_j \in S \wedge \alpha_k \in A\}$  are unary functions on $x_j \in S$, defined recursively as:    \begin{eqnarray*}
    	G_j^{\alpha_0}(u) &=& \top \\
      G_j^{\alpha_k}(u) &=& \left\{\begin{array}{ll}
      		  \top, & \mbox{ if  $\alpha_k > f(x_j,u)$ and $\forall v, f(x_i,v)\neq \alpha_k$} \\
            G_j^{\alpha_{k-1}}(u), & \mbox{ if  $\alpha_k  \leq f(x_j,u)$ and $\forall v, f(x_i,v)\neq \alpha_k$} \\
            \min\{ G_j^{\alpha_{k-1}}(u), F_j^{\alpha_k}(u) \}& \mbox{ if  $\alpha_k \leq f(x_j,u)$ and $\exists v, f(x_i,v) = \alpha_k$} \\
        \end{array}\right. 
    \end{eqnarray*}    
    
	They give a DAG-decomposition for \gc{W\_Max} as follows:	\begin{equation}
	\label{eqn:max}
			\mbox{\gc{W\_Max}}(S,f)(\ell) = \displaystyle\min_{\alpha_k \in A \wedge \alpha_k = f(x_i,v)} \{H_i^v(\ell[x_i]) \oplus 
										  \displaystyle\bigoplus_{x_j \in S \setminus \{x_i\}}G_j^{\alpha_k}(\ell[x_j])\}
    \end{equation}
	\noindent
    $H_i^v$ represents the choice of the maximum cost component in the  tuple, while $G_j^{\alpha}$ represents the  choice of each component other than the one with the  maximum weight.
    
    \begin{figure}[htp]
	\centering
	\psfrag{wmax}[cc][cc]{\tiny{$W\_Max$}}
	\psfrag{Hd2}[cc][cc]{\small{$H^d_2$}}
  \psfrag{Hb2}[cc][cc]{\small{$H^b_2$}}
  \psfrag{Ha1}[cc][cc]{\small{$H^a_1$}}
  
  \psfrag{G2d1}[cc][cc]{\small{$G^{\alpha_6}_1$}}
  \psfrag{G2d2}[cc][cc]{\small{$G^{\alpha_6}_2$}}
  \psfrag{G2d3}[cc][cc]{\small{$G^{\alpha_6}_3$}}
  \psfrag{F2d1}[cc][cc]{\small{$F^{\alpha_6}_1$}}
  \psfrag{F2d2}[cc][cc]{\small{$F^{\alpha_6}_2$}}
  \psfrag{F2d3}[cc][cc]{\small{$F^{\alpha_6}_3$}}
  
  \psfrag{G2b1}[cc][cc]{\small{$G^{\alpha_2}_1$}}
  \psfrag{G2b2}[cc][cc]{\small{$G^{\alpha_2}_2$}}
  \psfrag{G2b3}[cc][cc]{\small{$G^{\alpha_2}_3$}}
  \psfrag{F2b1}[cc][cc]{\small{$F^{\alpha_2}_1$}}
  \psfrag{F2b2}[cc][cc]{\small{$F^{\alpha_2}_2$}}
  \psfrag{F2b3}[cc][cc]{\small{$F^{\alpha_2}_3$}}
  
  \psfrag{G1a1}[cc][cc]{\small{$G^{\alpha_1}_1$}}
  \psfrag{G1a2}[cc][cc]{\small{$G^{\alpha_1}_2$}}
  \psfrag{G1a3}[cc][cc]{\small{$G^{\alpha_1}_3$}}
  \psfrag{F1a1}[cc][cc]{\small{$F^{\alpha_1}_1$}}
  \psfrag{F1a2}[cc][cc]{\small{$F^{\alpha_1}_2$}}
  \psfrag{F1a3}[cc][cc]{\small{$F^{\alpha_1}_3$}}
  
  \psfrag{Ga01}[cc][cc]{\small{$G^{\alpha_0}_1$}}
  \psfrag{Ga02}[cc][cc]{\small{$G^{\alpha_0}_2$}}
  \psfrag{Ga03}[cc][cc]{\small{$G^{\alpha_0}_3$}}

	\includegraphics[width=0.7\textwidth]{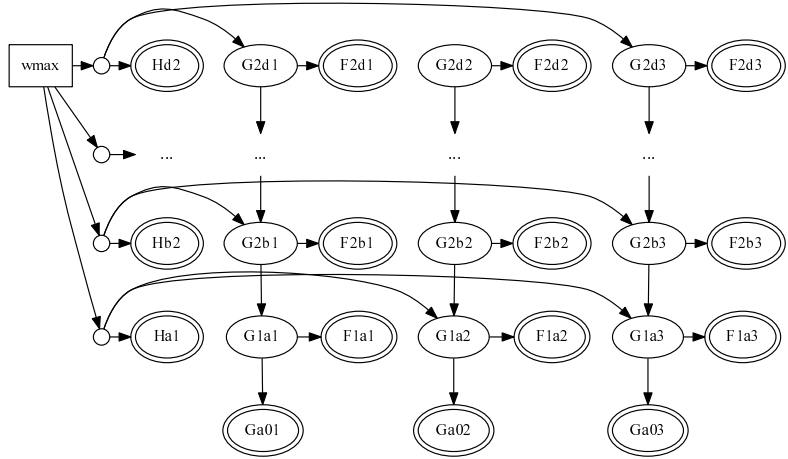}
	\caption{The DAG corresponding to \gc{W\_Max} \label{maxDAG} }
	\end{figure} 
        
    The corresponding DAG $(V,E)$ of the decomposition as shown in Figure \ref{maxDAG} based on Example \ref{wmaxex1}. The notation is the   same as Figure \ref{amongDAG}. The DAG contains $|V|$ vertices, where $|V| = O(nd \cdot n^2d) = O(n^3d^2)$.     
		By Theorem 6 and 7 by \CiteInLine{All15}, the decomposition is safely DAG-decomposition. Moreover, the leaves are unary cost
		functions. The DAG-decomposition is polynomial and, by Theorem 9 by \CiteInLine{All15}, the result follows.					
    \end{proof}

\begin{example}
  \label{wmaxex2}
  Following Example \ref{wmaxex1}, the sequence $A$ is defined as:
	\begin{eqnarray*}
		A &=& [\alpha_0, \alpha_1,\alpha_2,\alpha_3,\alpha_4,\alpha_5, \alpha_6] \\
			&=& [0, f(x_1,1),f(x_2,2),f(x_3,2),f(x_1,3),f(x_3,3), f(x_2,4)] \\
		  &=& [0, 3,6,6,9,9,12]
	\end{eqnarray*}
	\noindent
	The DAG-decomposition for \gc{W\_Max} can be represented as:	
	\[
	\mbox{\gc{W\_Max}}(S,c)(\ell) = \min \left\{ \begin{array}{l} 		
		H_2^4(\ell[x_2]) \oplus G^{\alpha_6}_1(\ell[x_1]) \oplus G^{\alpha_6}_3(\ell[x_3]), \\		
		H_3^3(\ell[x_3]) \oplus G^{\alpha_5}_1(\ell[x_1]) \oplus G^{\alpha_5}_2(\ell[x_2]), \\
		H_1^3(\ell[x_1]) \oplus G^{\alpha_4}_2(\ell[x_2]) \oplus G^{\alpha_4}_3(\ell[x_3]), \\
		H_3^2(\ell[x_3]) \oplus G^{\alpha_3}_1(\ell[x_1]) \oplus G^{\alpha_3}_2(\ell[x_2]), \\
		H_2^2(\ell[x_2]) \oplus G^{\alpha_2}_1(\ell[x_1]) \oplus G^{\alpha_2}_3(\ell[x_3]), \\
		H_1^1(\ell[x_1]) \oplus G^{\alpha_1}_2(\ell[x_2]) \oplus G^{\alpha_1}_3(\ell[x_3]), \\
		\end{array} \right\}
	\]	
	Assume $\ell = (1,2,3)$. We first compute the values of $\{H_i^u\}$ and $\{G^\alpha_j\}$, 
	incrementally starting from $\alpha_0$. The results are shown in Table \ref{tab:values}. 
	\begin{table}[htb]
    \caption{Computing the values of $\{H_i^u\}$ and $\{G^\alpha_j\}$}
    \label{tab:values}
	\begin{center}
    \begin{tabular}{|c|c|c|c|c|}
    		\hline
         $\alpha_j$ & $H_i^u$ & $G^{\alpha_j}_1$ & $G^{\alpha_j}_2$ & $G^{\alpha_j}_3$\\  
         \hline
         $\alpha_0               $ & $-$ & $\top$ & $\top$ & $\top$\\                    
         \hline
         $\alpha_1 = f(x_1,1) = 3$ & $3$ & $0$ & $\top$ & $\top$\\            
         \hline
         $\alpha_2 = f(x_2,2) = 6$ & $6$ & $0$ & $0$ & $\top$\\            
         \hline
         $\alpha_3 = f(x_3,2) = 6$ & $\top$ & $0$ & $0$ & $0$\\            
         \hline
         $\alpha_4 = f(x_1,3) = 9$ & $\top$ & $0$ & $0$ & $0$\\            
         \hline
         $\alpha_5 = f(x_3,3) = 9$ & $9$ & $0$ & $0$ & $0$\\            
         \hline
         $\alpha_6 = f(x_2,4) = 12$ & $\top$ & $0$ & $0$ & $0$\\            
        \hline        
    \end{tabular}
\end{center}
\end{table}
	
	\noindent
	\gc{W\_Max}($S$,$c$)($\ell$) can be computed using Table \ref{tab:values}, which gives the cost $9$.	
	\[
	\mbox{\gc{W\_Max}}(S,c)(\ell) = \min \left\{ \begin{array}{l} 		
		\top \oplus 0 \oplus 0, \\		
		\top  \oplus 0 \oplus 0, \\		
		9  \oplus 0 \oplus 0, \\
		\top  \oplus 0 \oplus 0, \\
		6  \oplus 0 \oplus \top, \\
		3  \oplus \top \oplus \top, \\		
		\end{array} \right\} = 9
	\]
\end{example}

Function \opSty{WMaxMin} in Algorithm \ref{algo:maximum} computes the minimum of a \gc{W\_Max}$(S,c)$ cost function, based on Equation \ref{eqn:max}. The one for \gc{W\_Min}$(S,c)$ is similar. The for-loop at line \ref{maxL3} tried every possible variable-value pair $(x_i,a)$ in the non-decreasing order of $f(x_i,a)$. At each iteration, it first computes the minimum among all tuple $\ell$ which $\ell[x_i] = v$ and it is the maximum weighted component in the tuple in line \ref{maxL5}, and update the global minimum in line \ref{maxL6}. The variables $\{g[x_i]\}$ is then updated in line \ref{maxL4}. They store the current minimum of $\{G_i^\alpha\}$, which is used for compute the minimum among tuples with $\ell[x_i]$ is not the maximum weighted component.

\begin{algorithm}
  \caption{Finding the minimum of \gc{W\_Max}}
  \label{algo:maximum}
  
  \SetKwBlock{Start}{}{}
  \SetKw{Func}{Function}
  \SetKw{Proc}{Procedure}    
  
  \Func{} \opSty{WMaxMin}($S,f$)
  \Start{    
    \lnl{maxL1}
    \lFor{$i := 1$ \KwTo $n$}{
      $g[x_i] := \top$ \;        
    }    	
    \lnl{maxL2}
    $\ArgSty{curMin} := \top$ \;
    \lnl{maxL2.1}
    $A := \{(x_i,v) \mid x_i \in S \wedge v \in D(x_i)\}$\;
    \lnl{maxL2.2}
    sort $A$ in the nondecreasing order of $f(x_i,v)$\;
    \lnl{maxL3}
    \ForEach{$(x_i,v)$ according to the sorted list $A$}{
      \lnl{maxL3.5}
      $\alpha := f(x_i,v)$\;    		
      \lnl{maxL5}
      $\ArgSty{curCost} := H^{v}_i(v) \oplus \bigoplus_{j = 1 \ldots n, j \neq i}g[x_j] $\;
      \lnl{maxL6}
      $\ArgSty{curMin} := \min\{\ArgSty{curMin}, \ArgSty{curCost}\}$\;        	        	
      \lnl{maxL4}      	        	      	        	      	  
      $g[x_i] := \min\{g[x_i], G^{\alpha}_i(v)\}$\;       	
    } 
    \lnl{maxL7}   	
    \Return{$\ArgSty{curMin}$} \;
  }
\end{algorithm}	
	
The time complexity is given by the theorem below.
\begin{theorem}\label{thm:maxmintime}
  Function \opSty{WMaxMin} in Algorithm \ref{algo:maximum} computes  the minimum of \mbox{\gc{W\_Max}}$(S,f)$, and it requires  $O(nd\cdot\log(nd))$, where $n$ and $d$ are defined in Theorem  \ref{amongMinTime}.
\end{theorem}

\begin{proof}
  \noindent
  Line \ref{maxL2.1} takes $O(nd \cdot \log(nd))$ to sort. The  for-loop at line \ref{maxL3} iterates $nd$ times. All operations in
  the iteration requires $O(1)$ except line \ref{maxL5}. As it is,  line \ref{maxL5} requires $O(n)$. By using special data structure
  like segment trees \cite{JL1977}, the time complexity can be reduced  to $O(\log(n))$.  The overall complexity becomes $O(nd \cdot \log(nd) +
  nd \cdot \log(n)) = O(nd \cdot \log(nd))$.
\end{proof}

\begin{corollary}
  \label{minmaxTime}
  Given $\cf_S = \mbox{\gc{W\_Max}}(S,f)$. 
	The function  \opSty{findSupport}$()$ in Algorithm 3 from \CiteInLine{All15} requires $O(nd^2\cdot\log(nd))$, where $n$ and $d$ is defined in Theorem  \ref{amongMinTime}.
\end{corollary}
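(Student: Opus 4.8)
The plan is to mirror the two preceding corollaries (Corollaries \ref{amongTime} and \ref{regularTime}) and obtain the bound as an immediate consequence of Property 2 of \CiteInLine{All15} together with the minimization result just established in Theorem \ref{thm:maxmintime}. Property 2 bounds the cost of \opSty{findSupport}$()$ --- the routine that enforces GAC* on a single cost function --- in terms of the cost of computing the minimum of that cost function: establishing a support reduces to computing the min-marginals $\min\{\cf_S(\ell) \mid \ell[x_i] = v\}$ over all variable-value pairs, and for the functions treated here this inflates the single-minimum cost by a factor of $d$. I would therefore simply substitute the bound $O(nd\log(nd))$ for computing $\min \mbox{\gc{W\_Max}}(S,f)$ into Property 2 and read off $O(d) \cdot O(nd\log(nd)) = O(nd^2\log(nd))$.

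Concretely, the steps are: (i) recall that Theorem \ref{thm:maxmintime} gives $O(nd\log(nd))$ for \opSty{WMaxMin}; (ii) invoke Property 2 of \CiteInLine{All15} to relate \opSty{findSupport}$()$ to the repeated minimizations needed to find supports; and (iii) multiply through by the $O(d)$ overhead and simplify, exactly as was done for \gc{W\_Among}$^{var}$ and \gc{W\_Regular}$^{var}$, where the same factor-of-$d$ relationship between the single-minimum bound and the \opSty{findSupport}$()$ bound already appears.

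The subtle point, and the step I would check most carefully, is that Property 2 requires computing the minimum of \gc{W\_Max}$(S,f)$ \emph{subject to a fixed assignment} $x_i = v$, not merely the unconstrained global minimum. I would therefore verify that \opSty{WMaxMin} (or a straightforward variant of it) still runs in $O(nd\log(nd))$ when one variable's value is pinned, and in particular that the segment-tree acceleration responsible for the $\log$ factor in Theorem \ref{thm:maxmintime} continues to support the constrained queries that Property 2 needs. Were the constrained minimization more expensive than the unconstrained one, the final bound would inflate accordingly; so this is the only nonroutine check. Once it is confirmed, the factor-of-$d$ overhead from Property 2 yields the claimed $O(nd^2\log(nd))$, and no further calculation is required.
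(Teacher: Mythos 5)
Your proposal matches the paper's proof exactly: the paper's argument is precisely the one-line invocation of Property 2 from \CiteInLine{All15} together with Theorem \ref{thm:maxmintime}, yielding the factor-of-$d$ inflation $O(d)\cdot O(nd\log(nd)) = O(nd^2\log(nd))$, just as in Corollaries \ref{amongTime} and \ref{regularTime}. Your additional check --- that minimization with one variable-value pair pinned retains the $O(nd\log(nd))$ bound --- is a sensible point of care that the paper leaves implicit, but it does not change the route.
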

\begin{proof}
	Follow directly from Property 2 in \CiteInLine{All15} and Theorem  \ref{thm:maxmintime}.
\end{proof}


\ignore{
In this section, we have seen how a variety of global cost
functions can be decomposed in a DAG structure that allows for
efficient minimization and therefore usual soft local consistency
enforcing. However, each newly implemented cost function requires to buid a corresponding DAG structure with a dedicated dynamic
programming algorithm.

In the next section, we show that, in some cases, it is also possible to
avoid this by directly decomposing a global cost functions into a WCSP in such a way that local consistency enforcing will emulate dynamic programming, avoiding the need for programming dedicated enforcing algorithms.
}

\ignore{

\begin{table}[htb]
  \caption{Illustration of Algorithm \ref{algo:maximum} by Example \ref{wmaxex1}}
  \label{tab:maxEx}
  \begin{center}
    \begin{tabular}{|c|c|c|c|c|c|c|c|c|}
      \hline
      Iteration & $\alpha$ & $g[x_1]$ & $g[x_2]$ & $g[x_3]$ & curCost & curMin & Category\\
      \hline
      (initial) & - & $\top$ & $\top$ & $\top$ & $\top$ & $\top$ & - \\            
      \hline
      $1$ & $ 3 (= f(x_1,a))$ & $0$ & $\top$ & $\top$ & $\top$ & $\top$ & $\kappa_3$\\            
      \hline
      $2$  & $ 6 (= f(x_2,b))$ & $0$ & $0$ & $\top$ & $\top$ & $\top$ & \multirow{2}{*}{$\kappa_6$}\\            
      \cline{1-7}
      $3$ & $ 6 (=f(x_3,b))$ & $0$ & $0$ & $0$ & $6$ & $6$ & \\            
      \hline
      $4$ & $ 9 (=f(x_1,c))$ & $0$ & $0$ & $0$ & $9$ & $6$ & \multirow{2}{*}{$\kappa_9$} \\            
      \cline{1-7}
      $5$ & $9 (=f(x_3,c))$ & $0$ & $0$ & $0$ & $9$ & $6$ & \\            
      \hline
      $6$ & $12 (=f(x_2,d))$ & $0$ & $0$ & $0$ & $12$ & $6$ & $\kappa_{12}$\\            
      \hline
    \end{tabular}
  \end{center}
\end{table}

In the following, we reuse Example \ref{wmaxex1} to illustrate the
algorithm. Table \ref{tab:maxEx} shows the values of each variables in
Algorithm \ref{algo:maximum} after each iteration. At the beginning of
Algorithm \ref{algo:maximum}, $g[x_1]$, $g[x_2]$, $g[x_3]$,
$\ArgSty{curCost}$, and $\ArgSty{curMin}$ are all initialized to
$\top$.  At the $1^{st}$ iteration, the variable-value pair $(x_1,a)$,
which $f(x_1,a) = 3$, is taken out for processing.  The variable
$\ArgSty{curCost}$ is computed in Line \ref{maxL5} by $H^a_1(a) \oplus
g[x_2] \oplus g[x_3]$, which gives $\top$. The result implies that
there does not exists any tuples such that $(x_1,a)$ is the maximum
weighted component.  The minimum is not yet discovered, and
$\ArgSty{curMin}$ is not updated. However, $g[x_1]$ is still updated
for further processing. With similar reasoning, $\ArgSty{curMin}$ is
still not updated after the $2^{nd}$ iteration. At the $3^{rd}$
iteration, $(x_3,b)$ is taken out for processing. The value of
$\ArgSty{curCost}$ is $6$ after computing line \ref{maxL5}, implying
that the cost function gives a minimum of $6$ if $x_3$ is assigned to
$b$, and $(x_3,b)$ is the maximum weighted component. Thus,
$\ArgSty{curMin}$ is updated to $6$. The variable $g[x_3]$ is updated
afterwards. The iteration continues until all variable-value pairs are
processed.

To proof the correctness of Algorithm \ref{algo:maximum}, we first
group the results of the iterations into categories
$\kappa_{\alpha_k}$ according to the cost $\alpha_k$, where $\alpha_j
\geq \alpha_i$ iff $j \geq i$. Two sets of results from the $i^{th}$
and $j^{th}$ iterations are grouped into the same category, \ie $i \in
\kappa_{\alpha_k}$ and $j \in \kappa_{\alpha_k}$ iff they have the
same value of $\alpha$ equal to $\alpha_k$. For examples, in Example
$\ref{wmaxex1}$ can by grouped into four categories ,as shown in Table
\ref{tab:maxEx}. In the following, we show that the value of
$\ArgSty{curCost}$ at the last iteration in each category gives the
``local minimum'' within the category.

For each category $\alpha_k$, we define the set $Q_{\alpha_k}$ to be
the variable-value pairs processed within the category. For examples,
in Example \ref{tab:maxEx}, $Q_{\alpha_k}$ is shown as below.
\begin{itemize}
\item{} $Q_3 = \{(x_1,a)\}$;
\item{} $Q_6 = \{(x_2,b), (x_3,b)\}$;
\item{} $Q_9 = \{(x_1,c), (x_3,c)\}$;
\item{} $Q_{12} = \{(x_2,d)\}$;
\end{itemize}
\noindent
We also define $\ArgSty{curCost}_i$ to be the value of
$\ArgSty{curCost}$ at the $i^{th}$ iteration.

\begin{lemma}
  \label{lem:localmin}
  Within the category $\kappa_{\alpha_k}$, at the $K^{th}$ iteration, where $K = \max\{i \mid i \in \kappa_{\alpha_k}\}$:
  \[
  \ArgSty{curCost}_K = \left\{ 
    \begin{array}{ll}
      \min_{(x_i,v) \in Q_{\alpha_k}}& \{\gc{W\_Max}(S,f)(\ell) \mid \ell \in \L(S) \wedge \\
      & \hspace{1in} \ell[x_i] = v \wedge \max_{x_j \in S}\{f(x_j,\ell[x_j])\} = \alpha_k\} \\
      \top, & \mbox{ if not such tuple $\ell$ exists} \\
    \end{array}
  \right.					
  \]
\end{lemma}
\begin{proof}
  Assume there do not exist any tuples $\ell$ such that $\max_{x_j \in S}\{f(x_j,\ell[x_j])\} = \alpha_k$. 
  There must exist a variable $y$ such that $f(y,u) > \alpha_k$ for every $u \in D(y)$. As observed from the algorithm,  
  $g[y]$ is not yet updated, \ie $g[y] = \top$. Thus, $\ArgSty{curCost}_K = \top$. 
  
  \noindent
  Assume such tuple exists. It is suffice to show that at the $K^{th}$ iteration, 
  $g[x_j] = \min\{G^{\alpha_k}_j\}$ for each $j \neq i$. By the definition of 
  $\{G^\alpha_j\}$, $G^\beta_j(v) = G^\alpha_j(v)$ iff $\beta \leq \alpha$. 
  As observed from the algorithm, after each iteration:
  \begin{eqnarray*}
    g[x_j] &=& \min\{G^\beta_j(v) \mid \beta = f(x_j,v) \leq \alpha_k\} \\
    &=& \min\{G^{\alpha_k}_j(v) \mid f(x_j,v) \leq \alpha_k\} \\
    &=& \min\{G^{\alpha_k}_j\}
  \end{eqnarray*}
  By the definition of $\{H^u_i\}$, within the category
  $\kappa_{\alpha_k}$, $\min\{H^u_i\} = \alpha_k$.  Result follows by
  the safe decomposability of $\gc{W\_Max}(S,f)$.
\end{proof}
}

\section{Conclusion}

In this manuscript, we have shown that \gc{W\_Among}$^{var}$,
\gc{W\_Regular}$^{var}$ ,\gc{W\_Max} and \gc{W\_Min}, are polynomially
DAG-decomposable. We also give the respective polytime dynamic programming
based algorithms to compute the minimum of this class of global cost
functions.